\documentclass[letterpaper]{article}
\usepackage{graphicx} \usepackage{tikz-cd}
\usepackage{amssymb}
\usepackage{dsfont}
\usepackage{mleftright}
\usepackage{ntheorem}
\usepackage{mathtools}
\usepackage[colorlinks=true, allcolors=blue]{hyperref}
\usepackage{algorithm, algpseudocodex}
\usepackage{cleveref}
\usepackage{natbib}
\usepackage{multirow}
\usepackage{amsmath,amsfonts,graphicx}
\usepackage{xfrac}
\usepackage{natbib}
\usepackage{dsfont}
\usepackage{footmisc}
\setlength{\oddsidemargin}{0.25 in}
\setlength{\evensidemargin}{-0.25 in}
\setlength{\topmargin}{-0.6 in}
\setlength{\textwidth}{6 in}
\setlength{\textheight}{8.5 in}
\setlength{\headsep}{0.75 in}
\setlength{\parindent}{0 in}
\setlength{\parskip}{0.1 in}

\usepackage{makecell}
\usepackage{multirow} 
\usepackage{enumitem}

\newtheorem{theorem}{Theorem}[section]
\newtheorem{lemma}[theorem]{Lemma}

\newtheorem{corollary}[theorem]{Corollary}
\newtheorem{definition}[theorem]{Definition}

\newtheorem*{remark}{Remark}
\newenvironment{proof}{{\bf Proof:}}{\hfill\rule{2mm}{2mm}}
\newenvironment{proofsketch}{{\bf Proof sketch:}}{\hfill\rule{2mm}{2mm}}

\newcommand\E[1]{\mathbb{E}\squary{#1}}
\newcommand{\bbE}{\mathbb{E}}

\newcommand{\A}{\mathcal{A}}

\newcommand{\rr}{\mathbb{R}}

\newcommand{\calF}{\mathcal{F}}
\newcommand{\bbP}{\mathbb{P}}

\newcommand\inprod[1]{\left\langle #1 \right\rangle}
\newcommand{\series}[1]{\inprod{#1}}
\newcommand\indicator[1]{\mathds{1}\left[ {#1} \right]}
\newcommand{\curly}[1]{ {\left\{ #1 \right\}}}
\newcommand{\roundy}[1]{ {\left( #1 \right)}}
\newcommand{\squary}[1]{ {\left[ #1 \right]}}
\newcommand{\abs}[1]{ {\left | #1 \right |}}
\newcommand{\telta}{{\tilde{\Delta}}}
\newcommand{\tmu}{\tilde{\mu}}

\newcommand{\logp}[1]{\log\roundy{#1}}
\newcommand{\lcb}[2]{\text{lcb}_{#1}\roundy{#2}}
\newcommand{\ucb}[2]{\text{ucb}_{#1}\roundy{#2}}
\newcommand{\sucb}[1]{\textnormal{ucb}^*\roundy{#1}}
\newcommand{\olcb}[2]{\overline{\text{lcb}}_{#1}\roundy{#2}}
\newcommand{\oucb}[2]{\overline{\text{ucb}}_{#1}\roundy{#2}}
\newcommand{\width}[2]{\text{width}_{#1}\roundy{#2}}
\newcommand{\owidth}[1]{\overline{\text{width}}\roundy{#1}}
\newcommand{\omu}[2]{\overline{\mu}_{#1}\roundy{#2}}
\newcommand{\Nmax}[2]{{N_{#1}^{#2}}}

\newcommand{\ceil}[1]{\left \lceil #1 \right\rceil}

\makeatletter
\newcounter{algorithmicH}
\let\oldalgorithmic\algorithmic
\renewcommand{\algorithmic}{%
  \stepcounter{algorithmicH}
  \oldalgorithmic}
\renewcommand{\theHALG@line}{ALG@line.\thealgorithmicH.\arabic{ALG@line}}
\makeatother

\newcommand{\mail}[1]{\href{mailto:#1}{\color{blue} #1}}
\newcommand{\blackfootnote}[1]{%
  \begingroup
  \hypersetup{allcolors=black}%
  \footnote{#1}%
  \endgroup
}

\author{Ofir Schlisselberg\blackfootnote{Tel Aviv University, \mail{ofirs4@mail.tau.ac.il}} \and
Tal Lancewicki\blackfootnote{Tel Aviv University, \mail{lancewicki@mail.tau.ac.il}} \and
Peter Auer\blackfootnote{Technical University of Leoben, \mail{auer@unileoben.ac.at}} \and
Yishay Mansour\blackfootnote{Tel Aviv University and Google Research, \mail{mansour.yishay@gmail.com}}}

\title{Improved Best-of-Both-Worlds Regret for Bandits with Delayed Feedback}

\begin{document}

\maketitle

\begin{abstract}
We study the multi-armed bandit problem with adversarially chosen delays in the Best-of-Both-Worlds (BoBW) framework, which aims to achieve near-optimal performance in both stochastic and adversarial environments. While prior work has made progress toward this goal, existing algorithms suffer from significant gaps to the known lower bounds, especially in the stochastic settings. Our main contribution is a new algorithm that, up to logarithmic factors, matches the known lower bounds in each setting individually. 

In the adversarial case, our algorithm achieves regret of $\widetilde{O}(\sqrt{KT} + \sqrt{D})$, which is optimal up to logarithmic terms, where $T$ is the number of rounds, $K$ is the number of arms, and $D$ is the cumulative delay. In the stochastic case, we provide a regret bound which scale as $\sum_{i:\Delta_i>0}\roundy{\logp T/\Delta_i} + \frac{1}{K}\sum \Delta_i \sigma_{max}$, where $\Delta_i$ is the sub-optimality gap of arm $i$ and $\sigma_{\max}$ is the maximum number of missing observations. 

To the best of our knowledge, this is the first \textit{BoBW} algorithm to simultaneously match the lower bounds in both stochastic and adversarial regimes in delayed environment. Moreover, even beyond the BoBW setting, our stochastic regret bound is the first to match the known lower bound under adversarial delays, improving the second term over the best known result by a factor of $K$.
        \end{abstract}

\section{Introduction}
Delayed feedback presents a significant challenge that sequential decision-making algorithms encounter in many real-world applications. Notably, delays are often an inherent part of environments involving sequential decision-making, such as in healthcare, finance, and recommendation systems. As a central challenge in Online Learning, delays have been extensively explored in various contexts within Multi-armed Bandits (MAB), both in stochastic settings, where losses are generated i.i.d. from a fixed underlying distribution  \citep{joulani2013online,pike2018bandits,zhou2019learning,vernade2017stochastic,vernade2020linear,cesa2018nonstochastic,manegueu2020stochastic,lancewicki2021stochastic,wu2022thompson,howson2022delayed,tang2024stochastic,Schlisselberg_Cohen_Lancewicki_Mansour_2025} and adversarial settings, where the losses are chosen arbitrarily by an adversary \citep{quanrud2015online,cesa2016delay,thune2019nonstochastic,bistritz2019online,pmlr-v108-zimmert20a,ito2020delay,gyorgy2020adapting,van2021nonstochastic,van2023unified}. 

Roughly speaking, under stochastic losses, delays contribute an additive regret term that does not scale with the time horizon (but with the number of missing observations), whereas under the adversarial losses, delays introduce an additive term that does scale with the horizon. More specifically, for an arbitrary sequence of delays, the best-known regret under stochastic losses is $\sum_{\Delta_i > 0} \frac{\logp T}{\Delta_i} + K\sigma_{max}$ (\citet{joulani2013online}) where $T$ is the number of rounds, $K$ is the number of arms, $\Delta_i$ is the sub-optimality gap of arm $i$ and $\sigma_{max}$ is the maximal number of missing observations. Under adversarial losses, the optimal bound is of the order $\sqrt{TK} + \sqrt{D}$ (\citet{thune2019nonstochastic,bistritz2019online}), where $D$ is the sum of the delays.

While the regret bounds of delayed Multi-armed Bandit under stochastic losses and under adversarial losses are well understood separately, the following question remains open:

\begin{center} 
    \textbf{\textit{Is there a single algorithm that, without knowing the nature of the losses a-priori in delayed environment, can achieve the optimal regret bounds in both regimes simultaneously?}} 
\end{center}

Such an algorithm is often referred to as a \textit{best-of-both-worlds} algorithm. \citet{masoudian2022best, masoudian2024best} have made significant progress toward answering this question. Their regret bound is $O(\sqrt{TK} + \sqrt{D} + K\sigma_{max} + \Phi^*)$ in the adversarial regime and $O\roundy{\sum_{i:\Delta_i>0} \roundy{\frac{\logp T}{\Delta_i } + \frac{\sigma_{max}}{\logp K \Delta_i}} + \Phi^*}$ in the stochastic regime, where $\Phi^*=\min\{d_{max}K^{2/3},\sqrt{DK^{2/3}}\}$. However, these bounds are still not optimal.

\begin{table}[t]
    \caption{Comparison of regret bounds (up to constants) to the previous state-of-the-art regret both under stochastic and adversarial losses under adversarial delays.}
    \begin{center}
        \begin{tabular}[c]{l l l}
            \hline
            Algorithm & Regime & Regret
            \\
            \hline
            \citet{joulani2013online} & stochastic & $\sum_{\Delta_i > 0} (\frac{\logp T}{\Delta_i} + \sigma_{max} \Delta_i$)
            \\
            \hline
            \makecell[l]{\citet{thune2019nonstochastic}\\ \citet{bistritz2019online}\\ \citet{pmlr-v108-zimmert20a}\footnotemark{}}& adversarial & $\sqrt{TK} + \sqrt{D}$
            \\
            \hline
            \multirow{3}{*}{\citet{masoudian2024best}\footnotemark[\value{footnote}]} & stochastic & $\sum_{i:\Delta_i > 0} (\frac{\logp T}{\Delta_i} + \frac{\sigma_{max}}{\Delta_i \logp K})+\Phi^*$
            \\
                                & adversarial & $\sqrt{TK} + \sqrt{D}+\Phi^*+K\sigma_{\max}$
            \\
            &&$\Phi^* = \min\curly{d_{max}K^{2/3},\sqrt{DK^{2/3}}}$
            \\
            \hline
            \multirow{2}{*}{Our paper\footnotemark[\value{footnote}]} & stochastic & $\sum_{i:\Delta_i > 0} (\frac{\logp T}{\Delta_i} + \sigma_{max}\frac{\Delta_i}{K})$
            \\
                                & adversarial & $\sqrt{T K \logp 
                                T} + \sqrt{D}$
                                \\
            \hline
            \underline{Lower Bound}\\ 
            \citet{lancewicki2021stochastic} (constant delay)& stochastic & $\sum_{i:\Delta_i > 0} (\frac{\logp T}{\Delta_i} + \sigma_{max}\frac{\Delta_i}{K})$ 
            \\
                            \citet{masoudian2022best}\footnotemark[\value{footnote}]    & adversarial & $\sqrt{T K} + \sqrt{D}$ 
                                \\
        \end{tabular}
        \label{table: comparison}
    \end{center}

\end{table}
\footnotetext{In these papers the $\sqrt{D}$ is actually $\min_{S\in[T]}\curly{\abs{S}+\sqrt{D_{\bar{S}}}}$, where $D_{\bar{S}}$ is the total delay of the steps not in $S$. We wrote the worst-case for the simplicity of the table.
}

\textbf{Our contributions.} In this work, we affirmatively answer the above question and present a new best-of-both-worlds algorithm for Multi-armed Bandits (MAB) with delayed feedback that simultaneously achieves the near-optimal regret bounds under both stochastic and adversarial losses. Specifically: 
\begin{itemize}[itemsep=0pt, parsep=0pt, topsep=0pt, partopsep=0pt,left=5pt]
    \item In the adversarial regime our algorithm guarantees optimal $\tilde O(\sqrt{TK} +\sqrt{D})$ regret.
    \item In the stochastic regime our algorithm guarantees optimal $O(\sum_{i\ne i^\star} (\frac{\logp T}{\Delta_i } + \frac{1}{K}{\sigma_{max}\Delta_i}))$ regret.
\end{itemize}
In the adversarial regime, compared to \citet{masoudian2024best} we have an extra logarithmic factor in the $\sqrt{TK}$ term, which is independent of the delay. However, we eliminate the additive $\Phi^*$ in their bound, which is significant when $d_{max}$ is very large; even a single large delay causes the regret to scale as $\sqrt{DK^{2/3}}$ rather than our $\sqrt{D}$ delay term, which is tight to the lower bound of \citet{masoudian2022best}.

Even more significantly, in the stochastic regime, our bound improves the $O(\sum_{i} \frac{\sigma_{max}}{\Delta_i \logp K} + \Phi^*)$ term from the bound of \citep{masoudian2024best} to $O(\frac{1}{K} \sum_{i} \sigma_{max} \Delta_i)$.  That is, for each term in the summation, we achieve an improvement by a factor of $\frac{K}{\Delta_i^2\logp K}$. This is a significant improvement. For example, consider the simple case of fixed delay $d$, which implies $\sigma_{max}=d$, and constant number of actions. 
For any sub-optimality gaps our regret is at most 
$\sqrt{T} + d$ 
while there is a setting where the regret of \citep{masoudian2024best} is at least
$ \sqrt{dT}$. Moreover, if the maximum delay is large, $\Phi^*$ can be as large as $\sqrt{D}$, offering no improvement over the additive delay term in the adversarial setting.

Our bound in the stochastic regime represents an improvement even compared to state-of-the-art results for algorithms specifically designed for the stochastic case. Specifically, \citet{joulani2013online} provides the best-known result for stochastic losses with adversarial delays where their bound includes an additive term of $\sum_{i\ne i^*} \sigma_{max} \Delta_i$, which we improve by a factor of $\Theta(K)$. While \citet{lancewicki2021stochastic} reduce this dependence on $K$,  their result applies only to the case of stochastic delays. Moreover, their regret bound scales with the maximal sub-optimality gap, rather than the average. For example, in the simple case of a fixed delay $d$, their additive term is $d \max_i \Delta_i$, whereas ours is $\frac{d}{K} \sum_i \Delta_i$, offering a strictly better dependence on the problem parameters in many scenarios.

\subsection{Additional Related work}
\textbf{Delayed MAB with stochastic losses.} The problem was first addressed  by \citet{dudik2011efficient}, who analyzed the case of constant delays and established a regret bound with linear dependence on the delay. This line of work was extended by \citet{joulani2013online}, who allowed the delays to change through time. Subsequent work introduced several important refinements: \citet{zhou2019learning} distinguished between arm-dependent and arm-independent delays; \citet{pike2018bandits} introduced an aggregated rewards model where only the sum of rewards that arrive at the same round is observed; and \citet{lancewicki2021stochastic} studied delays in the contexts of reward-dependent or reward-independent delays. More recently, \citet{tang2024stochastic}, \citet{Schlisselberg_Cohen_Lancewicki_Mansour_2025} and \citet{zhang2025contextual} studied settings in which the delay is equal to the payoff.

\textbf{Delayed MAB with adversarial losses.}
Delayed feedback have also been explored in adversarial settings, where both rewards and delays can be chosen adversarially. \citet{quanrud2015online} studied this problem in the full-information setting. The bandit setting was first addressed by \citet{cesa2019delay}, who analyzed the case of constant delay. This line of work was extended by \citet{thune2019nonstochastic} and \citet{bistritz2019online}, who considered general adversarial delays under the assumption that the delay is known at the time the arm is pulled. Subsequently, \citet{gyorgy2020adapting} and \citet{pmlr-v108-zimmert20a} removed this assumption and analyzed the case where the delay is unknown at the time of the action. Finally, \citet{van2021nonstochastic} extended the setting to allow for arm-dependent delays.

\textbf{``Best of Both Worlds" without delays.}
The "Best of Both Worlds" framework in multi-armed bandits was introduced by \citet{bubeck2012towards}, who proposed an algorithm that initially follows a stochastic-style strategy but switches to a standard adversarial algorithm upon detecting signs of adversarial losses. This adaptive approach was further developed by \citet{auer2016algorithmnearlyoptimalpseudoregret}. An alternative perspective is to start with an adversarial-style algorithm and prove that it achieves instance-dependent regret bounds in stochastic settings as well. In this direction, \citet{pmlr-v32-seldinb14} and \citet{pmlr-v65-seldin17a} adapted the EXP3 algorithm to perform well in both regimes, while \citet{pmlr-v89-zimmert19a,pmlr-v195-dann23a,pmlr-v247-ito24a} extended this idea to Follow-The-Regularized-Leader (FTRL), achieving optimal performance across both adversarial and stochastic settings.

\section{Settings}\label{sec:settings}

We study the Multi-armed Bandit (MAB) problem with delayed feedback, summarized in Protocol~\ref{protocol}. 
In each round $t=1,2,\ldots,T$, an agent chooses an arm $a_{t}\in\left[K\right]$
and suffers loss $\ell_{t}(a_t)$, where $\ell_t(\cdot) \in [0,1]^K$ can be either stochastic or adversarial. Under the stochastic regime for each $i \in [K]$, $\{\ell_t(i)\}_{t=1}^T \overset{i.i.d} {\sim} \mathcal{D}_i$ where $\mathcal{D}_i$ is some distribution with expectation $\mu_i$. Under the adversarial regime the loss sequence $\{\ell_t\}_{t=1}^T$ are chosen arbitrarily by an oblivious adversary. Unlike the standard MAB setting, the agent does not immediately observe $\ell_{t}(a_t)$ at the end of round $t$; rather,
only after $d_{t}$ rounds (namely, at the end of round $t+d_{t}$) the tuple $(t, \ell_t(a_t))$ is received as feedback. 
The delays $\{d_t\}_{t=1}^T$ are chosen by an oblivious adversary. 

 \begin{algorithm}[t]
    \floatname{algorithm}{Protocol}
    \caption{\label{protocol} Delayed MAB}
    \begin{algorithmic}[1]
        \For{$t\in \left[T\right]$}
        
            \State Agent picks an action $a_t \in [K]$.
            \State Agent incurs loss $\ell_t(a_t)$ and observes feedback $\left\{(\ell_s(a_s),d_s) : t = s + d_s \right\}$.
        \EndFor
    \end{algorithmic}
    \end{algorithm}

The performance of the agent is measured as usual by the 
the difference between the algorithm's cumulative expected loss and the best possible total expected reward of any fixed arm:
\begin{align*}
    \mathcal{R}_{T} 
    &= \mathbb{E}\left[\sum_{t=1}^{T}\ell_{t}(a_t)\right] - 
    \min_{i}\mathbb{E}\left[\sum_{t=1}^{T} \ell_{t}(i)\right]  .
\end{align*}
In the stochastic case the regret can also be written as,
\begin{align*}
    \mathcal{R}_{T} 
    & = \mathbb{E}\left[\sum_{t=1}^{T}\mu_{a_{t}}\right] - T\mu_{i^*}  = \mathbb{E}\left[\sum_{t=1}^{T}\Delta_{a_t}\right]
    ,
\end{align*}
 where $i^*$ denotes the optimal arm and $\Delta_{i} = \mu_{i} - \mu_{i^*}$ for all $i \in [K]$.

\textbf{Additional notation.} We denote the total delay by $D = \sum_{t=1}^T d_t$ and the maximal delay by $d_{max} = \max_{t\in[T]} d_t$. The amount of missing feedback at time $t$ is defined by $\sigma(t) = |\{\tau \mid \tau \leq t, \tau + d_\tau > t\}|$ and the maximum over $\sigma(t)$ is denoted by $\sigma_{max} = \max_{t\in[T]} \sigma(t)$. The rounds observed before and available at round $t$ are denoted by $B(t) = \curly{s : s + d_s < t}$.

\textbf{Notation for the algorithms.}
Let $S$ denotes a sequence of rounds that the algorithm process. $S_{:n}$ is the first $n$ elements in $S$ and $S_{:-n}$ is $S$ except for the last $n$ elements.  $n_i(S)$ is the number of pulls of arm $i$ in the rounds of $S$, $ \hat{\mu}_i(S) = \frac{1}{n_i(S)}\sum_{s\in S : a_s = i}l_i(s)$ is the empirical mean over $S$ and ${\text{width}}_i(S) = 
\min\big\{1, \sqrt{\frac{2\logp{T}}{n_i(S)}}\big\}$
is a confidence width. $\text{ucb}_i(S) = \min\curly{\hat{\mu}_i + \width{i}{S},\ucb{i}{S_{:-1}}}$ and $\text{lcb}_i(S) = \max\curly{\hat{\mu}_i - \width{i}{S},\lcb{i}{S_{:-1}}}$ are upper and lower confidence bounds with respect to the empirical average.
The algorithm also maintains confidence bounds around an average importance sampling estimator. Let $\overline{L}_i(S) = \sum_{s\in S}\frac{\indicator{a_s=i}\ell_i(s)}{p_i(s)}$ be the sum of the estimators over rounds in $S$, and $\overline{\mu}_i(S) = \frac{1}{\abs{S}}\overline{L}_i(S)$ be the average. We also define $\overline{\text{width}}(S) = \min\big\{1, \sqrt{\frac{2K\logp{T}}{\abs{S}}}\big\}$, $\overline{\text{lcb}}_i(S) = \max\curly{\omu{i}{S} - \owidth{S},\olcb{i}{S_{:-1}}}$ and $\overline{\text{ucb}}_i(S) = \min\curly{\omu{i}{S} + \owidth{S},\oucb{i}{S_{:-1}}}$. Finally, we define $\sucb{S} = \min_i\curly{\ucb{i}{S},\oucb{i}{S}}$.

\section{Algorithm}
Our algorithm, sketched in \Cref{alg:main sketch} and formally described in \Cref{alg:main}, builds on the \texttt{SAPO} algorithm of \citet{auer2016algorithmnearlyoptimalpseudoregret}. The main idea is to integrate an external algorithm for adversarial settings, \texttt{ALG}. Our algorithm initially follows a stochastic-like strategy while monitoring whether the environment exhibits stochastic behavior. If this assumption is violated, it switches to \texttt{ALG}.  

At its core, the algorithm is based on a successive elimination (SE) framework \cite{even2006action}, maintaining a set of active arms played with equal probability. It tracks a confidence bound, \texttt{width}, which defines upper and lower estimates for each arm's mean. When an arm is found to be non-optimal, it is eliminated. However, unlike standard SE methods, the algorithm continues to play eliminated arms but with reduced probability. This accounts for the possibility that losses are adversarial—an arm that appears suboptimal at one point may later turn out to be optimal. 
To verify the stochastic nature of arms, the algorithm employs the \texttt{BSC} procedure to assess the nature of active arms, and a more advanced procedure \texttt{EAP} for assessing and determining the sampling probability of non-active arms.

\begin{algorithm}[h]
\caption{Sketch of Delayed SAPO Algorithm}
\label{alg:main sketch}
\begin{algorithmic}[1]
\Require Number of arms $K$, number of rounds $T \geq K$, Algorithm \texttt{ALG}.
\State Initialize active arms $\mathcal{A} = \{1, \dots, K\}$, $S = \series{}$
\For{$t = 1, 2, \dots, T$}
    \For{$s\in B(t) \setminus S$} \Comment{Iterating newly received feedback}
        \State $S = S + \series{s}$
        \If{not $\texttt{BSC}(S)$} \Comment{Non-stochastic behavior on active arms (Procedure \ref{alg:bsc main})}
                \State Switch to \texttt{ALG}.
        \EndIf
        \State $\mathcal{A} = \mathcal{A} \setminus \{i \in \mathcal{A}: \hat{\mu}_{i}(S) - 9\width{i}{S} > \sucb{S}\}$ \Comment{Elimination} \label{alg line: elimination}
    \EndFor
    
    \For{$i \in \bar{\A}$}
        \State $p_i(t), err = \texttt{EAP}(i)$ \Comment{Get the reduced probability for the non-active arm (Procedure \ref{alg:eap main})}
        \If{$err$} \Comment{Non-stochastic behavior on nonactive arms}
            \State Switch to \texttt{ALG}.  
        \EndIf
    \EndFor
    \State $\forall i\in \A\quad p_i(t) = \left(1 - \sum_{j \in \bar{\A}(t)} p_j(t)\right)/|\mathcal{A}(t)|$ \Comment{Equal probability for active arms}
    \State Sample $a_t \sim p(t)$, observe feedback and update variables
\EndFor
\end{algorithmic}
\end{algorithm}

\textbf{Basic Stochastic Checks (\texttt{BSC}) Subroutine.} This procedure performs two checks. The first ensures that an unbiased estimate of the mean of each arm remains within its confidence interval, expanded by an additional radius. In the stochastic regime, using standard concentration bounds we have that with high probability,
\begin{align*}
    \olcb{i}{S} \le \mu_i \le \omu{i}{S} + \owidth{S};\qquad
     \omu{i}{S} - \owidth{S} \le \mu_i \le \oucb{i}{S}.
\end{align*}
Thus, in \cref{alg:first switch main} of Procedure~\ref{alg:bsc main},
we check that the above conditions are met. 

The second check in \texttt{BSC} constructs a lower bound on the regret and verifies that it is indeed smaller than the expected regret in the stochastic regime, which can be shown to be $\tilde O(\sqrt{TK} + \sigma_{max})$ under stochastic losses. To define this lower bound, we use the fact that, with high probability, $\mu^* \leq \sucb{S}$. Thus, $\sum_{s'\in S}\roundy{l_{a_{s'}}(s') - \text{ucb}^*(S)}$ is a lower bound on the regret, which forms the condition in \cref{alg:second switch main} of the procedure.

\begin{algorithm}[h] \label{alg:bsc main}
\floatname{algorithm}{Procedure}
\caption{Basic Stochastic Checks (BSC) Subroutine}
\label{alg:bsc main}
\begin{algorithmic}[1]
\Require Series of processed pulls $S$
\If{\(\exists i \in \mathcal{A} : \omu{i}{S} \not\in [\olcb{i}{S} - \owidth{S}, \oucb{i}{S} + \owidth{S}]\)}\label{alg:first switch main}
                \State \Return False 
        \EndIf
        \If{
            $\sum_{s'\in S}\roundy{l_{a_{s'}}(s') - \text{ucb}^*(S)} > 272\sqrt{KT\logp{T}} + 10\sigma_{max}(t)\logp{K}$
        }\label{alg:second switch main}
            \State \Return False 
        \EndIf
    \Return True
\end{algorithmic}
\end{algorithm}

\textbf{Eliminated Arms Processing (EAP) Subroutine.} Since we do not know in advance whether we are in the stochastic or adversarial regime, we cannot completely eliminate an arm — if we did, the adversary could assign losses of $0$ after elimination of an arm, and we would never detect this. Therefore, we maintain a positive sampling probability even for eliminated actions. EAP maintains these probabilities for eliminated arms and checks whether the estimated loss is significantly smaller than the empirical mean at elimination. Intuitively, if we are in the stochastic regime, we want the probability of playing an eliminated arm to decrease over time. Conversely, if we suspect the loss after elimination is significantly smaller than the empirical mean at the elimination time, we increase that arm's probability. If there is sufficient confidence that the arm does not behave stochastically, we switch to the adversarial algorithm.

In more detail, the probability of playing an eliminated arm $i$ is updated in discrete phases. Let $\tilde{S}_i$ be the set of processed rounds at the time of elimination of arm $i$. We denote $\telta_i = 8\text{width}_{i}({\tilde S_i})$, i.e the width at elimination time. As we'll later see in the analysis, $\telta_i$ is indeed a good estimate of the sub-optimality gap of arm $i$ in the stochastic case (see \Cref{lem:delta bound}). Each phase $r$ has a maximum length $N_i^r = \Theta(1/(p_i^r \telta_i^2))$, where $p_i^r$ is the sample probability of arm $i$ in its $r$th phase and $p_i^1 = \frac{1}{2K} + \frac{n_i(\tilde S_i)}{2T}$.\footnote{We note that the initial probability assigned in the first phase differs from that in \citet{auer2016algorithmnearlyoptimalpseudoregret}, and is crucial for obtaining adversarial regret bound that scales with $\sqrt{KT}$ instead of $K\sqrt{T}$ achieved in \cite{auer2016algorithmnearlyoptimalpseudoregret}.} This value is always $\Omega(1/K)$, but can be as high as a uniform probability over the active arms at the time of elimination. 
If we reach the maximum length $N_i^r$, then we have acquired additional $N_i^r p_i^r = \Theta(1/\telta_i^2)$ samples from arm $i$. In this case, we halve the sampling probability of arm $i$ and start a new phase with a doubled maximum length (\cref{alg: EAP phase ended}). During the phase, we monitor whether the average importance sampling estimate of the loss $\bar{\mu}(S_i^r)$ is smaller than $\tilde{\mu}_i =\hat{\mu}_i(\tilde{S}_i)$ by more than $\Theta(\telta_i N_i^r / |S_i^r|)$, where $S_i^r$ is the sequence of processed rounds in phase $r$. If this condition is met, referred to as a ``phase error", we terminate the phase but now double the sampling probability of arm $i$ and halve the maximum phase length accordingly (\cref{alg:error phase}). 

In the stochastic regime, phase errors occur with a constant probability, but the probability that they will happen $\Theta(\logp T)$ times is negligible. In such cases, we transition to the adversarial algorithm. 

During a phase with sampling probability $p$, we process only the observed rounds after elimination in which arm $i$ was played and the sampling probability was $p$. If a sample is observed with a different sampling probability $p'$, it is stored in a \textit{``probability bank"} which we denote by $B_i^{p'}$ and is processed only if a new phase is initiated with probability $p'$. The probability banks allow us to utilize most samples, even if they are observed after their respective phases end, and play an important role in removing a factor of the number of phases ($\Theta(\logp T)$) from the delay term in the regret.

\begin{algorithm}[t]
\floatname{algorithm}{Procedure}
\caption{Sketch of Eliminated Arms Processing (EAP) Subroutine }
\label{alg:eap main}
\begin{algorithmic}[1]
\State Note: Some of the variables here initialized in \Cref{alg:main sketch}
\Require Arm $i$

\State $p := p_i^{r_i}$, $\tilde{\mu}$ is the empirical average at the elimination time of $i$
\State Let $B_i^{p}$ be observed rounds after elimination in which $i$ was played and the sampling probability was $p$
\While{$B_i^{p} \setminus C_i^{p} \ne \emptyset$} 
\For{$s \in B_i^{p} \setminus C_i^{p}$}
    \State $C_i^{p} = C_i^{p} \cup \curly{s}$
    \State Let $S_i^{r_i}$ be the samples processes so far in phase $r_i$
    \If{$\abs{S_i^{r_i}}\tmu_i - \bar{L}\roundy{S_i^{r_i}} \ge \frac{1}{4}\telta_i \Nmax{i}{r_i}$} \Comment{phase error}
        \State $E_i = E_i + 1$, $\Nmax{i}{r_i+1} = \max\curly{\Nmax{i}{1},\frac{1}{2}\Nmax{i}{r_i}}$, $p_i^{r_i+1} = \min\curly{p_i^1,2p_i^{r_i}}$, $r_i=r_i+1$ \label{alg:error phase}
    \If{$E_i \ge 3\logp{T}$}
        \Return $0$, True \Comment{Switch to adversarial algorithm} \label{alg:third switch main}
    \EndIf
                    \State \textbf{break}
    \EndIf
    \If{$\abs{S_i^{r_i}} = \lfloor \Nmax{i}{r_i} \rfloor$} \Comment{phase ended} \label{alg: EAP phase ended}
        \State $\Nmax{i}{r_i+1} = 2\Nmax{i}{r_i}$, $p_i^{r_i+1} = \frac{1}{2}p_i^{r_i}$ $S_i^{r_i+1} = \series{}$, $r_i=r_i+1$ 
        \State \textbf{break}
        \label{alg:success phase} 
    \EndIf
\EndFor

\State $p := p_i^{r_i}$
\EndWhile

\State \textbf{return} $p_i^{r_i}$, False
\end{algorithmic}
\end{algorithm}

\section{Stochastic analysis}
\begin{theorem}
The regret in the stochastic settings is bounded by:
\begin{align*}
    \mathcal{R}_{sto} \le O\roundy{\sum_{i=1}^K \frac{\logp{T}}{\Delta_i} + \sigma_{max}\Delta_{avg}\logp{K}}
\end{align*}
\end{theorem}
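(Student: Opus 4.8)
The plan is to decompose the regret into contributions from active and eliminated arms, and to control each via the phase structure of \texttt{EAP} together with the concentration events guaranteed by \texttt{BSC}. First I would condition on the "good event" $\mathcal{E}$ under which all the empirical-mean and importance-sampling confidence bounds hold simultaneously for all $i$ and all prefixes $S$; a union bound over the $O(KT)$ relevant $(i,S)$ pairs shows $\Pr[\mathcal{E}^c] = O(1/T)$ (using the $\sqrt{2\log T / n}$ and $\sqrt{2K\log T/|S|}$ widths), so the contribution of $\mathcal{E}^c$ to the regret is $O(1)$. On $\mathcal{E}$ the algorithm never triggers the \cref{alg:first switch main} switch, and I claim it never triggers the other two either: the regret lower bound tested in \cref{alg:second switch main} is, on $\mathcal{E}$, a genuine lower bound on the realized pseudo-regret, which under stochastic losses I will have shown is $\tilde O(\sqrt{KT} + \sigma_{max})$ — smaller than the tested threshold — and the phase-error test in \texttt{EAP} fails $3\log T$ times only with probability $O(1/T)$ since, as sketched, each phase error has at most constant probability under $\mathcal{E}$ (this is where the deviation bound on $\bar L(S_i^r)$ versus $|S_i^r|\tilde\mu_i$ enters). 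Hence, up to an $O(1)$ additive term, the algorithm behaves as the pure stochastic successive-elimination-with-residual-sampling scheme for the whole horizon.

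Next I would bound the regret of an active arm $i$: by the elimination rule and the confidence bounds, arm $i$ is eliminated once $n_i(S) = \Theta(\log T/\Delta_i^2)$, and while active it is pulled with probability $1/|\mathcal{A}|$, so by a standard argument (sum over arms, handle the uniform-probability coupling) the total regret incurred by pulls of arms while they are still active is $O(\sum_{i\ne i^\star} \log T/\Delta_i)$. The key auxiliary fact here is \Cref{lem:delta bound}, which I would invoke to say $\telta_i = 8\,\width{i}{\tilde S_i} = \Theta(\Delta_i)$ on $\mathcal{E}$; this lets me translate all later phase lengths, which are defined in terms of $\telta_i$, into lengths in terms of the true gap $\Delta_i$.

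For the eliminated-arm contribution — the heart of the matter — I would fix an eliminated arm $i$ and sum the regret $\Delta_i \cdot (\text{expected pulls of } i \text{ after elimination})$ across phases. In phase $r$ the sampling probability is $p_i^r$ and the phase processes at most $N_i^r = \Theta(1/(p_i^r \telta_i^2))$ rounds, contributing $\Theta(N_i^r p_i^r \Delta_i) = \Theta(\Delta_i/\telta_i^2) = \Theta(1/\Delta_i)$ expected pulls' worth of regret per phase; since there are $O(\log T)$ phases before $p_i^r$ drops below $\Theta(1/T)$ (probabilities halve each successful phase, starting from $p_i^1 \ge 1/2K$), the per-arm total from "processed" rounds is $O(\log T/\Delta_i)$, summing to the first term of the bound. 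The remaining piece is the delay overhead: rounds pulled after elimination whose feedback has not yet been processed, plus rounds sitting in the probability banks $B_i^{p'}$. At any time there are at most $\sigma_{max}$ unprocessed pulls total, and I must argue that assigning each to the arm it sampled and charging $\Delta_i$ (or rather, charging the \emph{current} sampling probability times the gap, aggregated) yields at most $O(\sigma_{max}\Delta_{avg}\log K)$ — here $\Delta_{avg}$ appears because eliminated arms are played with probability $\Theta(1/K)$ each, so the per-round expected regret from the residual-sampling mass is $\Theta(\frac{1}{K}\sum_i \Delta_i) = \Theta(\Delta_{avg})$, and the $\log K$ slack is the phase-count-type overhead that the probability-bank mechanism does \emph{not} quite remove at this level of the analysis. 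The main obstacle I anticipate is exactly this last bookkeeping: showing the banked/unprocessed samples inflate the delay term only by a single $\log K$ (not $\log T$) factor and only through $\Delta_{avg}$ rather than $\max_i \Delta_i$, which requires carefully pairing each delayed observation with the phase it will eventually be processed in and using that a phase of probability $p$ has length $\Theta(1/(p\telta_i^2))$ so that $\sigma_{max}$ stragglers cannot span too many phases; I would structure this as a separate lemma bounding $\sum_{i \in \bar{\mathcal{A}}} \Delta_i \sum_t p_i(t) \mathds{1}[\text{round } t \text{ unprocessed at termination}]$.
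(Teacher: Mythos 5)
Your overall skeleton matches the paper's: condition on a good event, show none of the three switches fires, split the regret at the elimination times $\tau_i$, bound the processed post-elimination pulls by $O(\log T/\Delta_i^2)$ per arm via the phase lengths $N_i^r=\Theta(1/(p_i^r\telta_i^2))$ and the $O(\log T)$ phase count, and charge the unprocessed/banked pulls to $\sigma_{max}$. However, there is a genuine gap at exactly the point you flag as "the main obstacle": you never identify the mechanism that makes the delay term scale with $\Delta_{avg}$ rather than $\Delta_{max}$. Your heuristic that "eliminated arms are played with probability $\Theta(1/K)$ each, so the per-round residual regret is $\Theta(\frac1K\sum_i\Delta_i)$" is not correct for the terms that actually dominate. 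The relevant probability for arm $i$ is $\Theta(1/\kappa_i)$, where $\kappa_i$ is the number of arms still active when $i$ is eliminated: this governs both the pre-elimination missing feedback $m_i(\tau_i)-n_i(\tau_i)\lesssim \sigma_{max}/\kappa_i$ (a term your proposal omits entirely — the arm keeps being pulled at rate $1/|\mathcal A|$ while its decisive observations are still in flight) and the post-elimination initial probability $p_i^1=\frac1{2K}+\frac{n_i(\tilde S_i)}{2T}$, whose second summand is $\Theta(1/\kappa_i)$ for arms that stay active long. For the last few arms eliminated, $1/\kappa_i=\Theta(1)$, so the naive bound on $\sum_i\sigma_{max}\Delta_i/\kappa_i$ is $\sigma_{max}\Delta_{max}\log K$, not $\sigma_{max}\Delta_{avg}\log K$.

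The missing ingredient is the paper's \Cref{lem:deltas are monotone main}: because elimination requires $\hat\mu_i(S)-9\,\width{i}{S}>\sucb{S}$ (a margin of many widths, not one), arms are eliminated in roughly decreasing order of their gaps, i.e.\ if $i_1$ is eliminated before $i_2$ then $\Delta_{i_2}\le 20\Delta_{i_1}$. Combined with a Markov-type observation that at most half the arms have $\Delta_i>2\Delta_{avg}$, this shows every arm in the later-eliminated half satisfies $\Delta_i=O(\Delta_{avg})$, so $\sum_i\Delta_i/\kappa_i\le \sum_{\kappa_i\ge K/2}\frac{\Delta_i}{K/2}+O(\Delta_{avg})\sum_{j\le K/2}\frac1j=O(\Delta_{avg}\log K)$. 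Relatedly, you misattribute the $\log K$: it is this harmonic sum over the elimination order, not a "phase-count-type overhead that the probability bank does not quite remove" — the probability bank's job is to remove a $\log T$ factor, and the banked/unobserved rounds are handled by the clean geometric sum $\sum_{j\ge0}\sigma_{max}p_i^1 2^{-j}=O(\sigma_{max}p_i^1)$ rather than by the delicate phase-pairing argument you anticipate. Without the monotone-elimination lemma (or an equivalent), your argument only yields $\sigma_{max}\Delta_{max}\log K$ in the delay term.
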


The first term above is the optimal MAB regret under stochastic losses without delays. The second term is the additional regret due to delay and, in general, cannot be improved—except for the $\logp K$ factor, due to the lower bound for constant delays (see \Cref{table: comparison}). We note that with a more involved algorithm and analysis, we are able to eliminate the $\logp K$ factor and match this lower bound. For simplicity of presentation, the full details are deferred to \Cref{appendix:log K}. 
The dependence on $\sigma_{max}$ improves upon the BoBW result of \citet{masoudian2024best} in the stochastic regime by a factor of $\tilde{O}(K / \Delta_i^2)$ for each $i$. Moreover, it is tighter by a factor of $K$ compared to the best previous known algorithm that specifically designed for this regime (\citet{joulani2013online}).

\begin{proofsketch}
The total regret can be decomposed as,
\begin{align}
    \mathcal{R}_{sto} = \sum_{i\in [K]} m_i(T) \Delta_i = \sum_{i\in [K]} m_i(\tau_i) \Delta_i + \sum_{i\in [K]} (m_i(T) - m_i(\tau_i)) \Delta_i,
    \label{eq:sto regret decomposition}
\end{align}
where $m_i(t)$ is the number of pulls of arm $i$, up to time $t$ and $\tau_i$ is the elimination time of arm $i$. The first term above is the regret up to elimination and second term is the regret after elimination. (Recall that we need to keep sampling eliminated arms.)

\textbf{Regret up to elimination.} The regret before elimination analysis largely follows standard Stochastic Elimination (SE) with delayed feedback arguments. However, achieving dependence on $\Delta_{avg}$ rather than $\Delta_{max}$ necessitates a new algorithmic component and technical argument.
We start by further decomposing the regret up to elimination:
\begin{align*}
    \sum_{i\in [K]} m_i(\tau_i) \Delta_i = \sum_{i\in [K]}  n_i(\tau_i)\Delta_i + \sum_{i\in [K]} (m_i(\tau_i) - n_i(\tau_i))\Delta_i,
\end{align*}
where $n_i(t)$ is the number of \textit{observed} samples from arm $i$.
Similar to standard non-delayed SE analysis, we can show that with high probability, each suboptimal arm is eliminated whenever $\Theta\roundy{\frac{\logp T}{\Delta_i^2}}$ samples from arm $i$ have been observed. Thus, $n_i(\tau_i)\Delta_i = \Theta\roundy{\frac{\logp T}{\Delta_i}}$. For the second term above, recall that the number of missing feedback is bounded by $\sigma_{max}$; but only a fraction of the missing feedback is from arm $i$. Loosely speaking, if $p_{i}^{max} = \max_{t\leq \tau_i} p_i(t)$ is the maximal probability of sampling $i$ before elimination, then the number of missing feedback from arm $i$ at time $\tau_i$ is roughly bounded by $m_i(\tau_i) - n_i(\tau_i) \leq \sigma_{max} p_{i}^{max}$. Further note that if $\kappa_i$ is the number of active arms at the time of elimination then $p_{i}^{max} \leq \frac{1}{\kappa_i}$. Overall, the total regret up to elimination is bounded by
\begin{align*}
    O\roundy{\sum_i \frac{\logp T}{\Delta_i} + \sum_i \frac{\sigma_{max}}{\kappa_i}\Delta_i}
\end{align*}
For the second term, each $\Delta_i$ can be trivially bounded by $\Delta_{max}$, and $\sum_i 1/\kappa_i \leq \sum_i 1/i \leq 1 + \logp K$, resulting in $\sum_i \frac{\sigma_{max}}{\kappa_i}\Delta_i \leq O(\sigma_{max}\Delta_{max} \logp K)$. In order to have dependency with respect $\Delta_{avg}$ instead of $\Delta_{max}$ a more detailed argument is required. Unlike regular SE algorithms, an arm isn't eliminated when the $\text{ucb}$ of some other arm is lower than its $\text{lcb}$. Instead, the algorithm eliminates when there are multiple widths between the two (see \cref{alg line: elimination} in \cref{alg:main sketch}). This stricter condition ensures that arms are roughly eliminated in decreasing order of $\Delta_i$. Specifically, we show the following lemma:

\begin{lemma}
    \label{lem:deltas are monotone main}
    If arm $i_1$ was eliminated before $i_2$ then,
    $
        \Delta_{i_2} \leq 20\Delta_{i_1}
    $.
\end{lemma}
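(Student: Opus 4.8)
The plan is to exploit the stricter elimination rule — that arm $i$ is removed only once $\hat{\mu}_i(S) - 9\,\width{i}{S} > \sucb{S}$ — to argue that at the moment $i_1$ leaves the active set, the confidence interval of $i_2$ is already narrow enough (equivalently, $i_2$ has been sampled enough) that $\Delta_{i_2}$ cannot be much larger than $\Delta_{i_1}$. The two arms are played with equal probability while both are active, so up to the elimination time $\tau_{i_1}$ they have (up to the missing-feedback slack) the same number of processed samples; hence $\width{i_1}{S}$ and $\width{i_2}{S}$ are comparable at that time. First I would invoke the high-probability concentration event under which the standard $\text{ucb}$/$\text{lcb}$ bounds hold: $\mu_j \le \ucb{j}{S}$ and $\mu_j \ge \lcb{j}{S}$ for all $j$ and all processed prefixes $S$, so in particular $\hat{\mu}_{i_1}(S) \le \mu_{i_1} + \width{i_1}{S}$ and $\sucb{S} \ge \mu_{i^*}$.

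Next I would translate the elimination condition into a lower bound on $\Delta_{i_1}$ in terms of the width: at time $\tau_{i_1}$, with $S = \tilde S_{i_1}$, the condition gives $\mu_{i_1} + \width{i_1}{S} - 9\,\width{i_1}{S} \ge \hat{\mu}_{i_1}(S) - 9\,\width{i_1}{S} > \sucb{S} \ge \mu_{i^*}$, so $\Delta_{i_1} = \mu_{i_1} - \mu_{i^*} \ge -8\,\width{i_1}{S}$ — which is vacuous, so the direction actually needed is the \emph{contrapositive}: I want to show that if $\Delta_{i_2}$ were much larger than $\Delta_{i_1}$, then $i_2$ would already have been eliminated by time $\tau_{i_1}$. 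The cleaner route is: because $i_2$ was \emph{not} eliminated at any time $\le \tau_{i_1}$, in particular at $\tau_{i_1}$ we have $\hat{\mu}_{i_2}(S) - 9\,\width{i_2}{S} \le \sucb{S}$; combined with $\hat{\mu}_{i_2}(S) \ge \mu_{i_2} - \width{i_2}{S}$ and $\sucb{S} \le \ucb{i^*}{S} \le \mu_{i^*} + \width{i^*}{S}$ this yields $\Delta_{i_2} \le 10\,\width{i_2}{S} + \width{i^*}{S}$. Separately, the elimination of $i_1$ at $\tau_{i_1}$ together with $\sucb{S} \ge \mu_{i^*}$ and $\hat\mu_{i_1}(S)\le\mu_{i_1}+\width{i_1}{S}$ gives $\Delta_{i_1} \ge -8\width{i_1}{S}$, which is too weak; so I instead need a \emph{lower} bound on $\Delta_{i_1}$ of the form $\Delta_{i_1} \gtrsim \width{i_1}{S}$, which should come from the companion lemma $\telta_i \approx \Delta_i$ (referenced as \Cref{lem:delta bound}) — namely $\width{i_1}{\tilde S_{i_1}} = \telta_{i_1}/8 \le c\,\Delta_{i_1}$ for an absolute constant $c$.

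Putting the pieces together, I would establish that all of $\width{i_1}{S}$, $\width{i_2}{S}$, and $\width{i^*}{S}$ at time $\tau_{i_1}$ are within a constant factor of one another — $\width{i_1}{S}$ and $\width{i_2}{S}$ because $n_{i_1}(S)$ and $n_{i_2}(S)$ differ only by the missing-feedback slack (and both arms were active throughout, so played with equal probability), and $\width{i^*}{S} \le \width{i_2}{S}$ because $i^*$ is never eliminated so $n_{i^*}(S) \ge n_{i_2}(S)$ up to the same slack. Then $\Delta_{i_2} \le 10\,\width{i_2}{S} + \width{i^*}{S} \le 11\,\width{i_2}{S} \le O(1)\cdot\width{i_1}{S} \le O(1)\cdot\Delta_{i_1}$, and tracking the constants carefully should yield the claimed factor $20$.

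The main obstacle I anticipate is the bookkeeping around \emph{processed} versus \emph{pulled} samples and the missing-feedback slack: $\width{i}{S}$ depends on $n_i(S)$, the number of observed pulls in the processed prefix, not on the number of times $i$ was actually played, and these can differ by up to $\Theta(\sigma_{max})$ worth of outstanding feedback distributed unevenly across arms. I would need to argue that for arms that are \emph{both} active up to $\tau_{i_1}$ this discrepancy is controlled — e.g. that the ratio $n_{i_1}(S)/n_{i_2}(S)$ stays bounded — perhaps by noting that active arms share the same sampling probability at every round, so their pull counts are tightly concentrated, and the missing pieces are a lower-order additive term that only weakens the width comparison by a constant once enough samples have accumulated (which they have, since $i_1$ is on the verge of elimination). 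A secondary subtlety is that $\sucb{S}$ is a running minimum (a monotone-in-$S$ quantity), so I should make sure the inequality $\sucb{S} \ge \mu_{i^*}$ is applied at the right prefix and is consistent with the monotone definitions of $\ucb{}{\cdot}$ and $\olcb{}{\cdot}$.
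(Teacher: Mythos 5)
Your plan is correct and takes essentially the same route as the paper: the paper's proof is the one-line chain $\Delta_{i_2} \le 16\,\width{i_2}{\tilde{S}_{i_2}} \le 16\,\width{i_2}{\tilde{S}_{i_1}} \le 160\,\width{i_1}{\tilde{S}_{i_1}} \le 20\Delta_{i_1}$, whose three ingredients — the upper bound $\Delta_{i_2}\lesssim \width{i_2}{\cdot}$ from non-elimination plus concentration, the lower bound $\Delta_{i_1}\ge 8\,\width{i_1}{\tilde{S}_{i_1}}$ from the elimination condition, and the factor-$10$ width comparability between arms active over the same prefix — are exactly the pieces you assemble, packaged in the paper as \Cref{lem:delta bound}. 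One minor note: the step you dismiss as vacuous in fact gives the needed lower bound ($\mu_{i_1}-8\,\width{i_1}{S}>\sucb{S}\ge\mu_{i^*}$ yields $\Delta_{i_1}>8\,\width{i_1}{S}$, not $\ge -8\,\width{i_1}{S}$), which is precisely what you later retrieve from the companion lemma.
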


For the first half of eliminated arms where $\kappa_i \geq K / 2$, the additive delay term is at most order of $\sum_{i:\kappa_i > K/2} \frac{\sigma_{max}}{\kappa_i}\Delta_i \leq \sigma_{max}\Delta_{avg}$. Using the above lemma we show that for second half of eliminated arms $\Delta_i = O(\Delta_{avg})$, yielding an additive delay term of at most $O\roundy{\sigma_{\max} \Delta_{avg} \logp K}$. Overall we get that the regret up to elimination is bounded by $\sum_{i\in [K]} m_i(\tau_i) \Delta_i \lesssim \sum_i \frac{\logp T}{\Delta_i} + \sigma_{\max} \Delta_{avg} \logp K$.

\textbf{Regret after elimination.} 
For the regret after elimination, we break the number of pulls of arm $i$ after elimination for pulls that where processed by algorithm and pulls that where not processed by the algorithm (either because the feedback had not returned or the samples remained in the probability bank):
\begin{align}\label{eq:inactive pulls main}
    m_i(T) - m_i(\tau_i) = \sum_{r=1}^{r_i}n_i(S_i^r)  + \sum_{j=0}^{\logp{T}} n_i(M_i^{p_i^12^{-j}}),
\end{align}
where $r_i$ is the total number of phases of arm $i$, $S_i^r$ are the samples processes at phase $r$ and $M_i^{p}$ denotes the post-elimination rounds where the probability of pulling arm $i$ was $p$, but these were not processed by the algorithm (either because the feedback was not observed or the rounds remained unprocessed in the probability bank).

Recall that the maximum length of phase $r$ is $N_i^r = \Theta (1/(p_i^r \telta_i^2))$. Additionally, the fact that arms are only eliminated when the empirical average exceeds $\text{ucb}^*$ by more than multiple \texttt{widths} allows us to show that $\telta_i \approx \Delta_i$ (see \Cref{lem:delta bound}). Using standard concentration bounds, $n_i(S_i^r) \approx N_i^r p_i^r \approx 1/\Delta_i^2$. To bound the number of phases, note that the maximum phase length can be either doubled or halved. The number of times it is halved in the stochastic regime is at most $3 \logp T$ with high probability (see \Cref{lem:error prob,lem:switch3}), where in case of a failure event, we switch to the adversarial algorithm. Since the number of times it is halved is bounded by $O(\logp T)$, we can also bound the number of times it is doubled before reaching the time horizon $T$. Formally, in \Cref{lem:max phases}, we bound the total number of phases by $7\logp T$. Therefore the first term in \Cref{eq:inactive pulls main} is bounded by $O(\frac{log T}{\Delta_i^2})$ and the regret from these rounds is $O(\frac{log T}{\Delta_i})$.

For the second term of \cref{eq:inactive pulls main}, note that the size of $M_i^{p_i^12^{-j}}$ is at most $\sigma_{max}$, but only a small fraction of those rounds belongs to arm $i$. Since the probability of pulling arm $i$ in these rounds was  $p_i^12^{-j}$ we have that $n_i(M_i^{p_i^12^{-j}}) \approx \sigma_{max}p_i^12^{-j}$. Summing over this geometric series gives us $\sum_{j=0}^{\logp{T}} n_i(M_i^{p_i^12^{-j}}) = O(\sigma_{max}p_i^1)$. Recall that $p_i^1 = \frac{1}{2K} + \frac{n_i(\tau_i)}{2T}$. Since the probability of pulling arm $i$ before elimination is at most $1/\kappa_i$, where $\kappa_i$ is the number of active arms at the time of elimination, ${n_i(\tau_i)}/{T} \leq {n_i(\tau_i)}/{\tau_i} \lesssim 1/\kappa_i$.  That is, $p_i^1 \leq O(1/\kappa_i)$. We get that the total regret after elimination from unprocessed pulls (multiplying the second term in \cref{eq:inactive pulls main} by $\Delta_i$ and summing over $i$)  is of order $\sum_i \frac{\sigma_{max}}{\kappa_i}\Delta_i$. Again, leveraging the fact that arms are eliminated roughly in decreasing order of their sub-optimality gaps, we can bound the last sum by $\sigma_{max} \Delta_{avg} \logp K$.
\end{proofsketch}

\begin{remark}
    As mentioned in the proof sketch, our algorithm adds additional \texttt{width} to the elimination inequality, which makes the eliminated arms to be in descending order of their sub-optimality gap. We stress that this is a general trick that can be applied in any SE-based algorithm. Specifically, for every SE-based algorithm for delayed feedback (e.g \cite{lancewicki2020learning,Schlisselberg_Cohen_Lancewicki_Mansour_2025}), this will make their additive term be dependent on $\Delta_{avg}$ instead of $\Delta_{max}$.
\end{remark}

\section{Adversarial Analysis}
\begin{theorem} \label{thm:sto regret main}
Let $R_{\texttt{ALG}}$ be the regret of $ALG$ in terms of $T$, $K$, $D$ and possibly $d_{max}$ and $\sigma_{max}$, the regret in the adversarial setting is bounded by:
\begin{align*}
    R_{adv} \le O\roundy{\sqrt{KT\logp{T}} + \logp{K}\sigma_{max} + R_{\texttt{ALG}}}
\end{align*}
\end{theorem}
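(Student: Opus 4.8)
The plan is to split the horizon at the (random) step $\tau$ at which the algorithm switches to $\texttt{ALG}$, with $\tau=T$ if no switch ever happens, and to bound the regret separately on $[1,\tau]$ and on $(\tau,T]$. Since the adversary is oblivious, $\min_i\sum_{t=1}^T\ell_t(i)$ is deterministic, and for every realisation $\min_i\sum_{t=1}^T\ell_t(i)\ge\min_i\sum_{t\le\tau}\ell_t(i)+\min_i\sum_{t>\tau}\ell_t(i)$; taking expectations yields $R_{adv}\le R^{\mathrm{pre}}+R^{\mathrm{post}}$, where $R^{\mathrm{pre}}=\mathbb{E}[\sum_{t\le\tau}\ell_t(a_t)-\min_i\sum_{t\le\tau}\ell_t(i)]$ is the pseudo-regret of the pre-switch phase and $R^{\mathrm{post}}=\mathbb{E}[\sum_{t>\tau}\ell_t(a_t)-\min_i\sum_{t>\tau}\ell_t(i)]$ is that of $\texttt{ALG}$. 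For $R^{\mathrm{post}}$: after the switch the algorithm runs $\texttt{ALG}$ on the residual delayed bandit instance --- still adversarial losses and adversarial delays, with any pre-switch feedback simply discarded --- so $R^{\mathrm{post}}\le R_{\texttt{ALG}}$ by the worst-case guarantee of $\texttt{ALG}$, uniformly over $\tau$. Everything else goes into showing $R^{\mathrm{pre}}\le O(\sqrt{KT\log T}+\sigma_{max}\log K)$.

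For the pre-switch phase, let $\hat S$ be the processed sequence $S$ at the last step before the switch (or the final $S$ if no switch occurs); by construction every \texttt{BSC} check succeeded on $\hat S$ and no \texttt{EAP} error was raised up to that point. Two observations reduce the problem to a single inequality. First, only $O(\sigma_{max})$ rounds of $[1,\tau]$ lie outside $\hat S$ --- the feedback in flight at time $\tau$, the batch that arrived in the final step, and the round(s) whose processing triggered the switch --- and each contributes at most $1$, so with $i^\dagger$ the best fixed arm on $[1,\tau]$ we get $R^{\mathrm{pre}}\le\sum_{s\in\hat S}(\ell_{a_s}(s)-\ell_{i^\dagger}(s))+O(\sigma_{max})$. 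Second, writing $\ell_{a_s}(s)-\ell_{i^\dagger}(s)=(\ell_{a_s}(s)-\sucb{\hat S})+(\sucb{\hat S}-\ell_{i^\dagger}(s))$ and using that the second check of \texttt{BSC} succeeded on $\hat S$, $\sum_{s\in\hat S}(\ell_{a_s}(s)-\sucb{\hat S})\le 272\sqrt{KT\log T}+10\sigma_{max}\log K$. Hence it only remains to prove, on a high-probability concentration event,
\[ \sum_{s\in\hat S}\bigl(\sucb{\hat S}-\ell_{i^\dagger}(s)\bigr)\le O\bigl(\sqrt{KT\log T}+\sigma_{max}\log K\bigr), \]
i.e. that $\sucb{\hat S}$ is, up to that error, a lower bound on the average loss of $i^\dagger$ over $\hat S$ --- the adversarial analogue of the inequality $\mu^{*}\le\sucb{S}$ used in the stochastic analysis.

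To get this inequality, since $\sucb{\hat S}\le\oucb{i^\dagger}{\hat S}\le\omu{i^\dagger}{\hat S}+\owidth{\hat S}$ and $|\hat S|\,\owidth{\hat S}\le\sqrt{2KT\log T}$, it suffices to show that the importance-sampling sum $\overline{L}_{i^\dagger}(\hat S)$ does not overshoot $\sum_{s\in\hat S}\ell_{i^\dagger}(s)$ by more than $O(\sqrt{KT\log T}+\sigma_{max}\log K)$ (and, when $i^\dagger$ is eliminated, one also uses $\sucb{\hat S}\le\ucb{i^\dagger}{\hat S}\le\hat\mu_{i^\dagger}(\hat S)+\width{i^\dagger}{\hat S}$, which keeps tracking $i^\dagger$ even after its elimination). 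Split $\hat S$ at the elimination time of $i^\dagger$. Before elimination $i^\dagger$ is sampled with probability $\Omega(1/K)$ every round, so a Freedman/Bernstein martingale bound controls the overshoot on those rounds by $O(\sqrt{KT\log T})$ with high probability. After elimination the sampling probabilities of $i^\dagger$ can be tiny, and the crucial point is that \texttt{EAP} never fired a ``phase error'': this means on every completed phase the importance-sampling estimate of $i^\dagger$'s loss rate stayed within $\Theta(\telta_{i^\dagger}N_{i^\dagger}^{r}/|S_{i^\dagger}^{r}|)$ below $\tmu_{i^\dagger}$, which --- combined with the elimination rule, with $\telta_{i^\dagger}=8\,\width{i^\dagger}{\tilde S_{i^\dagger}}$ being a proxy for the true gap, and with the continued adaptive updating of $\hat\mu_{i^\dagger}$ and of the capped $\ucb{}{}/\oucb{}{}$ --- keeps $\sucb{\hat S}$ from exceeding $i^\dagger$'s true post-elimination loss rate. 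The leftover rounds (incomplete phases, and rounds sitting in the probability banks) are bounded using the per-phase caps $N_{i^\dagger}^{r}=\Theta(1/(p_{i^\dagger}^{r}\telta_{i^\dagger}^{2}))$, the $O(\log T)$ bound on the number of phases (and hence of distinct probability levels), and $p_{i^\dagger}^{1}=\Omega(1/K)$, so that their total contribution folds into the $\sqrt{KT\log T}$ and $\sigma_{max}\log K$ terms.

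The main obstacle is exactly this post-elimination step for an eliminated comparator arm: because its sampling probabilities $p_{i^\dagger}^{r}$ can be as small as $\Theta(1/(T\telta_{i^\dagger}^{2}))$, a naive variance/union bound on its importance-sampling estimator blows up, and a single decomposition of the confidence bound at the elimination time (using only $\sucb{\hat S}\le\sucb{\tilde S_{i^\dagger}}$) is too lossy, costing $\Theta(|\hat S|\telta_{i^\dagger})$. The argument must instead couple the \texttt{EAP} phase-error guard (which, not having fired, lower-bounds the estimated post-elimination loss), the continued adaptive tightening of $i^\dagger$'s confidence bounds as more of its pulls are processed, and the phase-cap/probability-bank bookkeeping, so that the overshoot of $\overline{L}_{i^\dagger}(\hat S)$ stays within the two target terms and, in particular, never reintroduces a factor $1/p_{i^\dagger}^{r}$ or a factor of the horizon.
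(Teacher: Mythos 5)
Your outer decomposition matches the paper's: split at the switch time, charge the post-switch rounds to $R_{\texttt{ALG}}$, use the second \texttt{BSC} check to bound $\sum_{s\in \hat S}(\ell_{a_s}(s)-\sucb{\hat S})$, and handle the comparator separately before and after its elimination (the paper does the pre-elimination part via Wald's equation and the first \texttt{BSC} check rather than a direct Freedman bound, but either works there since $p_{i}\ge 1/K$ before elimination). The genuine gap is in the post-elimination term, which you yourself flag as ``the main obstacle'' and then only describe rather than prove. Two concrete problems. First, your premise that \texttt{EAP} ``never fired a phase error'' is false: the algorithm tolerates up to $3\log{T}-1$ phase errors before switching, and the erroneous phases are exactly the dangerous ones --- in such a phase the importance-sampling estimate \emph{is} below $\tmu_{i}$ by $\frac{1}{4}\telta_i\Nmax{i}{r}$, so the eliminated arm may genuinely beat $\sucb{\tilde S_i}$ by order $\telta_i\Nmax{i}{r}$ during that phase. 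Naively summing $O(\log{T})$ such contributions, with $\Nmax{i}{r}=\Theta(1/(p_i^{r}\telta_i^2))$ and $p_i^r$ shrinking geometrically, does not give $O(\sqrt{KT\log{T}})$. The paper's resolution (\Cref{lem:single phase error,lem:phases losses}) is an amortization by reverse induction over phases: a successfully completed phase contributes $-\frac{3}{4}\telta_i\Nmax{i}{r}$ (the arm provably did \emph{worse} than $\sucb{\tilde S_i}$ there), and because the phase cap is doubled after a success and halved after an error, this negative term exactly absorbs the $+\frac{3}{8}\telta_i\Nmax{i}{r'}$ contributions of the erroneous phases that follow. Nothing in your proposal supplies this cancellation mechanism, and without it the post-elimination sum is not controlled.

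Second, the route you propose for relating $\sucb{\hat S}$ to $i^\dagger$'s true losses --- $\sucb{\hat S}\le\oucb{i^\dagger}{\hat S}\le\omu{i^\dagger}{\hat S}+\owidth{\hat S}$ followed by concentration of $\overline L_{i^\dagger}(\hat S)$ --- cannot work on the post-elimination rounds, as you concede ($p_{i^\dagger}$ can be $\Theta(\telta_{i^\dagger}^2/T)$, so the variance of the estimator over the full $\hat S$ is unmanageable), and the fallback via $\ucb{i^\dagger}{\hat S}\le\hat\mu_{i^\dagger}(\hat S)+\width{i^\dagger}{\hat S}$ is also unsound in the adversarial regime: once the sampling probability varies across rounds, the empirical mean over observed pulls is not an unbiased estimate of the average loss over all rounds of $\hat S$. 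The paper sidesteps both issues by never invoking $\omu{i}{\hat S}$ or $\hat\mu_i(\hat S)$ on post-elimination rounds; instead it compares each phase's importance-sampling sum $\bar L_i(S_i^r)$ --- whose deviation within a phase is controlled because $\Nmax{i}{r}p_i^r=\Theta(1/\telta_i^2)$ makes the Freedman bound scale as $\telta_i\Nmax{i}{r}$ --- to $\tmu_i$, and links $\tmu_i$ to $\sucb{\tilde S_i}$ through the elimination inequality $\tmu_i > \sucb{\tilde S_i}+9\width{i}{\tilde S_i}$. Your skeleton is the right one, but the theorem is not proved until the per-phase concentration bound and the phase amortization are actually carried out.
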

The $\logp K$ factor can be removed with a slight algorithm modification; see \Cref{appendix:log K}.

\begin{proofsketch}
Fix action $i\in[K]$. Let $\bar{T}$ be the time the algorithm switches to $\texttt{ALG}$. Clearly, the regret after the switch is bounded by $R_{\texttt{ALG}}$, so we focus on the regret up to time $\bar T$.
First, we decompose it to the following three terms:
\begin{align*}
        \underbrace{\E{\sum_{t=1}^{\bar{T}}\squary{l_{a_t}(t) - \sucb{\bar{S}}}}}_{(\ref{eq:first decomposition})} +  \underbrace{\E{\sum_{t=1}^{\tau_i-1}\squary{\sucb{\bar{S}} - l_i(t)}}}_{(\ref{eq:second decomposition})} 
                + \underbrace{\E{\sum_{t=\tau_i}^{\bar{T}}\squary{\sucb{\bar{S}} - l_i(t)}}}_{ (\ref{eq:third decomposition})}
\end{align*}
\refstepcounter{equation}\label{eq:first decomposition}
\refstepcounter{equation}\label{eq:second decomposition}
\refstepcounter{equation}\label{eq:third decomposition}
\!\!\!\!\!\!\!\!\!\!\!\!
where $\bar{S}$ is the value of $S$ when the algorithm switches to \texttt{ALG} and $\tau_i$ is the elimination time of arm $i$. Term~(\ref{eq:first decomposition}) is bounded by $O(\sqrt{KT \logp T} + \logp K \sigma_{max})$ due to the second check of \texttt{BSC}. To bound term~(\ref{eq:second decomposition}), note that $\bar{L}_i$ is an unbiased estimator of $L_i$. By Wald's equation, $\E{L_i(\tilde{S}_i)} = \E{\bar L_i(\tilde{S}_i)} = \E{|\tilde S_i| \omu{i}{\tilde{S}_i}}$, where $\tilde{S}_i$ is the set of rounds in which arm $i$ was observed before elimination. Now, since we have not switched to $\texttt{ALG}$ yet, by the first check of \texttt{BSC} we know that for every realization $S$ of $\tilde{S}_i$,
\begin{align*}
    \omu{i}{S} \geq \olcb{i}{S} - \owidth{S} \geq \oucb{i}{S} - 3\owidth{S} \geq \sucb{S} - 3\owidth{S},
\end{align*}
where in the second inequality we used the fact that $\oucb{i}{S} - \olcb{i}{S} \leq 2\owidth{S}$ for any $S$ and the last inequality is by definition of $\text{ucb}^*$. Multiplying both sides above by $|\tilde S_i|$ gives us
\begin{align*}
    \E{\sum_{t \in \tilde S_i}l_i(t)} \geq \E{|\tilde S_i| \sucb{\tilde{S}_i} - 3\sqrt{2|\tilde S_i|K \logp T}} \geq 
    \E{|\tilde S_i| \sucb{\bar{S}} - 3\sqrt{2 T K \logp T}} 
            \end{align*}
Rearranging the terms above we get that $\E{\sum_{t \in \tilde S_i}\squary{\sucb{\bar{S}} - l_i(t)}} \leq 3\sqrt{2 T K \logp T}$. Hence,
\begin{align*}
    \eqref{eq:second decomposition} = \E{\sum_{t \in \tilde S_i}\squary{\sucb{\bar{S}} - l_i(t)}} + \E{\sum_{t \in [\tau_i\setminus\tilde{S}_i]}\squary{\sucb{\bar{S}} - l_i(t)}} \le 
     3\sqrt{2 T K \logp T} + \sigma_{max}
\end{align*}
where we've used the fact that $\text{ucb}$ decreases over time and $|[\tau_i - 1] \backslash \tilde S_i| \leq \sigma_{max}$.

The core difficulty in the adversarial analysis is bounding (\ref{eq:third decomposition}) — ensuring that an eliminated arm doesn't become 
much better than the active arms, before switching to \texttt{ALG}. Let us further decompose \eqref{eq:third decomposition} to the phases of arm $i$:
\begin{align*}
    \E{\sum_{t=\tau_i}^{\bar{T}}\squary{\sucb{\bar{S}} - l_i(t) }\!} 
        \!\!
        = 
    \E{\sum_{r=1}^{r_i(T)} \sum_{t\in S_i^r} \squary{\sucb{\bar{S}} - l_i(t)}\!}
    \!\!
            = \E{\sum_{r=1}^{r_i(T)} (
            \abs{S_i^r}\sucb{\bar{S}} - L_i(S_i^r))\!}
\end{align*}
The main tool to upper bound the optimality of an eliminated arm is the check in \Cref{alg:third switch main} of \texttt{EAP}. This checks that the estimated loss (using an importance sampling estimator) isn't much higher than the loss observed when the arm was active. Using the condition in \Cref{alg:third switch main} of \texttt{EAP} and by bounding the difference  between the loss estimator of the phase and the actual cost in terms of $N_i^r$ and $\telta_i$ we show the following lemma
 (the proof is deferred to the appendix - see \Cref{lem:single phase error}):

\begin{lemma}
\label{lem:single phase error main}
For every arm $i$ and phase $r$ we have:
\begin{align*}
    \bbE_i^r \squary{\abs{S_i^r}\sucb{\tilde{S}_i} - L_i(S_i^r)} \leq   \frac{3}{8}\telta_i \Nmax{i}{r} - \frac{9}{8}\telta_i \bbE_i^r [\abs{S_i^r}] ,
\end{align*}
where $\bbE_i^r$ is the expectation conditioned on the observed history by the beginning of the $r$th phase of arm $i$.
\end{lemma}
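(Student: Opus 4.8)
The plan is to reduce the inequality to a deterministic consequence of the phase-error stopping rule, after replacing the realized loss $L_i(S_i^r)$ by the importance-sampling estimator $\bar L_i(S_i^r)$ that \texttt{EAP} actually monitors. First, recall that arm $i$ is removed from $\A$ at the processed set $\tilde S_i$ only when $\tmu_i - 9\width{i}{\tilde S_i} > \sucb{\tilde S_i}$; since $\telta_i = 8\width{i}{\tilde S_i}$ this gives $\sucb{\tilde S_i}\le\tmu_i-\tfrac98\telta_i$. Using $\abs{S_i^r}\ge 0$, subtracting $L_i(S_i^r)$, and taking $\bbE_i^r$,
\begin{align*}
\bbE_i^r\squary{\abs{S_i^r}\sucb{\tilde S_i} - L_i(S_i^r)} \;\le\; \bbE_i^r\squary{\abs{S_i^r}\tmu_i - L_i(S_i^r)} - \tfrac98\telta_i\,\bbE_i^r\squary{\abs{S_i^r}},
\end{align*}
so it suffices to prove $\bbE_i^r\squary{\abs{S_i^r}\tmu_i - L_i(S_i^r)} \le \tfrac38\telta_i\Nmax{i}{r}$.

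Second, I would pass from $L_i$ to $\bar L_i$. Throughout phase $r$ the sampling probability of arm $i$ is pinned to $p_i^r$, so each round charged to the phase contributes $\roundy{\indicator{a_s=i}/p_i^r - 1}\ell_i(s)$ to $\bar L_i - L_i$, a martingale-difference sequence with respect to the filtration generated by the played actions and the arriving feedback (the losses being fixed by the oblivious adversary). The number of rounds charged to phase $r$ is a stopping time bounded by $\floor{\Nmax{i}{r}}$, so the optional stopping theorem gives $\bbE_i^r\squary{\bar L_i(S_i^r)} = \bbE_i^r\squary{L_i(S_i^r)}$; since $\tmu_i$ is fixed by the history at the start of the phase, it remains to bound $\bbE_i^r\squary{\abs{S_i^r}\tmu_i - \bar L_i(S_i^r)}$.

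Third, I would invoke the phase-error check in \Cref{alg:error phase}: phase $r$ is terminated the first time a processed prefix $S$ satisfies $\abs{S}\tmu_i - \bar L_i(S)\ge\tfrac14\telta_i\Nmax{i}{r}$, hence on every prefix strictly before the last processed round this quantity stays below $\tfrac14\telta_i\Nmax{i}{r}$, and appending one more round raises $\abs{S}\tmu_i$ by $\tmu_i\le 1$ while only increasing $\bar L_i$. Thus $\abs{S_i^r}\tmu_i - \bar L_i(S_i^r)\le\tfrac14\telta_i\Nmax{i}{r}+1$ deterministically — and the same bound holds a fortiori if the phase instead ends by reaching $\floor{\Nmax{i}{r}}$ rounds, by a switch to \texttt{ALG}, or at the horizon. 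The slack between the $\tfrac14$ in the phase-error threshold and the target $\tfrac38$ exactly absorbs this single-round overshoot: the choice $\Nmax{i}{r}=\Theta(1/(p_i^r\telta_i^2))$ (with the constant used by the algorithm, since $p_i^r\telta_i$ is bounded) guarantees $1\le\tfrac18\telta_i\Nmax{i}{r}$, which upgrades the bound to $\tfrac38\telta_i\Nmax{i}{r}$ and completes the proof.

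I expect the main obstacle to be the passage from $L_i$ to $\bar L_i$ in the second step. Because of delayed feedback the rounds ``charged to phase $r$'' are not a contiguous block in time — observations processed during phase $r$ may originate from much earlier rounds, and the probability banks deliberately route each observation to the phase whose sampling probability matches the one in force when the arm was pulled — so one has to check that this set of rounds is genuinely described by a bounded stopping time and that the martingale-difference structure is preserved under the conditioning defining $\bbE_i^r$. Once that stopping structure is nailed down, the one-round overshoot in the third step and the arithmetic with $\Nmax{i}{r}$ are routine.
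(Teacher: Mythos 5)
Your proposal is correct in outline and shares the paper's skeleton (elimination inequality to trade $\sucb{\tilde S_i}$ for $\tmu_i - \tfrac98\telta_i$, then the phase-error threshold), but it takes a genuinely different route on the middle step. The paper does \emph{not} claim $\bbE_i^r[\bar L_i(S_i^r)] = \bbE_i^r[L_i(S_i^r)]$; instead it bounds $\bbE_i^r[\bar L_i(S_i^r) - L_i(S_i^r)] \le \tfrac18\telta_i\Nmax{i}{r}$ by applying a Freedman-type maximal inequality uniformly over prefixes (\Cref{lem:estimator bound}, via a coupling with i.i.d.\ Bernoulli$(p_i^r)$ draws) and then integrating the exponential tail (\Cref{lem:light tail mean}); the $\tfrac14 + \tfrac18 = \tfrac38$ budget is spent on threshold plus estimator deviation. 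You instead get exact unbiasedness from optional stopping and spend the $\tfrac18\telta_i\Nmax{i}{r}$ slack on the one-round overshoot of the error check. Two remarks. First, your accounting of the overshoot is actually \emph{more} careful than the paper's: at the end of an erroneous phase the monitored quantity satisfies $\abs{S_i^r}\tmu_i - \bar L_i(S_i^r) \ge \tfrac14\telta_i\Nmax{i}{r}$ (the check fires \emph{after} the offending sample is appended), so the paper's asserted bound of $\tfrac14\telta_i\Nmax{i}{r}$ on the final state is, strictly read, off by that last increment of at most $\tmu_i \le 1$ — exactly the term you absorb via $1 \le \tfrac18\telta_i\Nmax{i}{r}$ (which holds since $\Nmax{i}{r} \ge \Nmax{i}{1} = 1280/(p_i^1\telta_i^2)$ and $\telta_i \le 8$). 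Second, the obstacle you flag — that the rounds routed to phase $r$ through the probability banks form a predictable selection in processing order, with $\indicator{a_s=i}$ conditionally Bernoulli$(p_i^r)$ given the processed history and $\abs{S_i^r}$ a bounded stopping time — is real but is exactly the structure the paper itself needs for Freedman's inequality in \Cref{lem:estimator bound}, so you are not assuming anything beyond what the published argument already relies on. The paper's tail-integration route is somewhat more robust (it would still give a quantitative bound if the stopped expectation were only approximately zero), while yours is cleaner and, once the martingale structure is pinned down, yields the same constant.
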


Note that the difference between $\text{ucb}^*$ and the expected loss depends on the relationship between $\abs{S_i^r}$ and $\Nmax{i}{r}$. If the phase finished successfully ($\abs{S_i^r} = \Nmax{i}{r}$), the expected loss exceeds $\text{ucb}^*$. If the phase was erroneous, then we may have $\abs{S_i^r} \ll \Nmax{i}{r}$
, and the expected loss can be better than $\text{ucb}^*$. 
Specifically:
\begin{align*}
    \text{Finished phase:} \quad
    \abs{S_i^r}\sucb{\bar{S}} - \E{L_i(S_i^r)}
        &\leq  - \frac{3}{4}\telta_i \Nmax{i}{r} \\
    \text{Erroneous phase:} \quad
    \abs{S_i^r}\sucb{\bar{S}} - \E{L_i(S_i^r)}
        &\leq  \frac{3}{8}\telta_i \Nmax{i}{r}
\end{align*}

The trick for bounding the sum of these bounds over the phases is that every successfully finished phase compensate for the erroneous phases after it, since the coefficient of $\telta_i \Nmax{i}{r}$ under finished phases is twice as the coefficient for error phases. Since the algorithm halves $\Nmax{i}{r}$ after an error, even $O(\logp T)$ erroneous phases are eventually covered by the last successful phase. The precise argument is by induction and is rather technical. For full details, see the appendix in \Cref{lem:phases losses}.
\end{proofsketch}

To conclude the proof we use the following lemma (proof is deferred to the appendix):
\begin{lemma} \label{lem:sigma bound}
$
    \sigma_{max} \le O\roundy{\min_{S\in [T]} \curly{\abs{S} + \sqrt{D_{\bar{S}}}}}
$.
\end{lemma}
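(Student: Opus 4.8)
The statement is purely combinatorial, so the plan is a short counting argument. First I would fix a round $t^\star\in[T]$ attaining the maximum, $\sigma(t^\star)=\sigma_{max}$, and consider the set $\mathcal{O}=\curly{\tau : \tau\le t^\star,\ \tau+d_\tau>t^\star}$ of rounds whose feedback is still outstanding at time $t^\star$, so that $\abs{\mathcal{O}}=\sigma_{max}$. For an arbitrary fixed $S\subseteq[T]$ I would split $\mathcal{O}$ into $\mathcal{O}\cap S$ and $\mathcal{O}\setminus S$. The first set contributes the trivial bound $\abs{\mathcal{O}\cap S}\le\abs{S}$, so the whole problem reduces to bounding $\abs{\mathcal{O}\setminus S}$ in terms of $D_{\bar{S}}$.

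The key step is to list the elements of $\mathcal{O}\setminus S$ in decreasing order as $\tau_{(1)}>\tau_{(2)}>\dots>\tau_{(m)}$ with $m=\abs{\mathcal{O}\setminus S}$. Since these are $m$ distinct positive integers not exceeding $t^\star$, we get $\tau_{(k)}\le t^\star-(k-1)$, and membership in $\mathcal{O}$ forces $d_{\tau_{(k)}}>t^\star-\tau_{(k)}\ge k-1$, hence $d_{\tau_{(k)}}\ge k$ by integrality of the delays. Because all of these rounds lie outside $S$, they all contribute to $D_{\bar{S}}$, and so
\[
 D_{\bar{S}}\ \ge\ \sum_{k=1}^{m} d_{\tau_{(k)}}\ \ge\ \sum_{k=1}^{m} k\ =\ \frac{m(m+1)}{2}\ \ge\ \frac{m^2}{2},
\]
which gives $m\le\sqrt{2D_{\bar{S}}}$. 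Combining the two pieces yields $\sigma_{max}=\abs{\mathcal{O}\cap S}+\abs{\mathcal{O}\setminus S}\le\abs{S}+\sqrt{2D_{\bar{S}}}$, and since $S$ was arbitrary we may take the minimum over $S\in[T]$ to obtain the claim.

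The main (and essentially only) obstacle is recognizing this decomposition: once one observes that the outstanding rounds not covered by $S$ are distinct and all precede $t^\star$, their delays are forced to grow at least linearly in their rank, which is exactly what converts a population of $m$ such rounds into an $\Omega(m^2)$ lower bound on $D_{\bar{S}}$. Everything else is routine bookkeeping, and no probabilistic or algorithmic input is needed.
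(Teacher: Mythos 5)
Your proof is correct and follows essentially the same argument as the paper: both order the outstanding rounds outside $S$ by recency, observe that the $k$th such round must have delay at least (roughly) $k$, and sum to get $D_{\bar S}=\Omega(m^2)$. Your version is if anything slightly cleaner, since bounding $\abs{\mathcal{O}\cap S}\le\abs{S}$ and $\abs{\mathcal{O}\setminus S}\le\sqrt{2D_{\bar S}}$ separately avoids the paper's case split on whether $\abs{S^*}\ge\tfrac12\sigma_{max}$, and your careful $d_{\tau_{(k)}}\ge k$ via integrality sidesteps a harmless off-by-one in the paper's ``$t-s_i\ge i$'' step.
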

\begin{corollary}
Using \texttt{ALG} as the algorithm from \citet{pmlr-v108-zimmert20a}, we have:
\begin{align*}
    R_{adv} \le \tilde{O}\roundy{\sqrt{KT\logp{T}} + \min_{S\in [T]} \curly{\abs{S} + \sqrt{D_{\bar{S}}}}}
\end{align*}
\end{corollary}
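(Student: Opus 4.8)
The plan is to instantiate Theorem~\ref{thm:sto regret main} with a concrete choice of the external subroutine \texttt{ALG} and then collapse the resulting bound using Lemma~\ref{lem:sigma bound}. Concretely, I take \texttt{ALG} to be the adversarial bandit-with-delays algorithm of \citet{pmlr-v108-zimmert20a}, whose regret is $R_{\texttt{ALG}} = \tilde{O}\roundy{\sqrt{KT} + m}$ where, following the footnote after Table~\ref{table: comparison}, I write $m := \min_{S\in[T]}\curly{\abs{S} + \sqrt{D_{\bar{S}}}}$ and $D_{\bar{S}}$ is the total delay of the rounds not in $S$; crucially that algorithm does not need $d_t$ revealed at the time $a_t$ is played, which matches the feedback model of Protocol~\ref{protocol}. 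Lemma~\ref{lem:sigma bound} moreover gives $\sigma_{max} \le O(m)$.

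Substituting these two facts into Theorem~\ref{thm:sto regret main} gives
\begin{align*}
    R_{adv} \;\le\; O\roundy{\sqrt{KT\log{T}}} + O\roundy{\log{K}\cdot m} + \tilde{O}\roundy{\sqrt{KT} + m} \;=\; \tilde{O}\roundy{\sqrt{KT\log{T}} + m},
\end{align*}
since $\sqrt{KT}\le\sqrt{KT\log{T}}$ and the $\log{K}$ factor multiplying $m$ is absorbed into $\tilde{O}$. This is exactly the claimed bound.

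The one step that is not a pure substitution — and the place I would be most careful — is justifying the clause ``the regret after the switch is bounded by $R_{\texttt{ALG}}$'' with the \emph{refined} delay term $m$. When the algorithm hands control to \texttt{ALG} at the (random) switch time $\bar{T}$, \texttt{ALG} only governs the rounds $t > \bar{T}$ and is fed the delayed observations of its own pulls; some feedback from rounds $\le \bar{T}$ is still in flight but is irrelevant to \texttt{ALG}. One should note that restricting the horizon to this suffix can only shrink both quantities appearing in $m$: for any competitor set $S$, replacing $S$ by its intersection with the suffix does not increase $\abs{S}$ and does not increase $D_{\bar{S}}$, and the number of suffix rounds is at most $T$; hence the regret \texttt{ALG} incurs on the suffix is at most its full-horizon guarantee $\tilde{O}(\sqrt{KT} + m)$, which is what the display above uses. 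Everything else follows by the arithmetic above. \hfill\rule{2mm}{2mm}
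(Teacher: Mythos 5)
Your proposal is correct and follows essentially the same (implicit) route as the paper: plug the regret guarantee of \citet{pmlr-v108-zimmert20a} into Theorem~\ref{thm:sto regret main} as $R_{\texttt{ALG}}$, bound $\sigma_{max}$ via Lemma~\ref{lem:sigma bound}, and absorb the $\log K$ factor into $\tilde{O}$. Your extra remark about why running \texttt{ALG} only on the suffix after $\bar{T}$ does not worsen the $\min_{S}\curly{\abs{S}+\sqrt{D_{\bar{S}}}}$ term is a sensible point of care that the paper leaves unstated.
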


The first term in the regret bound is nearly optimal—up to the $\logp{T}$ factor, it matches the worst-case regret for MAB without delays. The second term, which accounts for the delays, is also tight in general due to the lower bound of \citet{masoudian2022best}. Our bound eliminates altogether the $\Phi^*$ term that appears in the BoBW bound of \citet{masoudian2024best} for the adversarial regime (see \Cref{table: comparison}). This is especially significant whenever $d_{\max}$ is very large (i.e., even if a single delay is large), in which case $\Phi^* = \sqrt{D K^{2/3}}$ while our delay term only scales with $\sqrt{D}$ in the worst case.

\section{Discussion}
We presented a novel algorithm for the delayed-BoBW problem that achieves a near-optimal regret bound simultaneously for both stochastic and adversarial losses. Additionally, our bounds in the stochastic regime improve even compared to algorithms specifically designed for the stochastic case. As mentioned, our algorithm follows the ``adaptive approach" for BoBW—it begins with an algorithm that achieves optimal bounds in the stochastic setting, and upon identifying non-stochastic losses, it switches to an optimal algorithm for the adversarial setting. The alternative perspective \citet{masoudian2022best,masoudian2024best} offers a simpler algorithm but results in weaker bounds. It remains an open question whether algorithms of the latter type can achieve optimality in the delayed scenario. Additionally, while we considered worst-case adversarial delays, future research could explore delays with additional structure (such as i.i.d. or payoff-dependent delays), potentially yielding improved regret bounds.

\section{Acknowledgements}
OS, TL and YM are supported by the European Research Council (ERC) under the European Union’s Horizon 2020 research and innovation program (grant agreement No. 882396), by the Israel Science Foundation and the Yandex Initiative for Machine Learning at Tel Aviv University and by a grant from the Tel Aviv University Center for AI and Data Science (TAD).

\newpage

\bibliographystyle{abbrvnat}
\bibliography{cited}

\begin{thebibliography}{39}
\providecommand{\natexlab}[1]{#1}
\providecommand{\url}[1]{\texttt{#1}}
\expandafter\ifx\csname urlstyle\endcsname\relax
  \providecommand{\doi}[1]{doi: #1}\else
  \providecommand{\doi}{doi: \begingroup \urlstyle{rm}\Url}\fi

\bibitem[Auer and Chiang(2016)]{auer2016algorithmnearlyoptimalpseudoregret}
P.~Auer and C.-K. Chiang.
\newblock An algorithm with nearly optimal pseudo-regret for both stochastic and adversarial bandits, 2016.
\newblock URL \url{https://arxiv.org/abs/1605.08722}.

\bibitem[Bistritz et~al.(2019)Bistritz, Zhou, Chen, Bambos, and Blanchet]{bistritz2019online}
I.~Bistritz, Z.~Zhou, X.~Chen, N.~Bambos, and J.~Blanchet.
\newblock Online exp3 learning in adversarial bandits with delayed feedback.
\newblock In \emph{Advances in Neural Information Processing Systems}, pages 11349--11358, 2019.

\bibitem[Bubeck et~al.(2012)Bubeck, Cesa-Bianchi, and Kakade]{bubeck2012towards}
S.~Bubeck, N.~Cesa-Bianchi, and S.~M. Kakade.
\newblock Towards minimax policies for online linear optimization with bandit feedback.
\newblock In \emph{Conference on Learning Theory}, pages 41--1. JMLR Workshop and Conference Proceedings, 2012.

\bibitem[Cesa-Bianchi et~al.(2016)Cesa-Bianchi, Gentile, Mansour, and Minora]{cesa2016delay}
N.~Cesa-Bianchi, C.~Gentile, Y.~Mansour, and A.~Minora.
\newblock Delay and cooperation in nonstochastic bandits.
\newblock In \emph{Conference on Learning Theory}, pages 605--622, 2016.

\bibitem[Cesa-Bianchi et~al.(2018)Cesa-Bianchi, Gentile, and Mansour]{cesa2018nonstochastic}
N.~Cesa-Bianchi, C.~Gentile, and Y.~Mansour.
\newblock Nonstochastic bandits with composite anonymous feedback.
\newblock In \emph{Conference On Learning Theory}, pages 750--773, 2018.

\bibitem[Cesa-Bianchi et~al.(2019)Cesa-Bianchi, Gentile, and Mansour]{cesa2019delay}
N.~Cesa-Bianchi, C.~Gentile, and Y.~Mansour.
\newblock Delay and cooperation in nonstochastic bandits.
\newblock \emph{The Journal of Machine Learning Research}, 20\penalty0 (1):\penalty0 613--650, 2019.

\bibitem[Dann et~al.(2017)Dann, Lattimore, and Brunskill]{dann2017unifying}
C.~Dann, T.~Lattimore, and E.~Brunskill.
\newblock Unifying pac and regret: Uniform pac bounds for episodic reinforcement learning.
\newblock \emph{Advances in Neural Information Processing Systems}, 30, 2017.

\bibitem[Dann et~al.(2023)Dann, Wei, and Zimmert]{pmlr-v195-dann23a}
C.~Dann, C.-Y. Wei, and J.~Zimmert.
\newblock A blackbox approach to best of both worlds in bandits and beyond.
\newblock In G.~Neu and L.~Rosasco, editors, \emph{Proceedings of Thirty Sixth Conference on Learning Theory}, volume 195 of \emph{Proceedings of Machine Learning Research}, pages 5503--5570. PMLR, 12--15 Jul 2023.
\newblock URL \url{https://proceedings.mlr.press/v195/dann23a.html}.

\bibitem[Dudik et~al.(2011)Dudik, Hsu, Kale, Karampatziakis, Langford, Reyzin, and Zhang]{dudik2011efficient}
M.~Dudik, D.~Hsu, S.~Kale, N.~Karampatziakis, J.~Langford, L.~Reyzin, and T.~Zhang.
\newblock Efficient optimal learning for contextual bandits.
\newblock In \emph{Proceedings of the Twenty-Seventh Conference on Uncertainty in Artificial Intelligence}, pages 169--178, 2011.

\bibitem[Efroni et~al.(2021)Efroni, Merlis, Saha, and Mannor]{pmlr-v139-efroni21a}
Y.~Efroni, N.~Merlis, A.~Saha, and S.~Mannor.
\newblock Confidence-budget matching for sequential budgeted learning.
\newblock In M.~Meila and T.~Zhang, editors, \emph{Proceedings of the 38th International Conference on Machine Learning}, volume 139 of \emph{Proceedings of Machine Learning Research}, pages 2937--2947. PMLR, 18--24 Jul 2021.
\newblock URL \url{https://proceedings.mlr.press/v139/efroni21a.html}.

\bibitem[Even-Dar et~al.(2006)Even-Dar, Mannor, Mansour, and Mahadevan]{even2006action}
E.~Even-Dar, S.~Mannor, Y.~Mansour, and S.~Mahadevan.
\newblock Action elimination and stopping conditions for the multi-armed bandit and reinforcement learning problems.
\newblock \emph{Journal of machine learning research}, 7\penalty0 (6), 2006.

\bibitem[Gael et~al.(2020)Gael, Vernade, Carpentier, and Valko]{manegueu2020stochastic}
M.~A. Gael, C.~Vernade, A.~Carpentier, and M.~Valko.
\newblock Stochastic bandits with arm-dependent delays.
\newblock In \emph{International Conference on Machine Learning}, pages 3348--3356. PMLR, 2020.

\bibitem[Gyorgy and Joulani(2021)]{gyorgy2020adapting}
A.~Gyorgy and P.~Joulani.
\newblock Adapting to delays and data in adversarial multi-armed bandits.
\newblock In \emph{International Conference on Machine Learning}, pages 3988--3997. PMLR, 2021.

\bibitem[Howson et~al.(2022)Howson, Pike-Burke, and Filippi]{howson2022delayed}
B.~Howson, C.~Pike-Burke, and S.~Filippi.
\newblock Delayed feedback in generalised linear bandits revisited.
\newblock \emph{arXiv preprint arXiv:2207.10786}, 2022.

\bibitem[Ito et~al.(2020)Ito, Hatano, Sumita, Takemura, Fukunaga, Kakimura, and Kawarabayashi]{ito2020delay}
S.~Ito, D.~Hatano, H.~Sumita, K.~Takemura, T.~Fukunaga, N.~Kakimura, and K.-I. Kawarabayashi.
\newblock Delay and cooperation in nonstochastic linear bandits.
\newblock \emph{Advances in Neural Information Processing Systems}, 33:\penalty0 4872--4883, 2020.

\bibitem[Ito et~al.(2024)Ito, Tsuchiya, and Honda]{pmlr-v247-ito24a}
S.~Ito, T.~Tsuchiya, and J.~Honda.
\newblock Adaptive learning rate for follow-the-regularized-leader: Competitive analysis and best-of-both-worlds.
\newblock In S.~Agrawal and A.~Roth, editors, \emph{Proceedings of Thirty Seventh Conference on Learning Theory}, volume 247 of \emph{Proceedings of Machine Learning Research}, pages 2522--2563. PMLR, 30 Jun--03 Jul 2024.
\newblock URL \url{https://proceedings.mlr.press/v247/ito24a.html}.

\bibitem[Joulani et~al.(2013)Joulani, Gyorgy, and Szepesv{\'a}ri]{joulani2013online}
P.~Joulani, A.~Gyorgy, and C.~Szepesv{\'a}ri.
\newblock Online learning under delayed feedback.
\newblock In \emph{International Conference on Machine Learning}, pages 1453--1461, 2013.

\bibitem[Lancewicki et~al.(2021)Lancewicki, Segal, Koren, and Mansour]{lancewicki2021stochastic}
T.~Lancewicki, S.~Segal, T.~Koren, and Y.~Mansour.
\newblock Stochastic multi-armed bandits with unrestricted delay distributions.
\newblock In \emph{Proceedings of the 38th International Conference on Machine Learning, {ICML} 2021, 18-24 July 2021, Virtual Event}, pages 5969--5978. {PMLR}, 2021.

\bibitem[Lancewicki et~al.(2022)Lancewicki, Rosenberg, and Mansour]{lancewicki2020learning}
T.~Lancewicki, A.~Rosenberg, and Y.~Mansour.
\newblock Learning adversarial markov decision processes with delayed feedback.
\newblock In \emph{Proceedings of the AAAI Conference on Artificial Intelligence}, volume~36, pages 7281--7289, 2022.

\bibitem[Masoudian et~al.(2022)Masoudian, Zimmert, and Seldin]{masoudian2022best}
S.~Masoudian, J.~Zimmert, and Y.~Seldin.
\newblock A best-of-both-worlds algorithm for bandits with delayed feedback.
\newblock \emph{arXiv preprint arXiv:2206.14906}, 2022.

\bibitem[Masoudian et~al.(2024)Masoudian, Zimmert, and Seldin]{masoudian2024best}
S.~Masoudian, J.~Zimmert, and Y.~Seldin.
\newblock A best-of-both-worlds algorithm for bandits with delayed feedback with robustness to excessive delays.
\newblock \emph{Advances in Neural Information Processing Systems}, 37:\penalty0 141071--141102, 2024.

\bibitem[Pike-Burke et~al.(2018)Pike-Burke, Agrawal, Szepesvari, and Grunewalder]{pike2018bandits}
C.~Pike-Burke, S.~Agrawal, C.~Szepesvari, and S.~Grunewalder.
\newblock Bandits with delayed, aggregated anonymous feedback.
\newblock In \emph{International Conference on Machine Learning}, pages 4105--4113. PMLR, 2018.

\bibitem[Quanrud and Khashabi(2015)]{quanrud2015online}
K.~Quanrud and D.~Khashabi.
\newblock Online learning with adversarial delays.
\newblock \emph{Advances in neural information processing systems}, 28:\penalty0 1270--1278, 2015.

\bibitem[Schlisselberg et~al.(2025)Schlisselberg, Cohen, Lancewicki, and Mansour]{Schlisselberg_Cohen_Lancewicki_Mansour_2025}
O.~Schlisselberg, I.~Cohen, T.~Lancewicki, and Y.~Mansour.
\newblock Delay as payoff in mab.
\newblock \emph{Proceedings of the AAAI Conference on Artificial Intelligence}, 39\penalty0 (19):\penalty0 20310--20317, Apr. 2025.
\newblock \doi{10.1609/aaai.v39i19.34237}.
\newblock URL \url{https://ojs.aaai.org/index.php/AAAI/article/view/34237}.

\bibitem[Seldin and Lugosi(2017)]{pmlr-v65-seldin17a}
Y.~Seldin and G.~Lugosi.
\newblock An improved parametrization and analysis of the {EXP3++} algorithm for stochastic and adversarial bandits.
\newblock In S.~Kale and O.~Shamir, editors, \emph{Proceedings of the 2017 Conference on Learning Theory}, volume~65 of \emph{Proceedings of Machine Learning Research}, pages 1743--1759. PMLR, 07--10 Jul 2017.
\newblock URL \url{https://proceedings.mlr.press/v65/seldin17a.html}.

\bibitem[Seldin and Slivkins(2014)]{pmlr-v32-seldinb14}
Y.~Seldin and A.~Slivkins.
\newblock One practical algorithm for both stochastic and adversarial bandits.
\newblock In E.~P. Xing and T.~Jebara, editors, \emph{Proceedings of the 31st International Conference on Machine Learning}, volume~32 of \emph{Proceedings of Machine Learning Research}, pages 1287--1295, Bejing, China, 22--24 Jun 2014. PMLR.
\newblock URL \url{https://proceedings.mlr.press/v32/seldinb14.html}.

\bibitem[Slivkins(2024)]{slivkins2024introductionmultiarmedbandits}
A.~Slivkins.
\newblock Introduction to multi-armed bandits, 2024.
\newblock URL \url{https://arxiv.org/abs/1904.07272}.

\bibitem[Tang et~al.(2024)Tang, Wang, and Zheng]{tang2024stochastic}
Y.~Tang, Y.~Wang, and Z.~Zheng.
\newblock Stochastic multi-armed bandits with strongly reward-dependent delays.
\newblock In \emph{International Conference on Artificial Intelligence and Statistics}, pages 3043--3051. PMLR, 2024.

\bibitem[Thune et~al.(2019)Thune, Cesa-Bianchi, and Seldin]{thune2019nonstochastic}
T.~S. Thune, N.~Cesa-Bianchi, and Y.~Seldin.
\newblock Nonstochastic multiarmed bandits with unrestricted delays.
\newblock In \emph{Advances in Neural Information Processing Systems}, pages 6541--6550, 2019.

\bibitem[Tropp(2011)]{tropp2011freedmansinequalitymatrixmartingales}
J.~A. Tropp.
\newblock Freedman's inequality for matrix martingales, 2011.
\newblock URL \url{https://arxiv.org/abs/1101.3039}.

\bibitem[Van Der~Hoeven and Cesa-Bianchi(2022)]{van2021nonstochastic}
D.~Van Der~Hoeven and N.~Cesa-Bianchi.
\newblock Nonstochastic bandits and experts with arm-dependent delays.
\newblock In \emph{International Conference on Artificial Intelligence and Statistics}. PMLR, 2022.

\bibitem[van~der Hoeven et~al.(2023)van~der Hoeven, Zierahn, Lancewicki, Rosenberg, and Cesa-Bianchi]{van2023unified}
D.~van~der Hoeven, L.~Zierahn, T.~Lancewicki, A.~Rosenberg, and N.~Cesa-Bianchi.
\newblock A unified analysis of nonstochastic delayed feedback for combinatorial semi-bandits, linear bandits, and mdps.
\newblock In \emph{The Thirty Sixth Annual Conference on Learning Theory}, pages 1285--1321. PMLR, 2023.

\bibitem[Vernade et~al.(2017)Vernade, Capp{\'e}, and Perchet]{vernade2017stochastic}
C.~Vernade, O.~Capp{\'e}, and V.~Perchet.
\newblock Stochastic bandit models for delayed conversions.
\newblock In \emph{Conference on Uncertainty in Artificial Intelligence}, 2017.

\bibitem[Vernade et~al.(2020)Vernade, Carpentier, Lattimore, Zappella, Ermis, and Brueckner]{vernade2020linear}
C.~Vernade, A.~Carpentier, T.~Lattimore, G.~Zappella, B.~Ermis, and M.~Brueckner.
\newblock Linear bandits with stochastic delayed feedback.
\newblock In \emph{International Conference on Machine Learning}, pages 9712--9721. PMLR, 2020.

\bibitem[Wu and Wager(2022)]{wu2022thompson}
H.~Wu and S.~Wager.
\newblock Thompson sampling with unrestricted delays.
\newblock In \emph{Proceedings of the 23rd ACM Conference on Economics and Computation}, pages 937--955, 2022.

\bibitem[Zhang et~al.(2025)Zhang, Wang, and Luo]{zhang2025contextual}
M.~Zhang, Y.~Wang, and H.~Luo.
\newblock Contextual linear bandits with delay as payoff.
\newblock \emph{arXiv preprint arXiv:2502.12528}, 2025.

\bibitem[Zhou et~al.(2019)Zhou, Xu, and Blanchet]{zhou2019learning}
Z.~Zhou, R.~Xu, and J.~Blanchet.
\newblock Learning in generalized linear contextual bandits with stochastic delays.
\newblock In \emph{Advances in Neural Information Processing Systems}, pages 5197--5208, 2019.

\bibitem[Zimmert and Seldin(2019)]{pmlr-v89-zimmert19a}
J.~Zimmert and Y.~Seldin.
\newblock An optimal algorithm for stochastic and adversarial bandits.
\newblock In K.~Chaudhuri and M.~Sugiyama, editors, \emph{Proceedings of the Twenty-Second International Conference on Artificial Intelligence and Statistics}, volume~89 of \emph{Proceedings of Machine Learning Research}, pages 467--475. PMLR, 16--18 Apr 2019.
\newblock URL \url{https://proceedings.mlr.press/v89/zimmert19a.html}.

\bibitem[Zimmert and Seldin(2020)]{pmlr-v108-zimmert20a}
J.~Zimmert and Y.~Seldin.
\newblock An optimal algorithm for adversarial bandits with arbitrary delays.
\newblock In S.~Chiappa and R.~Calandra, editors, \emph{Proceedings of the Twenty Third International Conference on Artificial Intelligence and Statistics}, volume 108 of \emph{Proceedings of Machine Learning Research}, pages 3285--3294. PMLR, 26--28 Aug 2020.
\newblock URL \url{https://proceedings.mlr.press/v108/zimmert20a.html}.

\end{thebibliography}

\newpage
\appendix
\section{Notations}
For a series $S$, we denote $S_{:k}$ to be the first $k$ elements of it. Additionally, $S_{:-1}$ is the series without the last element.

Additionally, for every variable defined inside the algorithm, in the analysis we will add $(t)$ to indicate that we refer to the value of this variable at the end of time $t$. For example, $r_i(t)$ is the value of $r_i$ at the end of time $t$.
\begin{table}[H]
    \centering
    \begin{tabular}{c|l|l}
         $a_t$& chosen arm at step $t$ & \\
         
         $l_i(t)$& loss of arm $i$ at step $t$&\\
         
         $d_t$& delay at step $t$&\\
         
         $\mu_i$& (stochastic) mean loss of arm $i$&\\

         $\bar{T}$& switch to algorithm \texttt{ALG} point&\\
         
         $n_i(S)$& number of pulls of arm $i$ in $S$ &$\abs{s\in S : a_s = i}$\\  

         $m_i(t)$& number of pulls of arm $i$ until time $t$&$\abs{s\in[t]: a_s = i}$ \\

         $\overline{l}_i(t)$ & sample estimator for the loss of arm $i$ in step $t$ &  $l_i(t)\frac{\indicator{a_t=i}}{p_i(t)}$ \\

         $L_i(S)$& total loss of arm $i$ in set $S$& $\sum_{s\in S}l_i(s)$\\

         $\overline{L}_i(S)$& sample estimator for the loss of arm $i$ w.r.t the steps in $S$& $\sum_{s\in S}\overline{l}_i(s)$\\
         
         $\hat{\mu}_i(S)$& average loss of arm $i$ w.r.t the steps in $S$ &$\frac{1}{n_i(S)}\sum_{s\in S : a_s = i}l_i(s)$\\
         
         $\overline{\mu}_i(S)$& sample estimator average loss of arm $i$ w.r.t the steps in $S$ & $\frac{1}{\abs{S}}\overline{L}_i(S)$\\
         
         $\text{width}_i(S)$& average confidence width of arm $i$ w.r.t the steps in $S$ & $\min\curly{1, \sqrt{\frac{2\logp{T}}{n_i(S)}}}$\\

        $\overline{\text{width}}(S)$& estimator confidence width w.r.t the steps in $S$ & $\min\curly{1, \sqrt{\frac{2K\logp{T}}{\abs{S}}}}$\\
         
         $\text{lcb}_i(S)$& average lower confidence bound of arm $i$ w.r.t set $S$ & $\max\curly{\hat{\mu}_i - \width{i}{S},\lcb{i}{S_{:-1}}}$\\

        $\text{ucb}_i(S)$& average upper confidence bound of arm $i$ w.r.t set $S$ & $\min\curly{\hat{\mu}_i + \width{i}{S},\ucb{i}{S_{:-1}}}$\\

         $\overline{\text{lcb}}_i(S)$& estimator lower confidence bound of arm $i$ w.r.t set $S$ & $\max\curly{\omu{i}{S} - \owidth{S},\olcb{i}{S_{:-1}}}$\\

        $\overline{\text{ucb}}_i(S)$& estimator upper confidence bound of arm $i$ w.r.t set $S$ & $\min\curly{\omu{i}{S} + \owidth{S},\oucb{i}{S_{:-1}}}$\\

        $\sucb{S}$ & & $\min_i\curly{\ucb{i}{S},\oucb{i}{S}}$ \\
         
         $p_i^{max}(t_1,t_2)$&maximum pull probability of arm $i$ in the interval& $\max_{t_1\le t \le t_2} p_i(t)$\\
         
         $p_i^{min}(t_1,t_2)$&minimum pull probability of arm $i$ in the interval& $\min_{t_1\le t \le t_2} p_i(t)$\\
         
         $p_i^{max}(t)$& & $p_i^{max}(0, t)$\\
         
         $p_i^{min}(t)$& & $p_i^{min}(0, t)$\\

         $B(t)$ & Set of the steps whose feedback was observed & $\curly{s : s + d_s < t}$  \\

        $B_i^p(t)$ & Set of observed inactive steps that were pulled with prob. $p$& $\curly{s \in B(t) : p_i(s) = p \land s \ge \tau_i}$\\
        $M_i^p(t)$ & Set of inactive steps up to time $t$ that were pulled with prob. $p$& $\curly{\tau_i \le s \le t : p_i(s) = p}$
    \end{tabular}
    \label{tab:notation}
\end{table}

\section{Algorithm}
\begin{algorithm}[H]
\caption{Delayed SAPO Algorithm}
\label{alg:main}
\begin{algorithmic}[1]
\Require Number of arms $K$, number of rounds $T \geq K$, Algorithm \texttt{ALG}.
\State Initialize active arms $\mathcal{A} = \{1, \dots, K\}$, $S = \series{}$
\For{$t = 1, 2, \dots, T$}
    \For{$s\in B \setminus S$}
        \State $S = S + \series{s}$
        \If{not $\texttt{BSC}(S)$ (Procedure~\ref{alg:bsc})}
                \State Switch to \texttt{ALG}.
        \EndIf
        \State $U(t) = \{i \in \mathcal{A}: \hat{\mu}_{i}(S) - 9\width{i}{S} > \sucb{S}\}$ \Comment{Elimination}
        \State $\mathcal{A} = \mathcal{A} \setminus U$
        \For{$i \in U$} \Comment{Initialization for eliminated arms}
            \State Set $\tau_i = t$, $p_i^1 = \frac{1}{2K} + \frac{n_i(S)}{2T}$, $\tilde{S}_i = S$, $\tmu_i = \hat{\mu}_i(S)$, $\telta_i = 8\width{i}{S}$, $\Nmax{i}{1} :=  1280 / (p_i^1 \tilde{\Delta}_i^2)$, $E_i = 0$, $r_i=1$, $S_i^1 = \series{}$, $C_i^{p_i^1\cdot2^{-j}}=\emptyset\;\forall j\in[\logp{T}]$
        \EndFor
    \EndFor
    
    \For{$i \in \bar{\A}$}
        \State $p_i(t), err = \texttt{EAP}(i)$ (Procedure~\ref{alg:eap})
        \If{$err$}
            \State Switch to \texttt{ALG}. 
        \EndIf
    \EndFor
    \State $\forall i\in \A\quad p_i(t) = \left(1 - \sum_{j \in \bar{\A}(t)} p_j(t)\right)/|\mathcal{A}(t)|$
    \State \text{Observe feedback and update variables}
\EndFor

\end{algorithmic}
\end{algorithm}

\begin{algorithm} 
\floatname{algorithm}{Procedure}
\caption{Eliminated Arms Processing (EAP) Subroutine }
\label{alg:eap}
\begin{algorithmic}[1]
\Require Arm $i$

\State $p := p_i^{r_i}$

\While{$B_i^{p} \setminus C_i^{p} \ne \emptyset$}

\For{$s \in B_i^{p} \setminus C_i^{p}$}
    \State $S_i^{r_i} = S_i^{r_i} + \series{s}$, $C_i^{p} = C_i^{p} \cup \curly{s}$
    \If{$\abs{S_i^{r_i}}\tmu_i - \bar{L}\roundy{S_i^{r_i}} \ge \frac{1}{4}\telta_i \Nmax{i}{r_i}$} \Comment{phase error}
        \State $E_i = E_i + 1$, $\Nmax{i}{r_i+1} = \max\curly{\Nmax{i}{1},\frac{1}{2}\Nmax{i}{r_i}}$, $p_i^{r_i+1} = \min\curly{p_i^1,2p_i^{r_i}}$, $S_i^{r_i+1} = \series{}$, $r_i=r_i+1$ \label{alg:error phase}
    \If{$E_i \ge 3\logp{T}$}
        \Return $0$, True \Comment{Switch to adversarial algorithm} \label{alg:third switch}
    \EndIf
                    \State \textbf{break}
    \EndIf
    \If{$\abs{S_i^{r_i}} = \lfloor \Nmax{i}{r_i} \rfloor$} \Comment{phase ended}
        \State $\Nmax{i}{r_i+1} = 2\Nmax{i}{r_i}$, $p_i^{r_i+1} = \frac{1}{2}p_i^{r_i}$ $S_i^{r_i+1} = \series{}$, $r_i=r_i+1$ 
        \State \textbf{break}
        \label{alg:success phase} 
    \EndIf
\EndFor

\State $p := p_i^{r_i}$
\EndWhile

\Return $p_i^{r_i}$, False
\end{algorithmic}
\end{algorithm}

\begin{algorithm} 
\floatname{algorithm}{Procedure}
\caption{Basic Stochastic Checks (BSC) Subroutine}
\label{alg:bsc}
\begin{algorithmic}[1]
\Require Series of processed pulls $S$
\If{\(\exists i \in \mathcal{A} : \omu{i}{S} \not\in [\olcb{i}{S} - \owidth{S}, \oucb{i}{S} + \owidth{S}]\)}\label{alg:first switch}
                \State \Return False 
        \EndIf
        \If{
            $\sum_{s'\in S}\roundy{l_{a_{s'}}(s') - \text{ucb}^*(S)}> 272\sqrt{KT\logp{T}} + 10\sigma_{max}(t)\logp{K}$
        }\label{alg:second switch}
            \State \Return False 
        \EndIf
    \Return True
\end{algorithmic}
\end{algorithm}

\section{General Lemmas}
\begin{lemma}[Freedman's Inequality, Theorem 1.1 in \citet{tropp2011freedmansinequalitymatrixmartingales}] \label{lem:freedman}
Let $\curly{X_k}_{k\geq 1}$ be a real valued martingale difference sequence adapted to a filtration $\curly{F_t}_{t\geq 0}$. If $X_k\leq R$ a.s. Then, for all $t\ge 0$ and $\sigma^2 \ge 0$,
\begin{align*}
    \Pr\squary{\exists k \ge 0: \sum_{i=1}^k X_i \ge t \quad\text{and}\quad \sum_{i=1}^k \E{X_i^2 | F_{i-1}} \le \sigma^2} \le exp\curly{-\frac{t^2}{2\sigma^2 + 2Rt/3}}
\end{align*}
\end{lemma}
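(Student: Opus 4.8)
This is the classical Freedman martingale concentration inequality, and I would prove it by the exponential-supermartingale (Chernoff) method, in the ``anytime'' form that localizes to a sublevel set of the predictable quadratic variation. Fix a parameter $\theta \in (0, 3/R)$. The first ingredient is a one-step conditional moment generating function bound: if $Y \le R$ almost surely and $\mathbb{E}[Y \mid \mathcal{G}] = 0$, then $\mathbb{E}\squary{e^{\theta Y}\mid\mathcal{G}} \le \exp\roundy{c(\theta)\,\mathbb{E}[Y^2\mid\mathcal{G}]}$, where $c(\theta) := \frac{\theta^2}{2(1-\theta R/3)}$. This follows from the elementary fact that $\phi(y) := e^y - 1 - y$ satisfies $\phi(y) \le \frac{y^2}{2(1-y/3)}$ for every $y < 3$ — immediate for $y \le 0$ since $\phi(y)\le y^2/2$, and for $0 < y < 3$ from $k!\ge 2\cdot 3^{k-2}$ applied termwise to the Taylor series — together with $\theta Y \le \theta R < 3$, the monotonicity $1-\theta Y/3 \ge 1-\theta R/3$, the vanishing of the conditional mean, and $1+x\le e^x$.

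Next, writing $S_k = \sum_{i=1}^k X_i$ and $V_k = \sum_{i=1}^k \mathbb{E}[X_i^2\mid F_{i-1}]$, I would define $M_k := \exp\roundy{\theta S_k - c(\theta) V_k}$ with $M_0 = 1$. Since $V_k$ is $F_{k-1}$-measurable, the one-step bound applied to $Y = X_k$, $\mathcal{G} = F_{k-1}$ gives $\mathbb{E}[M_k\mid F_{k-1}] = M_{k-1}\,e^{-c(\theta)\mathbb{E}[X_k^2\mid F_{k-1}]}\,\mathbb{E}[e^{\theta X_k}\mid F_{k-1}] \le M_{k-1}$, so $(M_k)_{k\ge 0}$ is a nonnegative supermartingale. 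Let $\tau$ be the first index $k\ge 0$ with $S_k \ge t$ and $V_k \le \sigma^2$; the event in the lemma is exactly $\{\tau < \infty\}$, and on it $M_\tau \ge \exp(\theta t - c(\theta)\sigma^2)$ because $\theta > 0$, $S_\tau\ge t$ and $V_\tau\le\sigma^2$. The stopped process $(M_{k\wedge\tau})$ is still a nonnegative supermartingale of mean at most $1$, so Ville's maximal inequality (equivalently, Markov applied after optionally stopping at the first exceedance) gives $\Pr[\sup_k M_{k\wedge\tau}\ge\lambda]\le 1/\lambda$ for all $\lambda>0$; taking $\lambda = \exp(\theta t - c(\theta)\sigma^2)$ yields $\Pr[\tau<\infty] \le \exp\roundy{c(\theta)\sigma^2 - \theta t}$.

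Finally I would optimize the free parameter by choosing $\theta^\star := \frac{t}{\sigma^2 + Rt/3}$, which lies in $(0, 3/R)$ for $t > 0$ (the case $t=0$ being trivial, as the claimed bound is then $1$). A short computation gives $1 - \theta^\star R/3 = \frac{\sigma^2}{\sigma^2 + Rt/3}$, hence $c(\theta^\star)\sigma^2 = \frac{t^2}{2(\sigma^2 + Rt/3)}$ and $\theta^\star t = \frac{t^2}{\sigma^2 + Rt/3}$, so $c(\theta^\star)\sigma^2 - \theta^\star t = -\frac{t^2}{2\sigma^2 + 2Rt/3}$, which is precisely the asserted bound. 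The degenerate cases ($R=0$, which forces $X_i\equiv 0$; or $\sigma^2 = 0$) are covered directly, since then the target event has probability zero unless $t=0$.

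The routine parts are the termwise Taylor estimate for $\phi$ and the one-variable optimization. The step requiring the most care is the reduction in the second paragraph: the inequality must hold uniformly over all $k$ and only on the region $\{V_k\le\sigma^2\}$, so one cannot apply a fixed-$k$ Chernoff bound and must instead bake the constraint into the stopping time $\tau$ and invoke a maximal inequality for the stopped supermartingale. Getting this localization correct — in particular that $M_{k\wedge\tau}$ remains a supermartingale and that $M_\tau \ge e^{\theta t - c(\theta)\sigma^2}$ on $\{\tau<\infty\}$ — is the crux of the argument.
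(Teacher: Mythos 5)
This lemma is imported verbatim from the literature --- the paper cites it as Theorem 1.1 of \citet{tropp2011freedmansinequalitymatrixmartingales} and gives no proof of its own --- so there is nothing in the paper to compare your argument against. Your proof is the standard and correct route: the one-step conditional MGF bound $\mathbb{E}[e^{\theta Y}\mid\mathcal{G}]\le\exp(c(\theta)\mathbb{E}[Y^2\mid\mathcal{G}])$ with $c(\theta)=\tfrac{\theta^2}{2(1-\theta R/3)}$, the exponential supermartingale $M_k=\exp(\theta S_k-c(\theta)V_k)$, localization via the stopping time $\tau$ together with Ville's maximal inequality (which is exactly the right way to handle the ``$\exists k$ with $V_k\le\sigma^2$'' form of the event --- a fixed-$k$ Chernoff bound would not suffice), and the optimization $\theta^\star=t/(\sigma^2+Rt/3)$, whose algebra checks out. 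One small imprecision: for $y\le 0$ you justify $\phi(y)\le\frac{y^2}{2(1-y/3)}$ by ``$\phi(y)\le y^2/2$,'' but for $y\le 0$ one has $\frac{y^2}{2(1-y/3)}\le\frac{y^2}{2}$, so that step does not establish the stated pointwise inequality. This does not break the proof, because what you actually need is only $\phi(\theta Y)\le c(\theta)Y^2$, and for $\theta Y\le 0$ this follows directly from $\phi(\theta Y)\le(\theta Y)^2/2\le\frac{(\theta Y)^2}{2(1-\theta R/3)}$ since $0<1-\theta R/3<1$; alternatively, one can invoke the monotonicity of $y\mapsto(e^y-1-y)/y^2$ to reduce to the nonnegative endpoint $\theta R$, which is how the classical write-ups phrase it. With that wording fixed, the argument is complete.
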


\begin{lemma}[Lemma F.4 in \citet{dann2017unifying}]
    \label{lem:dann}
     Let $\{ X_t \}_{t=1}^T$ be a sequence of Bernoulli random and a filtration $\calF_1 \subseteq \calF_2 \subseteq...\calF_T$ with $\bbP(X_t = 1\mid \calF_t) = P_t$, $P_t$ is $\calF_{t}$-measurable and $X_t$ is $\calF_{t+1}$-measurable. Then, for all $t\in [T]$ simultaneously, with probability $1-\delta$,
     \[
        \sum_{k=1}^t X_k \geq \frac{1}{2}\sum_{k=1}^t P_k -\logp{\frac{1}{\delta}}
     \]
\end{lemma}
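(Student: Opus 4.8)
The final statement is a standard lower‑tail (multiplicative Chernoff–type) bound for a sum of predictable Bernoulli variables — indeed it is quoted as Lemma F.4 of \citet{dann2017unifying} — so the natural route is the exponential supermartingale method combined with Ville's maximal inequality. This approach has the advantage of delivering the ``for all $t\in[T]$ simultaneously'' guarantee for free (no union bound) and of producing the clean constants $\tfrac12$ and $\log{1/\delta}$ appearing in the statement. An alternative would be to apply Freedman's inequality (\Cref{lem:freedman}) to the martingale differences $P_k - X_k$ and then absorb the variance term $\sqrt{\sum_k P_k}$ by AM–GM, but that produces worse constants and an extra logarithmic term, so I would avoid it.

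First I would fix $\lambda>0$ and define the process $Z_0 = 1$ and, for $t\ge 1$,
\[
    Z_t \;=\; \exp\!\roundy{-\lambda\sum_{k=1}^t X_k \;+\; (1-e^{-\lambda})\sum_{k=1}^t P_k}.
\]
The one key computation is the one‑step conditional bound: since $Z_{t-1}$ and $P_t$ are $\calF_t$‑measurable while $X_t$ is only $\calF_{t+1}$‑measurable, and $\calF_t\subseteq\calF_{t+1}$, we get
\[
    \bbE\squary{Z_t \mid \calF_t} \;=\; Z_{t-1}\, e^{(1-e^{-\lambda})P_t}\,\bbE\squary{e^{-\lambda X_t}\mid \calF_t} \;=\; Z_{t-1}\, e^{(1-e^{-\lambda})P_t}\roundy{1 + P_t(e^{-\lambda}-1)} \;\le\; Z_{t-1},
\]
where the last inequality is $1+x\le e^x$ with $x = P_t(e^{-\lambda}-1)$. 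Hence $(Z_t)$ is a nonnegative supermartingale with $\bbE[Z_0]=1$.

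Next I would invoke Ville's inequality for nonnegative supermartingales to get $\bbP\squary{\exists\, t\ge 1:\ Z_t \ge 1/\delta}\le \delta$. On the complementary event, of probability at least $1-\delta$, we have $Z_t < 1/\delta$ for every $t\in[T]$ at once; taking logarithms and rearranging yields $\sum_{k=1}^t X_k > \frac{1-e^{-\lambda}}{\lambda}\sum_{k=1}^t P_k - \frac{1}{\lambda}\log{1/\delta}$. Finally I would pick $\lambda=1$, so that $\tfrac1\lambda = 1$ and $\tfrac{1-e^{-\lambda}}{\lambda} = 1-e^{-1}\ge \tfrac12$, giving exactly $\sum_{k=1}^t X_k \ge \tfrac12\sum_{k=1}^t P_k - \log{1/\delta}$ for all $t\in[T]$.

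There is no genuinely hard step here; the only real care points are bookkeeping ones. The main thing to get right is the slightly unusual measurability convention of the statement ($P_t$ is $\calF_t$‑measurable, $X_t$ is $\calF_{t+1}$‑measurable) — this is exactly what makes $Z_{t-1}$ and $e^{(1-e^{-\lambda})P_t}$ come out of the conditional expectation and makes the supermartingale identity go through. The second point worth stressing is that the uniform‑in‑$t$ claim is obtained by applying the maximal inequality to the entire trajectory of $(Z_t)$, not by a union bound, which is why no extra $\log T$ is incurred.
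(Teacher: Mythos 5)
Your proof is correct: the one-step computation $\bbE[Z_t\mid\calF_t]=Z_{t-1}e^{(1-e^{-\lambda})P_t}\roundy{1+P_t(e^{-\lambda}-1)}\le Z_{t-1}$ is valid under the stated measurability conventions, Ville's inequality gives the uniform-in-$t$ guarantee, and the choice $\lambda=1$ with $1-e^{-1}\ge\tfrac12$ yields exactly the claimed bound. Note that the paper itself does not prove this lemma — it imports it verbatim as Lemma F.4 of \citet{dann2017unifying} — and your exponential-supermartingale argument is essentially the standard proof given in that reference, so there is nothing to reconcile.
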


\begin{lemma}[Consequence of Freedman’s Inequality, e.g., Lemma 27 in \citet{pmlr-v139-efroni21a}]
    \label{lem:cons-freedman}
     Let $\{ X_t \}_{t\geq 1}$ be a sequence of random variables, supported in $[0,R]$, and adapted to a filtration $\calF_1 \subseteq \calF_2 \subseteq...\calF_T$. For any $T$, with probability $1-\delta$,
     \[
        \sum_{t=1}^T X_t \leq 2 \sum_{t=1}^T\bbE[X_t \mid \calF_t] + 4R \logp{\frac{1}{\delta}}.
     \]
\end{lemma}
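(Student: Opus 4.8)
The plan is to reduce the claim to a single exponential-supermartingale (Chernoff-type) tail bound on a centered version of $\{X_t\}$, exploiting the ``self-bounding'' structure of its conditional variance. First I would center: set $W_t = X_t - \bbE\squary{X_t \mid \calF_t}$, so $\{W_t\}$ is a martingale difference sequence ($\bbE\squary{W_t\mid\calF_t}=0$ by construction) with $W_t \le X_t \le R$ almost surely, using $X_t \ge 0$. Writing $\mu_T := \sum_{t=1}^T \bbE\squary{X_t \mid \calF_t}$, we have the identity $\sum_{t=1}^T X_t = \mu_T + \sum_{t=1}^T W_t$, so it suffices to show $\sum_{t=1}^T W_t \le \mu_T + 4R\log{1/\delta}$ with probability at least $1-\delta$.

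The structural point is a self-bounding control of the predictable quadratic variation: since $0 \le X_t \le R$,
\[
    \bbE\squary{W_t^2 \mid \calF_t} \le \bbE\squary{X_t^2 \mid \calF_t} \le R\,\bbE\squary{X_t \mid \calF_t},
\]
hence $V_T := \sum_{t=1}^T \bbE\squary{W_t^2 \mid \calF_t} \le R\mu_T$. This is exactly what lets the variance contribution of a Freedman-type bound be absorbed into the mean term $\mu_T$ rather than appearing as a separate additive cost.

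For the concentration step I would avoid using \Cref{lem:freedman} as a black box, because its variance proxy $\sigma^2$ is a fixed constant whereas ours, $R\mu_T$, is random; substituting a random $\sigma^2$ would force a dyadic stratification over $\mu_T\in[0,TR]$ plus a union bound, which in general leaks an extra (doubly-)logarithmic factor in $T$. Instead I would use the exponential supermartingale underlying Freedman's inequality: for $\lambda>0$, the sub-exponential moment bound gives $\bbE[e^{\lambda W_t}\mid\calF_t] \le \exp(\psi(\lambda)\bbE[W_t^2\mid\calF_t])$ with $\psi(\lambda)=(e^{\lambda R}-1-\lambda R)/R^2$, so $M_k := \exp\big(\lambda\sum_{t\le k}W_t - \psi(\lambda)\sum_{t\le k}\bbE[W_t^2\mid\calF_t]\big)$ is a supermartingale with $\bbE[M_k]\le M_0=1$. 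Combining $\bbE[M_T]\le 1$ with $V_T\le R\mu_T$ and $\psi(\lambda)\ge0$ gives $\bbE\big[\exp(\lambda\sum_t W_t - \psi(\lambda)R\mu_T)\big]\le 1$, and Markov's inequality yields, with probability at least $1-\delta$,
\[
    \sum_{t=1}^T W_t \le \frac{\psi(\lambda)R}{\lambda}\,\mu_T + \frac{1}{\lambda}\log{1/\delta}.
\]
Taking $\lambda=1/R$ makes the coefficient $\psi(\lambda)R/\lambda$ equal to $e-2<1$ and $1/\lambda=R$, so the right side is at most $\mu_T + R\log{1/\delta} \le \mu_T + 4R\log{1/\delta}$; adding $\mu_T$ to both sides recovers $\sum_t X_t \le 2\mu_T + 4R\log{1/\delta}$.

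The steps needing care are standard: confirming the sub-exponential moment inequality applies (it needs only $W_t\le R$ and $\bbE[W_t\mid\calF_t]=0$, both immediate), checking $M_k$ is a supermartingale (routine, using that $\bbE[W_t^2\mid\calF_t]$ is $\calF_t$-measurable), and the numerical fact $e<3$ that keeps the coefficient of $\mu_T$ below $1$. The only genuine obstacle is the randomness of the variance proxy $R\mu_T$; handling it through a fixed choice of $\lambda$ rather than a fixed $\sigma^2$ is precisely what sidesteps it without extra log-factors, whereas a reader content with weaker constants could instead stratify over $\mu_T$ and quote \Cref{lem:freedman} directly.
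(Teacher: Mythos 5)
Your proof is correct. Note that the paper does not prove this lemma at all---it is imported as a known result (Lemma 27 of \citet{pmlr-v139-efroni21a})---so there is no in-paper argument to compare against; your derivation is essentially the standard proof of that cited lemma. The two ingredients you isolate are exactly the right ones: the self-bounding variance estimate $\bbE[W_t^2\mid\calF_t]\le R\,\bbE[X_t\mid\calF_t]$, and running the exponential supermartingale with a \emph{fixed} $\lambda=1/R$ so that the random variance proxy $R\mu_T$ is absorbed into the mean term with coefficient $\psi(1/R)R^2=e-2<1$, avoiding any peeling over $\mu_T$. The numerics check out ($e-2<1$ and $R\log{\frac{1}{\delta}}\le 4R\log{\frac{1}{\delta}}$), and your reading of the statement's right-hand side as $2\sum_{t=1}^T\bbE[X_t\mid\calF_t]$ (the summation is evidently a typo in the paper) matches the cited source.
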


\begin{lemma} \label{lem:max phases}
For every arm $i$, $r_i(T) \le 7\logp{T}$
\end{lemma}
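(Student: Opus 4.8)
The plan is to bound separately the number of ``phase-ended'' events (\cref{alg:success phase} of \texttt{EAP}, which doubles $\Nmax{i}{r}$ and halves $p_i^r$) and the number of ``phase-error'' events (\cref{alg:error phase}, which halves $\Nmax{i}{r}$ and doubles $p_i^r$), since every phase transition is one of these two types and $r_i(T)$ is one more than the total count of transitions. First I would note that a phase-error event increments $E_i$, and the algorithm switches to \texttt{ALG} (so no further phases of arm $i$ occur) as soon as $E_i = 3\log T$; hence the number of phase-error events is at most $3\log T$.

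Next I would bound the number of phase-ended events. The key observation is that a phase-ended event requires $\abs{S_i^{r}}$ to reach $\floor{\Nmax{i}{r}}$, i.e.\ it consumes at least $\floor{\Nmax{i}{r}} \ge \Nmax{i}{r}/2$ processed samples of arm $i$ in that phase, and these sample sets $S_i^r$ are disjoint across phases. Since $\Nmax{i}{r} \ge \Nmax{i}{1}$ always (both update rules keep $\Nmax{i}{r}$ at least $\Nmax{i}{1}$, as the error rule takes a $\max$ with $\Nmax{i}{1}$ and the success rule only increases it), each completed phase uses at least $\Nmax{i}{1}/2$ processed samples. Because the total number of processed samples of arm $i$ over the whole run is at most $T$ (it is at most $m_i(T) \le T$), the number of phase-ended events is at most $2T / \Nmax{i}{1}$. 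Now I plug in $\Nmax{i}{1} = 1280/(p_i^1 \telta_i^2)$ with $p_i^1 \ge \frac{1}{2K}$ and $\telta_i^2 = 64\,\width{i}{\tilde S_i}^2 \le 64$ (since $\width{i}{\cdot} \le 1$), giving $\Nmax{i}{1} \ge 1280/(2K \cdot 64) = 10/K$. This only yields $2T/\Nmax{i}{1} \le TK/5$, which is far too weak, so the naive counting must be sharpened.

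The fix — and the main obstacle — is that one must not count all $T$ rounds against phases at the \emph{initial} probability $p_i^1$; rather, the processed samples in phase $r$ arrive at rate $p_i^r$, so the \emph{time} spanned by a finished phase $r$ is roughly $\Nmax{i}{r}/p_i^r = 1280/(p_i^r)^2\telta_i^2$ (using $\Nmax{i}{r} p_i^r$ fixed after the first phase is wrong — rather $\Nmax{i}{r} = 2^{k}\Nmax{i}{1}$ and $p_i^r = 2^{-k} p_i^1$ after $k$ net doublings, so $\Nmax{i}{r}/p_i^r = 4^k \Nmax{i}{1}/p_i^1$ grows geometrically). Thus if there are $q$ phase-ended events in excess of phase-error events, the last such finished phase alone spans time at least $\Omega(4^{q} \Nmax{i}{1}/p_i^1)$ in expectation, and this must be $\le T$; combined with $\Nmax{i}{1}/p_i^1 = 1280/(p_i^1)^2\telta_i^2 \ge 1280/\telta_i^2 \ge 1280/64 = 20$ and $\telta_i \ge \width{i}{\tilde S_i} \cdot 8 \ge 8\sqrt{2\log T / n_i(\tilde S_i)} \cdot \mathbf{1}$, one gets $4^q = O(T)$, i.e.\ $q = O(\log T)$. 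I would carry this out via a concentration bound (\cref{lem:dann}, controlling from below the number of arm-$i$ pulls among the rounds of a phase given the phase's sampling probability) to turn ``$\Nmax{i}{r}/p_i^r$ rounds are needed in expectation'' into a high-probability statement, then sum the geometric series of phase lengths, set it $\le T$, and solve for the number of net doublings. Finally I would combine: $r_i(T) - 1 = (\#\text{errors}) + (\#\text{phase-ends}) \le 3\log T + \big((\#\text{errors}) + q\big) \le 3\log T + 3\log T + q$, and tune the constants so the total is at most $7\log T$. The delicate point is making the geometric-series/concentration argument precise while respecting that phase-error events can temporarily push $p_i^r$ back up, so one should track the net number of doublings $k$ and observe it is nonnegative precisely when counting phase-ended-minus-phase-error events.
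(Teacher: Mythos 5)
Your skeleton is the same as the paper's: bound the error transitions by $3\log T$ via the switch at $E_i=3\log T$, and bound the success transitions by observing that each one doubles $\Nmax{i}{r}$ while each error at most halves it, so the net number of doublings is controlled by how large $\Nmax{i}{r}$ can get. The paper closes this in two deterministic lines: with $r_1\le 3\log T$ error phases among the first $\lceil 7\log T\rceil$, one gets $\Nmax{i}{\lceil 7\log T\rceil}\ge \Nmax{i}{1}2^{7\log T-2r_1}\ge \Nmax{i}{1}T\ge T$, and a phase whose maximum length is at least $T$ can never satisfy $\abs{S_i^{r}}=\lfloor\Nmax{i}{r}\rfloor$ because $S_i^r\subseteq[T]$, so no further phase transition occurs.

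The one genuine problem is your final step: you propose to relate $\Nmax{i}{r}$ to elapsed time via $\Nmax{i}{r}/p_i^r$ and a concentration bound (\Cref{lem:dann}), which would only establish the lemma \emph{with high probability}. The lemma is stated and used deterministically — e.g., it bounds the range of the Hoeffding sum in \Cref{lem:error prob} and the induction length in \Cref{lem:phases losses} — so a w.h.p.\ version does not suffice, and the detour also forces you into union bounds over phases and arms that you only gesture at. The fix is already sitting in your own ``naive counting'' paragraph: a finished phase \emph{deterministically} consumes $\lfloor\Nmax{i}{r}\rfloor$ processed rounds, and $\abs{S_i^r}\le T$ always; you just need to apply this to the \emph{last} finished phase together with $\Nmax{i}{r}\ge 2^{(\#\mathrm{succ})-(\#\mathrm{err})}\Nmax{i}{1}\ge 2^{(\#\mathrm{succ})-(\#\mathrm{err})}$ (the inequality survives the $\max\{\Nmax{i}{1},\cdot\}$ cap), giving $(\#\mathrm{succ})-(\#\mathrm{err})\le\log{T}$ with no probability whatsoever, hence $r_i(T)\le 1+2(\#\mathrm{err})+\log T\le 7\log T$. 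Two smaller slips: the lower bound $\Nmax{i}{1}\ge 10/K$ is computed in the wrong direction (lower-bounding $\Nmax{i}{1}=1280/(p_i^1\telta_i^2)$ requires the \emph{upper} bounds $p_i^1\le 1$ and $\telta_i\le 8$, giving $\Nmax{i}{1}\ge 20$), and the claim that a finished phase spans $\Nmax{i}{r}/p_i^r$ rounds presumes $S_i^r$ counts arm-$i$ pulls, whereas per the paper's notation table $S_i^r$ collects all rounds with sampling probability $p_i^r$ — but neither affects the corrected deterministic argument.
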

\begin{proof}
Every phase must finish with \Cref{alg:error phase} or \Cref{alg:success phase}. Let $r_1$ be the number of phases finished with \Cref{alg:error phase}. We have:
\begin{align*}
    \Nmax{i}{\ceil{7\logp{T}}} &\ge \Nmax{i}{1}2^{-r_1}2^{7\logp{T}-r_1} \\
    &= \Nmax{i}{1}2^{7\logp{T}-2r_1}
\end{align*}
From \Cref{alg:third switch}, $r_1 \le 3\logp{T}$. Thus:
\begin{align*}
    \Nmax{i}{\ceil{7\logp{T}}} \ge N_i^1T \ge T
\end{align*}
If the $7\logp{T}$'s round is reached and finished, then the horizon has arrived. Else, the algorithm will switch.
\end{proof}

\begin{lemma} \label{lem:estimator bound}

Let $i$ be some arm, and $S$ be a series of steps. Denote $p_{min} = \min_{s\in S}p_i(s)$. Then, if $\abs{S} \ge \frac{\logp{\frac1\delta}}{p_{min}}$, w.p $1-\delta$: 
\begin{align*}
     \max_k \bar{L}_{S_{:k}} - L_{S_{:k}} &\le 2\sqrt{\frac{\abs{S}\logp{\frac{1}{\delta}}}{p_{min}}} \\
     \max_k L_{S_{:k}} - \bar{L}_{S_{:k}} &\le 2\sqrt{\frac{\abs{S}\logp{\frac{1}{\delta}}}{p_{min}}}
\end{align*}
\end{lemma}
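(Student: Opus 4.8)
The plan is to apply Freedman's inequality (\Cref{lem:freedman}) to the martingale difference sequence given by the per-step discrepancies between the importance-sampling estimator and the true loss. Fix arm $i$ and the series $S = \series{s_1,\dots,s_n}$ with $n = \abs{S}$. For each $k$ define $X_k = \overline{l}_i(s_k) - l_i(s_k) = l_i(s_k)\roundy{\frac{\indicator{a_{s_k}=i}}{p_i(s_k)} - 1}$, adapted to the natural filtration in which $p_i(s_k)$ is measurable before the action $a_{s_k}$ is drawn. Then $\bbE[X_k \mid \calF_{k-1}] = 0$, so $\curly{X_k}$ is a martingale difference sequence, and $\bar L_{S_{:k}} - L_{S_{:k}} = \sum_{j=1}^k X_j$. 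The two bounds claimed are the two one-sided tail bounds for this sequence (the second follows by applying Freedman to $-X_k$), each with the maximum over prefixes $k$ handled directly by the "$\exists k$" form of \Cref{lem:freedman}.

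The two ingredients Freedman needs are an almost-sure upper bound $R$ on the increments and a bound $\sigma^2$ on the predictable quadratic variation. Since $l_i(s_k)\in[0,1]$ and $p_i(s_k)\ge p_{min}$, we have $X_k \le \frac{1}{p_i(s_k)} \le \frac{1}{p_{min}} =: R$ (and likewise $-X_k \le 1 \le R$). For the variance, $\bbE[X_k^2 \mid \calF_{k-1}] = l_i(s_k)^2\roundy{\frac{1}{p_i(s_k)} - 1} \le \frac{1}{p_i(s_k)} \le \frac{1}{p_{min}}$, so $\sum_{j=1}^k \bbE[X_j^2\mid\calF_{j-1}] \le \frac{n}{p_{min}} =: \sigma^2$ for every $k\le n$. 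Plugging $t = 2\sqrt{\frac{n\log(1/\delta)}{p_{min}}}$, $\sigma^2 = \frac{n}{p_{min}}$ and $R = \frac{1}{p_{min}}$ into the exponent $-\frac{t^2}{2\sigma^2 + 2Rt/3}$, the first term in the denominator gives $\frac{t^2}{4\sigma^2} = \log(1/\delta)$, and it remains to check that the second term $2Rt/3$ does not dominate — this is exactly where the hypothesis $\abs{S}\ge \frac{\log(1/\delta)}{p_{min}}$ enters, since it gives $t \ge 2\log(1/\delta)/p_{min} = 2R\log(1/\delta)$, hence $Rt \le t^2/(2\log(1/\delta))$ is comparable to $\sigma^2$ up to constants; a careful choice of the constant $2$ in $t$ (or a mild inflation of it) makes the whole exponent at most $-\log(1/\delta)$, yielding failure probability $\le \delta$. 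Taking a union over the two directions (and absorbing the factor $2$ into constants, or stating each bound with its own $\delta$) completes the argument.

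The only real obstacle is the bookkeeping in the last step: the $2Rt/3$ term in Freedman's denominator is genuinely present and must be controlled by the sample-size assumption, so one has to be slightly careful that the constant in front of $\sqrt{n\log(1/\delta)/p_{min}}$ is large enough that $2\sigma^2 + 2Rt/3 \le \frac{t^2}{\log(1/\delta)}$; everything else is a direct substitution. I would simply verify that with the stated constant $2$ and the assumption $n \ge \log(1/\delta)/p_{min}$ one gets $2Rt/3 = \frac{4}{3}\sqrt{\tfrac{n}{p_{min}}\log\tfrac1\delta}\cdot\sqrt{\tfrac{\log(1/\delta)}{n p_{min}}}\cdot\sqrt{n/p_{min}} \le \tfrac{4}{3}\cdot\tfrac{n}{p_{min}} $... rather, bound $Rt = \tfrac1{p_{min}}\cdot 2\sqrt{\tfrac{n\log(1/\delta)}{p_{min}}} \le 2\sqrt{\tfrac{n}{p_{min}}}\cdot\sqrt{\tfrac{n}{p_{min}}} = \tfrac{2n}{p_{min}} = 2\sigma^2$, using $\sqrt{\log(1/\delta)/p_{min}}\le\sqrt{n}$; then $2\sigma^2 + 2Rt/3 \le 2\sigma^2 + \tfrac43\sigma^2 \le 4\sigma^2$ and $t^2/(4\sigma^2) = \log(1/\delta)$ as needed, so the probability is at most $\delta$ — if a cleaner constant is wanted one inflates the leading $2$ slightly.
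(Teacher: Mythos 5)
Your proposal is correct and follows essentially the same route as the paper: both apply Freedman's inequality (\Cref{lem:freedman}) to the prefix sums of $\bar{l}_i - l_i$ with $\sigma^2 = \abs{S}/p_{min}$ and $R = 1/p_{min}$, plug in the threshold $t = 2\sqrt{\abs{S}\log(1/\delta)/p_{min}}$, and use the hypothesis $\abs{S} \ge \log(1/\delta)/p_{min}$ to bound the $Rt$ term in the denominator by a constant multiple of $\sigma^2$, giving exponent at most $-\log(1/\delta)$. Your final constant-checking computation matches the paper's ($2\sigma^2 + \tfrac{4}{3}\sigma^2 = \tfrac{10}{3}\sigma^2$, hence $-\tfrac{12}{10}\log(1/\delta) \le -\log(1/\delta)$), and your remark about the union bound over the two directions applies equally to the paper's own write-up.
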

\begin{proof}
\begin{align*}
    \E{\roundy{\overline{l}_i(t)}^2} = l_i(t)^2\frac{p_i(t)}{p_i(t)^2} \le \frac{1}{p_i(t)}
\end{align*}
Thus, for every $k$ the variance is bounded by:
\begin{align*}
    V\roundy{\bar{L}_{S_{:k}} - L_{S_{:k}}} \le \frac{\abs{{S_{:k}}}}{p_{min}} \le \frac{\abs{S}}{p_{min}}
\end{align*}

Using \Cref{lem:freedman} with $\sigma^2 = \frac{\abs{S}}{p_{min}}$ and $R=\frac{1}{p_{min}}$:
\begin{align*}
    \logp{\Pr\squary{\max_k \bar{L}_{S_{:k}} - L_{S_{:k}} \ge 2\sqrt{\frac{\abs{S}\logp{\frac{1}{\delta}}}{p_{min}}}}} \le -\frac{4\frac{\abs{S}\logp{\frac{1}{\delta}}}{p_{min}}}{2\frac{\abs{S}}{p_{min}} + \frac{4}{3}\frac{1}{p_{min}}\sqrt{\frac{\abs{S}\logp{\frac{1}{\delta}}}{p_{min}}}}
\end{align*}

Since $\abs{S} \ge \frac{\logp{\frac1\delta}}{p_{min}}$:
\begin{align*}
    \logp{\Pr\squary{\max_k \bar{L}_{S_{:k}} - L_{S_{:k}} \ge 2\sqrt{\frac{\abs{S}\logp{\frac{1}{\delta}}}{p_{min}}}}} \le -\frac{4\frac{\abs{S}\logp{\frac{1}{\delta}}}{p_{min}}}{2\frac{\abs{S}}{p_{min}} + \frac{4}{3}\frac{\abs{S}}{p_{min}}} \le -\logp{\frac{1}{\delta}}
\end{align*}

Which means that w.p $1-\delta$:
\begin{align*}
    \max_k \bar{L}_{S_{:k}} - L_{S_{:k}} \le 2\sqrt{\frac{\abs{S}\logp{\frac{1}{\delta}}}{p_{min}}}
\end{align*}
Which is exactly the first inequality in the lemma. The second has the same proof.
\end{proof}

\section{Stochastic}
\subsection{Good Event}
\begin{definition}
Let $G_{sto}$ be the event that:
\begin{align*}
    \forall i\in [K],n\le \abs{\tilde{S}_i} &\quad|\mu_i - \hat{\mu}_i(S_{:n}) | \le \width{i}{S_{:n}} \\
    \forall i\in [K],n\le \abs{\tilde{S}_i}&\quad \abs{\mu_i - \omu{i}{S_{:n}}} \le \owidth{S_{:n}} \\
    \forall i\in[K],n\le\abs{\tilde{S_i}},S\in\roundy{\curly{\roundy{\tilde{S}_i}_{:n}} \cup \curly{S_i^r|r\in [r_i(\bar{T})]}}&\quad \frac{1}{2}\sum_{s\in S}p_i(s) - 3\logp{T} \le n_i(S) \le 2\sum_{s\in S}p_i(s) + 12\logp{T} \\
    \forall i\in[K]&\quad E_i(\bar{T}) \le \E{E_i(\bar{T})} + \sqrt{7}\logp{T} \\
    \forall S \quad& \sum_{s\in S}[l_{a_s}(s) - l_{a^*}(s)] - \E{\sum_{s\in S}[l_{a_s}(s) - l_{a^*}(s)]} \le 3\sqrt{T\logp{T}}
\end{align*}
If the delays are stochastic, also:
\begin{align*}
    \sigma_{max} \le 2\E{d} + 8\logp{T}
\end{align*}
\end{definition}

\begin{lemma} \label{lem:avg bound}
For every state of $S$ and arm $i$, w.p $1-\frac{2}{T}$:
\begin{align*}
    |\mu_i - \hat{\mu}_i(S) | &\le \width{i}{S}
\end{align*}
\end{lemma}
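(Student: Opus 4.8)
The plan is to prove \Cref{lem:avg bound}, which is a standard concentration statement: for a fixed realization of the processed set $S$ and a fixed arm $i$, the empirical mean $\hat\mu_i(S)$ concentrates around $\mu_i$ within the width $\width{i}{S} = \min\{1,\sqrt{2\log T/n_i(S)}\}$. First I would observe that if $n_i(S)=0$ the statement is vacuous (or $\width{i}{S}=1$ dominates trivially since losses are in $[0,1]$), and if $\width{i}{S}=1$ the bound holds trivially because $|\mu_i-\hat\mu_i(S)|\le 1$. So we may assume $n_i(S)\ge 2\log T$ and $\width{i}{S}=\sqrt{2\log T/n_i(S)}$.

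Next, for the nontrivial case, condition on the value $n := n_i(S)$ of the number of pulls of arm $i$ in $S$. Since the losses $\{\ell_t(i)\}$ are i.i.d.\ from $\D_i$ with mean $\mu_i$ and are bounded in $[0,1]$, and (crucially) the loss values $\ell_s(i)$ are independent of the event $\{a_s=i\}$ and of which rounds end up in $S$, the collection $\{\ell_s(i): s\in S, a_s=i\}$ behaves as $n$ i.i.d.\ samples from $\D_i$. Hoeffding's inequality then gives $\Pr[|\mu_i - \hat\mu_i(S)| > \sqrt{2\log T/n}] \le 2\exp(-2n\cdot(2\log T/n)/ \text{(range}^2)) = 2\exp(-4\log T) \le 2/T$ (the range is $1$, giving exponent $-2 n \varepsilon^2$ with $\varepsilon^2 = 2\log T/n$, i.e.\ $-4\log T$; even with the cruder constant the bound is at most $2/T$). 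Summing/taking a union over the possible values of $n$ is not needed here since the lemma is stated for a \emph{fixed} state of $S$, hence a fixed $n$; the probability bound $1-2/T$ follows directly.

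The one subtlety I would be careful about — and this is the main obstacle — is the independence argument: $n_i(S)$ and the \emph{identities} of the rounds in $S$ where arm $i$ was played are themselves random and depend on past feedback (the algorithm's decisions are adaptive), so one cannot naively say ``$\hat\mu_i(S)$ is an average of $n$ fresh i.i.d.\ draws.'' The clean way around this is to note that the loss sequence $\{\ell_t(i)\}_{t=1}^T$ for arm $i$ is drawn i.i.d.\ and independently of everything the algorithm uses to decide \emph{when} to play arm $i$ (the algorithm only sees $\ell_s(a_s)$, and whether to count round $s$ in $S$ depends on delays and prior losses, not on the as-yet-unobserved future draws of $\D_i$). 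Formally, one can set up a martingale: order the rounds in $S$ and let $X_k = (\ell_{s_k}(i) - \mu_i)\indicator{a_{s_k}=i}$; then $\{X_k\}$ is a bounded martingale difference sequence with respect to the natural filtration, with conditional variance controlled, and $\sum_k X_k = n_i(S)(\hat\mu_i(S)-\mu_i)$ while $\sum_k \indicator{a_{s_k}=i} = n_i(S)$. Applying a Hoeffding/Azuma- or Freedman-type bound (e.g.\ \Cref{lem:freedman}) on this sequence — or more simply invoking a time-uniform Hoeffding bound over the sample count — yields that on the event $n_i(S)=n$, the deviation exceeds $\width{i}{S}$ with probability at most $2/T$. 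I would present this as: condition on the realized set $S$ (equivalently on all feedback up to the point $S$ is defined); the residuals $\ell_s(i)-\mu_i$ for the at most $|\tilde S_i|\le T$ relevant rounds remain i.i.d.\ mean-zero $[-1,1]$-bounded given this conditioning, so a standard maximal Hoeffding inequality over the first $n$ of them gives the claim, with $4\log T \ge 2\log(2/\delta)$ for $\delta = 2/T$ and $T$ large enough (otherwise $\width{i}{S}=1$ and the bound is trivial).
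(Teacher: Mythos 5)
Your argument is correct and is essentially the standard ``clean event'' proof that the paper simply outsources to a citation (its entire proof is ``Directly from Equation 1.6 of Slivkins''): pre-sample the reward tape so that the $n$ observed losses of arm $i$ are i.i.d.\ despite the adaptive choice of which rounds enter $S$, handle the $\width{i}{S}=1$ case trivially, and apply Hoeffding with $\varepsilon=\sqrt{2\log T/n}$ to get failure probability $2e^{-4\log T}=2/T^4$. The only nuance is your claim that no union bound over $n$ (and over arms) is needed: since $G_{sto}$ demands the inequality simultaneously for all $i\in[K]$ and all prefixes $n\le|\tilde S_i|$, and \Cref{cor:Gsto prob} charges this lemma only $2/T$ in its union bound, you do want to union over the at most $KT$ pairs $(i,n)$ --- which your $2/T^4$ per-event bound covers with room to spare, so this is a presentational slip rather than a gap.
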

\begin{proof}
Directly from Equation 1.6 of \citet{slivkins2024introductionmultiarmedbandits}.
\end{proof}

\begin{lemma} \label{lem:active estimator}
With probability $1-\frac{2}{T}$, For every arm $i$ and $n \le \abs{\tilde{S}_i}$:
\begin{align*}
    \abs{\omu{i}{S_{:n}} - \mu_i} \le \owidth{S_{:n}}
\end{align*}
\end{lemma}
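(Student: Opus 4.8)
The plan is to establish the two-sided bound arm-by-arm and prefix-by-prefix, by writing, for each $i$ and each $n\le|\tilde S_i|$,
\[
\omu{i}{S_{:n}} - \mu_i \;=\; \Bigl(\omu{i}{S_{:n}} - \tfrac1n L_i(S_{:n})\Bigr) \;+\; \Bigl(\tfrac1n L_i(S_{:n}) - \mu_i\Bigr),
\]
and controlling each summand by (essentially) half of the radius $\owidth{S_{:n}}$. Since the delays $\{d_t\}$ are chosen by an oblivious adversary, the set of the first $n$ processed rounds is independent of the loss realizations, so $\{\ell_i(s):s\in S_{:n}\}$ are $n$ i.i.d.\ samples from $\mathcal D_i$. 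Hence the second summand is an empirical mean of $n$ i.i.d.\ bounded samples and is controlled by Hoeffding's inequality (analogously to \Cref{lem:avg bound}): with probability $1-2T^{-4}$ for a fixed $n$, and thus $1-2T^{-3}$ uniformly over the at most $T$ prefixes and $1-2T^{-2}$ after a further union over the $K\le T$ arms, $|\tfrac1n L_i(S_{:n}) - \mu_i|\le\sqrt{2\log T/n}\le\owidth{S_{:n}}$.

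For the first summand $\tfrac1n\bigl|\overline L_i(S_{:n}) - L_i(S_{:n})\bigr|$ — the deviation of the importance-sampling sum from the true counterfactual loss — I would use that $\overline l_i(s)=\ell_i(s)\indicator{a_s=i}/p_i(s)$ is conditionally unbiased for $\ell_i(s)$ (the loss is drawn independently of the action and $\Pr[a_s=i\mid\calF_s]=p_i(s)$), so $\overline L_i(S_{:n})-L_i(S_{:n})$ is a martingale-difference sum. Its increments are bounded by $1/p_i(s)\le 2K$ — using that $p_i(s)\ge\tfrac1{2K}$ while arm $i$ is active — and their conditional second moments are at most $\mathbb E[\ell_i(s)^2]/p_i(s)\le 1/p_i(s)\le 2K$, so the cumulative conditional variance over any prefix of length $n$ is at most $2Kn$. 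Applying \Cref{lem:freedman} (equivalently, invoking \Cref{lem:estimator bound} with $p_{\min}\ge\tfrac1{2K}$) and peeling over dyadic ranges $n\in[2^m,2^{m+1})$ — on which the cumulative variance is at most $2K\cdot 2^{m+1}$ while the target radius $n\,\owidth{S_{:n}}=\sqrt{2Kn\log T}$ is at least $\sqrt{2K\cdot 2^m\log T}$ — and then union-bounding over the $O(\log T)$ ranges and the $K$ arms gives, with the desired probability after adjusting constants, $|\overline L_i(S_{:n})-L_i(S_{:n})|\le n\,\owidth{S_{:n}}$ for all $n\le|\tilde S_i|$. For short prefixes, where $\owidth$ is clipped to $1$, the same Freedman estimate (or a crude Chernoff bound on $n_i(S_{:n})$, which bounds $\omu{i}{S_{:n}}$) suffices. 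Combining the two bounds and rescaling the confidence levels yields the stated $1-\tfrac2T$ probability.

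The main obstacle is making the concentration of the importance-sampling estimator sharp enough: because the increments are only $O(K)$-bounded, the self-bounding (Bernstein) term in Freedman's inequality is the delicate part, and matching the resulting deviation to the radius $\owidth{S_{:n}}$ hinges crucially on the extra factor $K$ inside $\owidth$ together with the uniform lower bound $p_i(s)\ge\tfrac1{2K}$ on the pull probability of an active arm. Keeping the dyadic union bound and the constants arranged so that the total failure probability stays at most $\tfrac2T$, and handling the clipped short-prefix regime cleanly, is the part that requires the most care.
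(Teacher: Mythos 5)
Your proof follows essentially the same route as the paper's: both reduce the claim to the Freedman-type concentration of \Cref{lem:estimator bound} using a lower bound on the pull probability of an active arm (the paper asserts $p_i(s)\ge 1/K$, you the safer $1/(2K)$), and then union bound over arms and prefixes. If anything you are more explicit than the paper, which silently identifies the realized loss sum $L_i(S_{:n})$ with $n\mu_i$ and thus skips the Hoeffding step you spell out; note only that in both arguments the resulting constant (of order $\sqrt{12}$--$\sqrt{24}$ times $\sqrt{K\log{T}/n}$) does not actually fit inside $\owidth{S_{:n}}=\sqrt{2K\log{T}/n}$, and that the clipped regime $\owidth{S_{:n}}=1$ is not genuinely disposed of by either your ``crude Chernoff'' remark or the paper's ``trivial'' claim, since the importance-sampling average can spike to order $K$ on very short prefixes.
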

\begin{proof}
Denote $S = S_{:n}$ for brevity.

If $\abs{S} \le 3K\logp{T}$ then $\owidth{S} \ge 1$ and the bound is trivial.

Notice that since $n\le \abs{\tilde{S}_i}$, for $s\in S$ we have $p_i(s) \ge \frac{1}{K}$. If $\abs{S} \ge 3K\logp{T}$, w.p $1 - \frac{2}{T^3}$ from \Cref{lem:estimator bound}:
\begin{align*}
    \abs{S}\abs{\omu{i}{S} - \mu_i} &\le 2\sqrt{3K\abs{S}\logp{T}} \\
    \abs{\omu{i}{t} - \mu_i} &\le 2\sqrt{\frac{3K\logp{T}}{\abs{S}}}
\end{align*}
Union bound on all arms and $n\le \abs{\tilde{S}_i}$ we get the desired results.
\end{proof}

\begin{lemma} \label{lem:pulls bound}
With probability $1-\frac{2}{T}$
\begin{align*}
    \forall i\in[K],S\in\roundy{\curly{\roundy{\tilde{S}_i}_{:k}}}\lor S\in \curly{S_i^r|r\in r_i(\bar{T})}&\quad \frac{1}{2}\sum_{s\in S}p_i(s) - 3\logp{T} \le n_i(S) \le 2\sum_{s\in S}p_i(s) + 12\logp{T}
\end{align*}
\end{lemma}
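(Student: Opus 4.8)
\textbf{Proof plan for \Cref{lem:pulls bound}.}
The plan is to show that for each fixed arm $i$ and each relevant sequence $S$ (either a prefix $(\tilde S_i)_{:k}$ of the pre-elimination observation sequence, or a phase sequence $S_i^r$), the count $n_i(S)$ concentrates around $\sum_{s\in S}p_i(s)$, and then take a union bound over the (polynomially many) choices of $i$, $S$. The upper tail follows from \Cref{lem:cons-freedman} and the lower tail from \Cref{lem:dann}, applied to the Bernoulli indicators $X_s = \indicator{a_s = i}$ with the natural filtration $\calF_s$ (the history just before $s$ is processed), under which $\Pr[X_s = 1 \mid \calF_s] = p_i(s)$ is $\calF_s$-measurable. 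This is the key observation: conditioned on all past randomness, whether arm $i$ is pulled at step $s$ is an independent coin flip with the prescribed probability $p_i(s)$, so the sequence $\curly{X_s}$ over the steps in $S$ is exactly a sequence of the type handled by those two lemmas.

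Concretely, I would first fix $i$ and a sequence $S$ and enumerate its steps $s_1, s_2, \dots$ in processing order. Applying \Cref{lem:dann} with $\delta = 1/T^3$ gives $n_i(S) \ge \tfrac12\sum_{s\in S}p_i(s) - 3\log T$ simultaneously over all prefixes, hence in particular for $S$ itself, with probability $1 - 1/T^3$; this is the lower bound in the statement. For the upper bound, apply \Cref{lem:cons-freedman} with $R = 1$ and $\delta = 1/T^3$ to the same indicator sequence: this yields $n_i(S) \le 2\sum_{s\in S}\bbE[X_s\mid \calF_s] + 4\log T^3 = 2\sum_{s\in S}p_i(s) + 12\log T$ with probability $1 - 1/T^3$. (The constant $12 = 4\cdot 3$ matches the statement, and the $2$ matches the multiplicative constant in \Cref{lem:cons-freedman}.)

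Then I would union-bound. For a fixed arm there are at most $T$ prefixes $(\tilde S_i)_{:k}$ (since $\abs{\tilde S_i}\le T$) and at most $r_i(\bar T)\le 7\log T$ phase sequences by \Cref{lem:max phases}, so at most $2T$ sequences per arm, hence at most $2KT \le 2T^2$ sequences in total. Each contributes a failure probability of at most $2/T^3$ (one for each tail), so the total failure probability is at most $4T^2/T^3 = 4/T \le$ (a suitably adjusted constant, or with slightly sharper bookkeeping $2/T$ as stated by choosing $\delta$ a bit smaller, e.g. $\delta = 1/(8T^3)$, which only changes the constants $3$ and $12$ by an absolute factor that can be absorbed). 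I would phrase the final union bound so the stated probability $1 - 2/T$ holds; the only care needed is to pick the per-event $\delta$ small enough that $KT \cdot \delta \le 1/T$, which costs at most a constant factor in the $\log T$ terms.

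The main obstacle — really the only non-routine point — is verifying the measurability/adaptedness hypotheses of \Cref{lem:dann} and \Cref{lem:cons-freedman}, i.e.\ that $p_i(s)$ is determined by the history available before the coin flip at step $s$ and that this coin flip is conditionally independent of the past given that history. This requires noting that the processing order of $S$ (the order in which rounds enter $S$, or the order of samples within a phase) is a valid filtration: each $p_i(s)$ is computed by the algorithm from feedback observed strictly before $s$ is played (the EAP probabilities depend only on already-processed banked samples, and the active-set probability depends only on $S$ up to that point), so it is indeed predictable with respect to the step-$s$ filtration. Once this is set up, the two concentration lemmas apply verbatim and the rest is the union bound above.
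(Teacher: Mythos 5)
Your proposal matches the paper's proof essentially exactly: the paper also fixes $i$ and $S$, applies \Cref{lem:dann} for the lower tail and \Cref{lem:cons-freedman} (with $R=1$, $\delta = 1/T^3$) for the upper tail to the Bernoulli indicators $\indicator{a_s=i}$, and then union-bounds over all choices of $i$ and $S$. Your extra care about the adaptedness of $p_i(s)$ and the exact union-bound count (where the paper is slightly loose on the final constant, as you note) only makes the argument more complete.
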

\begin{proof}
Notice that $n_i(S)$ is a sum of bernoulli variables. Fix $S$ and $i$, from \Cref{lem:dann,lem:cons-freedman}, w.p $1-\frac{2}{T^3}$:
\begin{align*}
    \frac{1}{2}\sum_{s\in S}p_i(s) - 3\logp{T} \le n_i(t) \le 2\sum_{s\in S}p_i(s) + 12\logp{T}
\end{align*}
Union bound for all the options for $S$ and $i$ gives us the desired results. 
\end{proof}

\begin{lemma} \label{lem:error prob}
For every arm, w.p $1-\frac{1}{T}$:
\begin{align*}
    E_i(\bar{T}) \le \E{E_i(\bar{T})} + \sqrt{7}\logp{T}
\end{align*}
\end{lemma}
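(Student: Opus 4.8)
The plan is to realize $E_i(\bar T)$ as a sum of at most logarithmically many conditionally-Bernoulli increments — one per phase of arm $i$ — and concentrate it with Freedman's inequality. For $r\ge 1$ let $X_r=\indicator{\text{the }r\text{-th phase of arm }i\text{ terminates through the phase-error branch of \texttt{EAP} before time }\bar T}$, with the convention $X_r=0$ if arm $i$ never starts an $r$-th phase. Since the counter $E_i$ is incremented by exactly one whenever that branch is taken, $E_i(\bar T)=\sum_{r\ge 1}X_r$; and by \Cref{lem:max phases} arm $i$ opens at most $N:=\ceil{7\log{T}}$ phases (deterministically, because the algorithm switches as soon as $E_i$ reaches $3\log{T}$), so $E_i(\bar T)=\sum_{r=1}^{N}X_r$. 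Let $\calF_r$ be the history available when arm $i$'s $r$-th phase begins (extended trivially once no more phases occur) and put $q_r:=\Pr\squary{X_r=1\mid \calF_{r-1}}$, which is $\calF_{r-1}$-measurable. Then $Y_r:=X_r-q_r$ is a martingale difference sequence with $\abs{Y_r}\le 1$ and $\E{Y_r^2\mid\calF_{r-1}}=q_r(1-q_r)\le \tfrac14$.

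Because there are at most $N$ phases, $\sum_{r=1}^{N}\E{Y_r^2\mid \calF_{r-1}}\le N/4$, a \emph{deterministic} bound. Apply \Cref{lem:freedman} to $\curly{Y_r}$ with $R=1$, $\sigma^2=N/4$ and $t=\sqrt{7}\log{T}$: since $t^2=7\log^2 T$ while $2\sigma^2+2Rt/3 = N/2+\tfrac{2}{3}\sqrt{7}\,\log{T}\le \roundy{4+\tfrac23\sqrt{7}}\log{T}$ for $T$ above an absolute constant, the tail probability is at most $\exp\roundy{-\tfrac{7}{4+2\sqrt{7}/3}\log{T}}\le 1/T$ (the exponent coefficient exceeds $1$). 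Hence, with probability at least $1-1/T$, $\sum_{r=1}^{N}Y_r\le \sqrt{7}\log{T}$, i.e.\ $E_i(\bar T)\le \sum_{r=1}^{N}q_r+\sqrt{7}\log{T}$. Finally, taking expectations in $E_i(\bar T)=\sum_r X_r$ and using the tower rule gives $\E{E_i(\bar T)}=\E{\sum_{r=1}^{N}q_r}$, so $\sum_{r=1}^{N}q_r$ is exactly the history-conditional expected number of phase errors; substituting it for the right-hand side's $\E{E_i(\bar T)}$ (or invoking optional stopping to pass from the bounded stopping time $r_i(\bar T)\le N$ to the unconditional mean) yields the claim.

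The only place where care is needed — the main obstacle — is that the phase structure is endogenous: the maximal phase lengths $\Nmax{i}{r}$ and probabilities $p_i^{r}$, hence the per-phase error probabilities $q_r$, evolve in a history-dependent way, and the number of phases $r_i(\bar T)$ is itself a (bounded) stopping time. The two ingredients above are precisely what defuses this: \Cref{lem:max phases} supplies a deterministic cap on the number of increments, so Freedman's variance proxy $\sigma^2$ can be taken deterministic; and centering each $X_r$ by its conditional probability $q_r$ rather than by any a priori constant is what makes $\curly{Y_r}$ a genuine martingale difference sequence, with the padding $X_r=q_r=0$ past the last phase handling the random stopping time painlessly. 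Everything else is the routine Freedman estimate carried out above.
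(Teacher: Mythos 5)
Your proof is correct and follows essentially the same route as the paper's: both concentrate $E_i(\bar{T})$ around its mean using the deterministic cap $r_i(\bar{T})\le 7\log{T}$ from \Cref{lem:max phases}, the paper simply invoking ``Hoeffding'' where you spell out the Azuma--Freedman martingale details. The one caveat — which the paper's one-line proof shares — is that a martingale bound controls the deviation from the sum of conditional error probabilities $\sum_r q_r$ rather than from $\E{E_i(\bar{T})}$ literally; this is harmless here because the downstream use in \Cref{lem:switch3} derives its bound on $\E{E_i(\bar{T})}$ precisely from the per-phase conditional probability $q_r\le 1/32$, i.e.\ it is really $\sum_r q_r$ that gets bounded.
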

\begin{proof}
    Using Hoeffding, with probability $1 - \frac{1}{T^2}$:
\begin{align*}
    E_i(\bar{T}) \le \E{E_i(\bar{T})} + \sqrt{\frac{1}{2}r_i(\bar{T})\logp{T^2}}
\end{align*}
From \Cref{lem:max phases}:
\begin{align*}
    E_i(\bar{T}) \le \E{E_i(\bar{T})} + \sqrt{7}\logp{T}
\end{align*}
Union bound for all arms concludes the proof.
\end{proof}

\begin{lemma} \label{lem:pseudo to regret}
For every state of $S$, w.p $1-\frac{1}{T}$:
\begin{align*}
    \sum_{s\in S}[l_{a_s}(s) - l_{a^*}(s)] - \E{\sum_{s\in S}[l_{a_s}(s) - l_{a^*}(s)]} \le 3\sqrt{T\logp{T}}
\end{align*}
\end{lemma}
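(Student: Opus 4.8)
This is a martingale concentration bound, and the only mildly delicate point is that the processed sequence $S$ is itself random, so the statement is really a uniform (over all realizations of $S$) bound that requires a union bound. I would first argue that $S$ ranges over at most $T+1$ \emph{deterministic} values. Since the delays $\curly{d_t}$ are fixed by an oblivious adversary, the observed set $B(t)=\curly{s:s+d_s<t}$ is a deterministic function of $t$, and as long as the algorithm has not switched to \texttt{ALG} the inner loop of \Cref{alg:main} raises $S$ up to $B(t)$, processing the new rounds in a fixed order; hence every value $S$ ever attains is a prefix (in that fixed order) of the full processed sequence, and there are at most $T+1$ such prefixes. It therefore suffices to prove, for each such fixed set $S\subseteq[T]$, that the inequality holds with probability at least $1-1/T^2$, and then union-bound over the $\le T+1$ relevant sets (the small-$T$ case is trivial since $\sum_{s\in S}\squary{l_{a_s}(s)-l_{a^*}(s)}\le\abs S\le T$ while the expected regret over $S$ is nonnegative, so for $T$ below an absolute constant $3\sqrt{T\log T}\ge T$ already suffices).

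Fix such an $S$. Let $\mathcal{G}_t$ denote the $\sigma$-algebra generated by the algorithm's internal randomness and all losses of rounds $1,\dots,t$, so $(\mathcal{G}_t)_{t\le T}$ is a filtration, $p(t)$ is $\mathcal{G}_{t-1}$-measurable, and $X_t:=l_{a_t}(t)-l_{a^*}(t)$ is $\mathcal{G}_t$-measurable (here $a^*$ is a fixed arm). Define $D_t:=\roundy{X_t-\bbE\squary{X_t\mid\mathcal{G}_{t-1}}}\indicator{t\in S}$. Because $S$ is now a fixed set, the indicator is a constant, so $(D_t)_{t\le T}$ is a martingale difference sequence adapted to $(\mathcal{G}_t)$, with $\abs{D_t}\le 2$ (as $X_t\in[-1,1]$) and accumulated conditional variance $\sum_{t=1}^T\bbE\squary{D_t^2\mid\mathcal{G}_{t-1}}\le\sum_{t\in S}\bbE\squary{X_t^2\mid\mathcal{G}_{t-1}}\le\abs S\le T$. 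Moreover $\sum_{t=1}^T D_t=\sum_{s\in S}\roundy{X_s-\bbE\squary{X_s\mid\mathcal{G}_{s-1}}}$, and $\sum_{s\in S}\bbE\squary{X_s\mid\mathcal{G}_{s-1}}$ is exactly the expected (pseudo-)regret over $S$ appearing in the statement — it is the predictable compensator of $\sum_{s\in S}X_s$, which elsewhere in the analysis is controlled through the pseudo-regret bound.

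Finally I would apply \Cref{lem:freedman} with $R=2$, $\sigma^2=T$ (a valid deterministic bound on the accumulated conditional variance), and $t=3\sqrt{T\log T}$: the event $\sum_{t=1}^T D_t\ge 3\sqrt{T\log T}$ is contained in the event of the lemma, hence has probability at most $\exp\!\roundy{-\frac{9T\log T}{2T+4\sqrt{T\log T}}}=O(T^{-2})$, since the chosen threshold makes the Freedman exponent $\Omega(\log T)$. A union bound over the $\le T+1$ choices of $S$ then gives total failure probability at most $1/T$, as claimed.

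The only nonroutine step is the first one: recognizing that the union bound must range over the random set $S$, and justifying that obliviousness of the delays confines $S$ to polynomially many deterministic sets, so the union bound costs only the extra $\log T$ inside the square root. Everything afterwards is a direct invocation of Freedman's inequality with the crude variance proxy $\sigma^2=T$ and the bounded-increment estimate $\abs{D_t}\le 2$.
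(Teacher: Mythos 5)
Your proof is correct, but it handles the crux of the lemma --- the uniformity over the random state of $S$ --- by a different mechanism than the paper. The paper makes a single application of \Cref{lem:freedman} to the one adapted sequence of processed rounds: since every state of $S$ is a prefix $S_{:k}$ of that sequence, the ``$\exists k$'' built into Freedman's maximal inequality already covers all states simultaneously, with no union bound and with failure probability $1/T$ at the threshold $3\sqrt{T\log{T}}$ (using $R=1$, $\sigma^2=T$). You instead observe that, because the delays are oblivious, the processed sequence and hence all of its prefixes are \emph{deterministic} sets, of which there are at most $T+1$, and you union-bound a per-prefix Freedman bound over them. Both routes work and land on the same threshold; yours is more elementary (it would go through with plain Azuma--Hoeffding per prefix) but pays for the union bound in the exponent --- with your constants the total failure probability is about $(T+1)/T^{2}\le 2/T$ rather than $1/T$, a harmless slack --- and it genuinely relies on obliviousness of the delays to make the prefixes deterministic, an assumption the paper's maximal-inequality argument does not need. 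One imprecision worth flagging, though it is shared with the paper: writing $X_s = l_{a_s}(s)-l_{a^*}(s)$, your martingale controls the deviation from the predictable compensator $\sum_{s\in S}\E{X_s\mid\mathcal{G}_{s-1}}$, whereas the lemma centers by the unconditional expectation $\E{\sum_{s\in S}X_s}$; these are not the same object (the compensator is random), so identifying them ``exactly'' is not literally right. The paper's proof elides the same distinction, and the downstream use in \Cref{lem:switch2} only needs an upper bound on $\sum_{s\in S}X_s$ by a quantity controlled through the pseudo-regret analysis, for which either centering suffices, so this does not break the argument.
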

\begin{proof}
From \Cref{lem:freedman}, with $\sigma^2 = T$ and $R = 1$:
\begin{align*}
    \Pr\squary{\exists S: \sum_{s\in S}[l_{a_s}(s) - l_{a^*}(s)] - \E{\sum_{s\in S}[l_{a_s}(s) - l_{a^*}(s)]} \ge 3\sqrt{T\logp{T}}} &\le exp\roundy{-\frac{9T\logp{T}}{2T + 2\sqrt{T\logp{T}}}} \\
    &\le exp\roundy{-\logp{T}} = \frac{1}{T}
\end{align*}
\end{proof}

\begin{lemma} \label{lem:missing to delay mean}
If the delays are stochastic, with probability $1 - \frac{1}{T}$:
\begin{align*}
    \sigma_{max} \le 2\E{d} + 8\logp{T}
\end{align*}
\end{lemma}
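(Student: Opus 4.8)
The plan is to fix a round $t$, observe that $\sigma(t)$ is a sum of independent Bernoulli random variables whose mean is at most the mean delay, apply Freedman's inequality (\Cref{lem:freedman}) to control its upper tail, and finish with a union bound over $t\in[T]$. Write $d$ for a random variable with the common (stochastic) delay distribution, so $\E{d}$ is the mean delay. By definition $\sigma(t) = \sum_{\tau=1}^{t}\indicator{d_\tau > t-\tau}$; each summand involves a distinct delay variable $d_\tau$, so (the delays being i.i.d.) the summands are independent, and $\bbE\,\indicator{d_\tau > t-\tau} = \Pr[d>t-\tau]$. Reindexing by $j=t-\tau$ and using the tail-sum identity $\E{d}=\sum_{j\ge 0}\Pr[d>j]$ for a nonnegative integer variable gives $\E{\sigma(t)} = \sum_{j=0}^{t-1}\Pr[d>j] \le \E{d}$.

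Next I would set $X_\tau = \indicator{d_\tau>t-\tau} - \Pr[d>t-\tau]$, a bounded ($X_\tau\le 1$), independent, mean-zero sequence whose total second moment satisfies $\sum_\tau \bbE\,X_\tau^2 \le \sum_\tau \Pr[d>t-\tau] \le \E{d}$. Applying \Cref{lem:freedman} with $R=1$, variance bound $\sigma^2=\E{d}$ (the quadratic variation of every prefix being at most $\sigma^2$), and threshold $z:=\E{d}+8\log{T}$, and using $\sigma(t)\le \E{d}+\sum_\tau X_\tau$, the event $\{\sigma(t)\ge 2\E{d}+8\log{T}\}$ is contained in $\{\sum_\tau X_\tau\ge z\}$, which has probability at most $\exp\!\roundy{-\tfrac{z^2}{2\sigma^2+2z/3}}$. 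Since $\E{d}\le z$, the denominator is at most $\tfrac83 z$, so the exponent is at least $\tfrac38 z \ge 3\log{T}$, giving a per-round failure probability at most $T^{-3}$. A union bound over the $T$ rounds then yields $\Pr[\sigma_{max}\ge 2\E{d}+8\log{T}]\le T^{-2}\le T^{-1}$, which is the claim.

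The one place that needs care is the concentration step: because $\E{d}$ (hence $\E{\sigma(t)}$) may be as small as a constant, a purely multiplicative Chernoff bound on $\sigma(t)$ would give no useful tail bound, so it is essential to use the Bernstein/Freedman form in which the additive slack $z$ enters the exponent linearly. Choosing $z$ to dominate both $\E{d}$ and $\log{T}$ is exactly what makes the denominator $2\sigma^2+2z/3$ controllable by a constant multiple of $z$ and recovers the $3\log{T}$ in the exponent; everything else is routine bookkeeping.
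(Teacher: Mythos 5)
Your proof is correct and follows essentially the same route as the paper: bound $\E{\sigma(t)}\le\E{d}$ via the tail-sum identity, concentrate $\sigma(t)$ around its mean with an additive $O(\log T)$ slack, and union bound over $t$. The only cosmetic difference is that you apply Freedman's inequality (\Cref{lem:freedman}) directly to the centered indicators, whereas the paper invokes the packaged consequence in \Cref{lem:cons-freedman}; both yield the same bound with the same constants.
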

\begin{proof}
For every $t\le T$:
\begin{align*}
    \E{\sigma(t)} &= \sum_{s=1}^t \Pr[d > t-s]\\
    &= \sum_{s=1}^t \sum_{i=t-s+1}^{\infty} \Pr[d=i] 
    \\
    &\le \sum_{i=1}^\infty i\Pr[d=i]\\
    &= \E{d}
\end{align*}

Fix some $t \le T$. From \Cref{lem:cons-freedman}, w.p $1 - \frac{1}{T^2}$:
\begin{align*}
    \sigma(t) \le 2\E{d} + 8\logp{T}
\end{align*}

Union bound for all $t$ concludes the proof.
\end{proof}

\begin{corollary} \label{cor:Gsto prob}
$G_{sto}$ happens w.p $1-\frac{9}{T}$
\end{corollary}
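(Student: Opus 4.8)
The plan is a direct union bound. By its definition, $G_{sto}$ is the intersection of six events, each of which has already been established with high probability in the preceding lemmas: the average-confidence event of \Cref{lem:avg bound} (failure probability $2/T$), the importance-sampling estimator concentration of \Cref{lem:active estimator} ($2/T$), the pull-count concentration of \Cref{lem:pulls bound} ($2/T$), the phase-error concentration of \Cref{lem:error prob} ($1/T$), the pseudo-regret-to-regret concentration of \Cref{lem:pseudo to regret} ($1/T$), and—only when the delays are stochastic—the missing-feedback bound of \Cref{lem:missing to delay mean} ($1/T$). Summing the failure probabilities gives $2/T+2/T+2/T+1/T+1/T+1/T = 9/T$, so the union bound yields that $G_{sto}$ holds with probability at least $1-9/T$.

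The one subtlety worth flagging is that each conjunct in the definition of $G_{sto}$ already carries its own internal quantifiers (over arms $i$, over prefix lengths $n\le|\tilde S_i|$, and over the relevant families of processed sets $S$), and these are precisely the quantifiers under which the corresponding lemma is proved: \Cref{lem:avg bound} is an anytime confidence bound, hence simultaneous over all $n$, and the proofs of \Cref{lem:active estimator,lem:pulls bound} already perform their own union bounds over $i$ and over $S$. In particular, the ``for every state of $S$'' versions in \Cref{lem:avg bound,lem:pseudo to regret} are the uniform versions obtained from a martingale maximal inequality, not per-$S$ statements that would require a further (exponentially large) union. Consequently no union bound beyond the six listed lemmas is needed.

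There is no genuine obstacle here; the step is pure bookkeeping. The only things to verify are that the six failure probabilities indeed sum to $9/T$, as above, and that \Cref{lem:missing to delay mean} is invoked only under the stochastic-delay hypothesis, matching the conditional clause in the definition of $G_{sto}$.
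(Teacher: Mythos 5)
Your proof is correct and is exactly the paper's argument: a union bound over the six lemmas establishing the conjuncts of $G_{sto}$, with failure probabilities $2/T+2/T+2/T+1/T+1/T+1/T=9/T$. The additional remarks about the internal quantifiers being absorbed into each lemma's own union bound are accurate but not needed beyond what the paper states.
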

\begin{proof}
Union bound of \Cref{lem:active estimator,lem:avg bound,lem:pulls bound,lem:error prob,lem:pseudo to regret,lem:missing to delay mean}
\end{proof}

\subsection{Regret Analysis}

\begin{lemma} \label{lem:optimal not eliminated}
Assume $G_{sto}$, the optimal arm $i^*$ will not be evicted.
\end{lemma}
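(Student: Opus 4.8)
The plan is to show that the optimal arm $i^*$ survives every elimination check under the good event $G_{sto}$. Recall that the elimination rule in \cref{alg line: elimination} removes an arm $i\in\mathcal{A}$ only when $\hat\mu_i(S) - 9\,\width{i}{S} > \sucb{S}$. So it suffices to prove that at every processed step $S$ (with $S = (\tilde S_{i^*})_{:n}$ for some $n$, i.e.\ while $i^*$ is still active), we have $\hat\mu_{i^*}(S) - 9\,\width{i^*}{S} \le \sucb{S}$. In fact I expect the much stronger bound $\hat\mu_{i^*}(S) - \width{i^*}{S} \le \sucb{S}$ to hold, which trivially implies the above.

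First I would unpack $\sucb{S} = \min_i\{\ucb{i}{S}, \oucb{i}{S}\}$, so it is enough to lower bound both $\ucb{i}{S}$ and $\oucb{i}{S}$ for each arm $i$ by something at least $\hat\mu_{i^*}(S) - \width{i^*}{S}$. Using the first two lines of $G_{sto}$ (the empirical-mean and importance-sampling concentration bounds), for the specific arm $i = i^*$ we get $\hat\mu_{i^*}(S) - \width{i^*}{S} \le \mu_{i^*}$ and, by an easy induction on the prefix length using the monotone (running-min) definitions of $\ucb{}{}$ and $\oucb{}{}$, that $\mu_{i^*} \le \ucb{i^*}{S}$ and $\mu_{i^*} \le \oucb{i^*}{S}$: at each prefix either the fresh bound $\hat\mu_{i^*} + \width{i^*}{}$ (resp.\ $\omu{i^*}{} + \owidth{}{}$) is the active one, and it dominates $\mu_{i^*}$ by concentration, or the previous running minimum is active, which dominates $\mu_{i^*}$ by the inductive hypothesis. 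Then I would observe that for \emph{every} arm $j$, by optimality of $i^*$ we have $\mu_{i^*} \le \mu_j$, and repeating the same induction-with-concentration argument for arm $j$ (noting $j$ is active along the relevant prefixes, or we only need the chain up to where $j$ was active — actually the confidence bounds of $j$ continue to be defined and the running-min argument still gives $\mu_j \le \ucb{j}{S}$ and $\mu_j \le \oucb{j}{S}$ under $G_{sto}$'s concentration guarantees for $n \le |\tilde S_j|$), hence $\mu_{i^*}\le \mu_j \le \min\{\ucb{j}{S}, \oucb{j}{S}\}$. Taking the minimum over $j$ gives $\mu_{i^*} \le \sucb{S}$, and combining with $\hat\mu_{i^*}(S) - \width{i^*}{S} \le \mu_{i^*}$ finishes the argument; the factor $9$ in the actual elimination rule gives extra slack.

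The main subtlety — and the step I would be most careful about — is the induction establishing $\mu_j \le \ucb{j}{S}$ and $\mu_j \le \oucb{j}{S}$ for the prefixes that matter, since the running-minimum definitions $\ucb{j}{S} = \min\{\hat\mu_j(S) + \width{j}{S}, \ucb{j}{S_{:-1}}\}$ couple a given prefix to all shorter ones; one must check the base case ($S$ empty or the width is $1$, where the bounds are vacuous since losses lie in $[0,1]$) and that the concentration events in $G_{sto}$ are quantified over exactly the prefix lengths $n \le |\tilde S_j|$ needed. A second point to handle cleanly is which arms $j$ we are allowed to range over when $i^*$ is being considered for elimination at step $S$: some $j$ may already have been eliminated, but $\tilde S_j \subseteq S$ is then a prefix, and $G_{sto}$ still controls $\oucb{j}{\cdot}$ and $\ucb{j}{\cdot}$ on all of its prefixes, so $\mu_j \le \min\{\ucb{j}{S}, \oucb{j}{S}\}$ still holds by the running-min property (the bound only decreases as we extend $S$ past $\tilde S_j$, but it stays $\ge \mu_j$ as long as... actually once $j$ is eliminated its confidence bounds stop updating on new pulls of $j$, but the running-min is taken over the empirical stats which no longer change much; I would state this as: the relevant inequality $\mu_j \le \sucb{S}$ needs $\mu_j \le \oucb{j}{S}$ which holds because $\oucb{j}{S} = \oucb{j}{\tilde S_j}$ once $j$ is inactive and $G_{sto}$ bounds the latter). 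Modulo these bookkeeping details, the proof is short: it is the monotonicity of the confidence bounds plus concentration, with the optimal arm's mean squeezed between its own lower confidence bound and every arm's upper confidence bound.
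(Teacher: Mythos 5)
Your proposal is correct and follows essentially the same route as the paper: the paper argues by contradiction that eviction of $i^*$ would force $\sucb{S} < \mu^*$, while the arm attaining the minimum in $\sucb{S}$ satisfies $\mu_i \le \sucb{S}$ under $G_{sto}$, contradicting optimality — which is exactly your direct chain $\hat\mu_{i^*}(S) - 9\width{i^*}{S} \le \mu_{i^*} \le \mu_j \le \min\{\ucb{j}{S},\oucb{j}{S}\}$ read in contrapositive. Your extra care about the running-min definitions and about arms $j$ whose statistics freeze after their own elimination is bookkeeping the paper elides, but it does not change the argument.
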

\begin{proof}
Assume by contradiction that arm $i^*$ was evicted. Namely, there is $S$ such that:
\begin{align*}
    \text{ucb}^*(S) < \hat{\mu}_{i^*}(S) - 9\width{i^*}{S}
\end{align*}
From $G_{sto}$:
\begin{align*}
    \sucb{S} < \mu_{i^*}(S) - 9\width{i^*}{S} < \mu^*
\end{align*}
From the definition of $\text{ucb}^*$ and $G_{sto}$, there is an arm $i$ such that:
\begin{align*}
    \mu_i \le \sucb{S} < \mu^*
\end{align*}
Which contradicts the fact that $i^*$ is optimal.
\end{proof}

\begin{lemma} \label{lem:delta bound}
Assume $G_{sto}$, For every two arms $i_1,i_2$ and series $n \le \min\curly{\abs{\tilde{S}_{i_1}}, \abs{\tilde{S}_{i_2}}}$:
\begin{align*}
    \width{i_1}{S_{:n}} \le 10\width{i_2}{S_{:n}}
\end{align*}

Additionally, for every arm $i$:
\begin{align*}
    \telta_i \le \Delta_i \le 2\telta_i
\end{align*}
\end{lemma}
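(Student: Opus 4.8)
The plan is to establish the width comparison first and then feed it into the gap estimate, the backbone being the $G_{sto}$ concentration applied to the (non-)triggering of the elimination rule. For the width comparison, the key observation is that whenever two arms are both active they are sampled with \emph{identical} probability (the uniform-over-$\mathcal{A}(t)$ rule), so over any common prefix of processed rounds their expected pull counts coincide, and $G_{sto}$ then forces the realized counts — hence the widths — to agree up to a constant. Concretely: since a round's feedback cannot be processed before that round is played, every round of $\tilde S_{i_1}$ (resp.\ $\tilde S_{i_2}$) was played strictly before $\tau_{i_1}$ (resp.\ $\tau_{i_2}$), hence while the corresponding arm is still active; as $n\le\min\{\abs{\tilde S_{i_1}},\abs{\tilde S_{i_2}}\}$ and the processed series is never reordered, $S_{:n}=(\tilde S_{i_1})_{:n}=(\tilde S_{i_2})_{:n}$, so for every $s\in S_{:n}$ both arms are active and $p_{i_1}(s)=p_{i_2}(s)$; set $\Sigma:=\sum_{s\in S_{:n}}p_{i_1}(s)=\sum_{s\in S_{:n}}p_{i_2}(s)$. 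The pull-count inequalities of $G_{sto}$ (applicable since $S_{:n}$ is a prefix of both $\tilde S_{i_1}$ and $\tilde S_{i_2}$) give $n_{i_2}(S_{:n})\le 2\Sigma+12\log{T}$ and $\Sigma\le 2n_{i_1}(S_{:n})+6\log{T}$, hence $n_{i_2}(S_{:n})\le 4n_{i_1}(S_{:n})+24\log{T}$. If $n_{i_1}(S_{:n})\ge 24\log{T}$ then $n_{i_2}(S_{:n})\le 5n_{i_1}(S_{:n})$, so $\width{i_1}{S_{:n}}\le\sqrt5\,\min\{1,\sqrt{2\log{T}/n_{i_2}(S_{:n})}\}\le 10\width{i_2}{S_{:n}}$; otherwise $\Sigma<54\log{T}$, so $n_{i_2}(S_{:n})<120\log{T}$, giving $\width{i_2}{S_{:n}}\ge 1/\sqrt{60}$ and thus $10\width{i_2}{S_{:n}}>1\ge\width{i_1}{S_{:n}}$. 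Either way $\width{i_1}{S_{:n}}\le 10\width{i_2}{S_{:n}}$.

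For the gap estimate, recall $\telta_i=8\width{i}{\tilde S_i}$, and that under $G_{sto}$ every empirical mean lies within its width of the true mean and $\sucb{\cdot}\ge\mu^*$ (the fact behind Lemma~\ref{lem:optimal not eliminated}). \emph{Lower bound.} At the elimination of $i$ (i.e.\ $S=\tilde S_i$) the rule fired: $\hat\mu_i(\tilde S_i)-9\width{i}{\tilde S_i}>\sucb{\tilde S_i}\ge\mu^*$; with $\hat\mu_i(\tilde S_i)\le\mu_i+\width{i}{\tilde S_i}$ this yields $\Delta_i=\mu_i-\mu^*>8\width{i}{\tilde S_i}=\telta_i$. \emph{Upper bound.} We may assume $\telta_i<1/2$ (else $\Delta_i\le 1\le 2\telta_i$), i.e.\ $\width{i}{\tilde S_i}<1/16$ and $n_i(\tilde S_i)>512\log{T}$. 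Put $S':=(\tilde S_i)_{:-1}$; arm $i$ is active at $S'$, so the check was run and failed there: $\hat\mu_i(S')-9\width{i}{S'}\le\sucb{S'}\le\ucb{i^*}{S'}\le\hat\mu_{i^*}(S')+\width{i^*}{S'}\le\mu^*+2\width{i^*}{S'}$, using $G_{sto}$ for $i^*$. Combined with $\hat\mu_i(S')\ge\mu_i-\width{i}{S'}$ this gives $\Delta_i\le 10\width{i}{S'}+2\width{i^*}{S'}$. Finally, $n_i(S')\ge n_i(\tilde S_i)-1>511\log{T}$ so the widths are uncapped and $\width{i}{S'}\le\sqrt{512/511}\,\width{i}{\tilde S_i}\le 1.01\,\width{i}{\tilde S_i}$, while the count comparison of the first part applied to the active arms $i,i^*$ over $S'$ (where $n_i(S')$ is large) gives $n_{i^*}(S')\ge\tfrac15 n_i(S')$, hence $\width{i^*}{S'}\le\sqrt5\,\width{i}{S'}$. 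Therefore $\Delta_i\le(10+2\sqrt5)\width{i}{S'}\le(10+2\sqrt5)(1.01)\,\width{i}{\tilde S_i}<16\,\width{i}{\tilde S_i}=2\telta_i$.

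Everything besides the gap estimate is routine substitution of concentration into (non-)elimination inequalities; the delicate point is the constant bookkeeping in the upper bound on $\Delta_i$. The crude factor $10$ from the first part is far too lossy there, so one must re-derive a near-tight comparison $\width{i^*}{S'}\lesssim\sqrt5\,\width{i}{S'}$ directly from the two active arms carrying the \emph{same} sampling probability on $S'$, and at the same time exploit the reduction to the regime $n_i(\tilde S_i)\gg\log{T}$ so that deleting the last processed round changes $\width{i}{\cdot}$ by only a $1+o(1)$ factor; both are needed to land within the stated factor of $2$. One should also verify that the clause of $G_{sto}$ guaranteeing $\sucb{\cdot}\ge\mu^*$ covers every prefix used above, exactly as in the proof of Lemma~\ref{lem:optimal not eliminated}.
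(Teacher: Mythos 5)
Your proof is correct and follows the same core approach as the paper's: the width comparison comes from the fact that two simultaneously active arms carry identical sampling probabilities plus the pull-count concentration clause of $G_{sto}$, and the two gap bounds come from the elimination inequality firing at $\tilde S_i$ (lower bound) and failing on the prefix just before (upper bound). The one place you genuinely diverge is in the upper bound $\Delta_i\le 2\telta_i$. The paper applies the non-elimination inequality at a generic prefix $S_{:n}$ and then concludes $\Delta_{i}\le 16\width{i}{S_{:n}}\le 16\width{i}{\tilde S_{i}}$, a chain that implicitly evaluates the non-elimination inequality at $n=\abs{\tilde S_{i}}$ — exactly the prefix where the elimination condition has just fired and that inequality is reversed. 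Your version instead works at $S'=(\tilde S_i)_{:-1}$, reduces to the regime $n_i(\tilde S_i)>512\log{T}$ so that deleting one round costs only a $\sqrt{512/511}$ factor in the width, and compensates by deriving the tighter $\sqrt5$ width ratio against $i^*$ (rather than reusing the lemma's lossy factor of $10$). This is more bookkeeping, but it patches a small off-by-one that the paper's own proof elides, and both arguments land on the same constants; the rest of your write-up (the $\Sigma$-based count comparison and the lower bound via $\sucb{\tilde S_i}\ge\mu^*$) matches the paper's argument step for step.
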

\begin{proof}
Denote $S = S_{:n}$ for brevity. 

Since $i_1$ was not eliminated:
\begin{align*}
    \hat\mu_{i_1}({S}) - 9\width{i_1}{S} \le \sucb{S} \le \sucb{S} \le \ucb{i_2}{S} \le  \hat{\mu}_{i_2}(S) + \width{i_2}{S}
\end{align*}

From $G_{sto}$:
\begin{align*}
    \mu_i - 10\width{i_1}{S} &\le \mu^* + 2\width{i_2}{S} \\
    \Delta_{i_1} &\le 10\width{i_1}{S} + 2\width{i_2}{S}
\end{align*}

Notice that since $i_1$ and $i_2$ were not evicted in the steps of $S$, we have $\E{n_{i_1}(S)} = \E{n_{i_2}(S)}$. Thus:
\begin{align*}
    n_{i_1}(S) &\le 2\E{n_{i_1}(S)} + 12\logp{T} \\
    \E{n_{i_1}(S)} &\ge \frac{1}{2}n_{i_1}(S) - 6\logp{T} \\
    n_{i_2}(S) &\ge \frac{1}{2}\E{n_{i_2}(S)} - 3\logp{T} \\
    \E{n_{i_2}(S)} &\le 2n_{i_2}(S) + 6\logp{T}  \\
    \frac{1}{2}n_{i_1}(S) - 6\logp{T} &\le 2n_{i_2}(S) + 6\logp{T} \\
    \frac{1}{4}n_{i_1}(S) - 24\logp{T} &\le n_{i_2}(S) \\
\end{align*}

If $n_{i}(S) \ge 192\logp{T}$:
\begin{align*}
    \frac{1}{4}n_{i_1}(S) - \frac{1}{8}n_{i_1}(S) &\le n_{i_2}(S) \\
    \frac{1}{8}n_{i_1}(S) &\le n_{i_2}(S) \\
    \sqrt{\frac{2\logp{T}}{n_{i_2}(S)}} &\le 3\sqrt{\frac{2\logp{T}(S)}{n_{i_1}(S)}} \\
    \width{i_2}{S} &\le 3\width{i_1}{S} \\
    \Delta_{i_1} &\le 16\width{i_1}{S}
\end{align*}

If $n_{i}(S) \le 192\logp{T}$:
\begin{align*}
    \width{i_1}{S} &\ge \sqrt{\frac{2\logp{T}}{192\logp{T}}} \ge \frac{1}{10} \\
    \Delta_{i_1} &\le 1 \le  10\width{i_1}{S} \\
    \width{i_2}{S} &\le 1 \le  10\width{i_1}{S}
\end{align*}

In both cases:
\begin{align*}
    \Delta_{i_1} \le 16\width{i_1}{S} \le 16\width{i_1}{\tilde{S}_{i_1}} = 2\telta_{i_1}
\end{align*}

Additionally, from $G_{sto}$:
\begin{align*}
    \mu_{i_1} - 8\width{i_1}{\tilde{S}_{i_1}} &\ge \hat\mu_{i_1}({\tilde{S}_{i_1}}) - 9\width{i_{i_1}}{\tilde{S}_{i_1}} > \sucb{\tilde{S}_{i_1}} \ge \mu^* \\
    \Delta_{i_1} &\ge 8\width{i_1}{\tilde{S}_{i_1}} = \telta_{i_1}
\end{align*}

\end{proof}

\begin{lemma} \label{lem:half of set}
Let $N$ be a set over $\rr$ with average $\mu$. Then, at most half of $N$ are greater than $2\mu$.
\end{lemma}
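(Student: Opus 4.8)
The plan is to prove this by a direct averaging argument of Markov type, using that the elements of $N$ are non-negative --- which holds in every application of the lemma, since there $N$ is always a collection of sub-optimality gaps $\Delta_i \ge 0$. Write $n = \abs{N}$, let $\mu$ be its average, and let $k$ be the number of elements of $N$ that are strictly greater than $2\mu$; the goal is to show $k \le n/2$.

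First I would dispose of the degenerate case $\mu \le 0$: since all elements are non-negative, the average satisfies $\mu \ge 0$, so $\mu = 0$ and every element equals $0$; hence no element exceeds $2\mu = 0$, giving $k = 0 \le n/2$. So from now on assume $\mu > 0$.

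Next, suppose for contradiction that $k > n/2$. By definition of the average, $\sum_{x \in N} x = n\mu$. On the other hand, the $k$ elements exceeding $2\mu$ contribute strictly more than $2\mu k$ to this sum, and the remaining $n-k$ elements contribute at least $0$ by non-negativity; therefore $n\mu = \sum_{x \in N} x > 2\mu k > 2\mu \cdot \roundy{n/2} = n\mu$, a contradiction. Hence $k \le n/2$, i.e.\ at most half of $N$ is greater than $2\mu$.

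The argument involves no real obstacle --- it is essentially Markov's inequality in disguise. The only point needing care is the non-negativity of the elements, which is what makes the strict inequality $\sum_{x\in N} x > 2\mu k$ valid; this is satisfied in all uses of the lemma since the entries are gaps $\Delta_i \ge 0$, and it should perhaps be stated explicitly in the hypothesis.
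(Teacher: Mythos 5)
Your proof is correct and is essentially the same argument as the paper's, which defines a uniform random variable over $N$ and applies Markov's inequality; your version just unpacks that inequality into an explicit counting/averaging contradiction. Your side remark is also apt: both your argument and the paper's invocation of Markov require the elements of $N$ to be non-negative (the statement as written for an arbitrary set over $\rr$ is false, e.g.\ $N=\{-3,1,1,1\}$ has $\mu=0$ but three of four elements exceed $2\mu$), though this hypothesis holds in the lemma's only use, where $N$ consists of gaps $\Delta_i\ge 0$.
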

\begin{proof}
Let $X$ be a r.v sampled from $N$ with uniform distribution. Easy to see that $\E{X} = \mu$. From Markov inequality:
\begin{align*}
 Pr[X \ge 2\mu] \le \frac{\mu}{2\mu} = \frac{1}{2}
\end{align*}
\end{proof}

\begin{lemma}[restatement of \Cref{lem:deltas are monotone main}]
    \label{lem:deltas are monotone}
    If arm $i_1$ was eliminated before $i_2$ then,
    \begin{align*}
        \Delta_{i_2} \leq 20\Delta_{i_1}
    \end{align*}
\end{lemma}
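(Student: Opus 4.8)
The plan is to reduce the statement entirely to \Cref{lem:delta bound}, which already supplies two facts: the two‑sided comparison $\telta_i \le \Delta_i \le 2\telta_i$ for every arm, and the width comparison $\width{i_1}{S_{:n}} \le 10\,\width{i_2}{S_{:n}}$ for \emph{any} two arms at \emph{any} common truncation $n \le \min\{|\tilde S_{i_1}|,|\tilde S_{i_2}|\}$. I work throughout on the good event $G_{sto}$. The first step is notational: since the processed set $S$ grows monotonically in time and $i_1$ is eliminated no later than $i_2$, I write $\tilde S_{i_1} = S_{:n_1}$ and $\tilde S_{i_2} = S_{:n_2}$ with $n_1 \le n_2$. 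In particular $n_1 = \min\{|\tilde S_{i_1}|,|\tilde S_{i_2}|\}$, so the width comparison of \Cref{lem:delta bound} is legitimately available at the truncation $S_{:n_1}$.

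Next I would string together a short chain of inequalities. By \Cref{lem:delta bound} and the definition $\telta_{i_2} = 8\,\width{i_2}{\tilde S_{i_2}}$ we have $\Delta_{i_2} \le 2\telta_{i_2} = 16\,\width{i_2}{S_{:n_2}}$. Because $\width{i_2}{\cdot}$ is nonincreasing along prefixes (the count $n_{i_2}(\cdot)$ is nondecreasing) and $n_1 \le n_2$, this is at most $16\,\width{i_2}{S_{:n_1}}$. Applying the width comparison of \Cref{lem:delta bound} at $S_{:n_1}$ — with the two arms in the roles $i_2,i_1$ — gives $\width{i_2}{S_{:n_1}} \le 10\,\width{i_1}{S_{:n_1}}$, hence
\[
\Delta_{i_2} \;\le\; 160\,\width{i_1}{S_{:n_1}} \;=\; 160\,\width{i_1}{\tilde S_{i_1}} \;=\; 20\,\telta_{i_1} \;\le\; 20\,\Delta_{i_1},
\]
where the final inequality is once more the lower half of \Cref{lem:delta bound}. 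This is exactly the claimed bound.

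There is no genuinely hard step: all the substance sits in \Cref{lem:delta bound}, which is in turn where the extra $9\,\width{}{}$ slack in the elimination rule (\cref{alg line: elimination}) does its work, forcing arms to be eliminated in roughly decreasing order of gap. The only points that need care are the index bookkeeping — verifying that the truncation $n_1$ at which the width comparison is invoked indeed satisfies $n_1 \le \min\{|\tilde S_{i_1}|,|\tilde S_{i_2}|\}$, which is precisely the hypothesis that $i_1$ is eliminated before $i_2$ — and invoking monotonicity of the confidence width in the correct direction to pass from $S_{:n_2}$ down to $S_{:n_1}$.
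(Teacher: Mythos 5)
Your proof is correct and follows essentially the same route as the paper's: both chain $\Delta_{i_2}\le 2\telta_{i_2}=16\,\width{i_2}{\tilde S_{i_2}}\le 16\,\width{i_2}{\tilde S_{i_1}}\le 160\,\width{i_1}{\tilde S_{i_1}}=20\,\telta_{i_1}\le 20\,\Delta_{i_1}$, invoking \Cref{lem:delta bound} for the two-sided gap/width comparison and the monotonicity of the width along prefixes. Your version merely makes the index bookkeeping for the common truncation $n_1$ more explicit, which the paper leaves implicit.
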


\begin{proof}
Let $i_1$ and $i_2$ be two arms such that $i_1$ was evicted before $i_2$. From \Cref{lem:delta bound}:
\begin{align*} 
    \Delta_{i_2} \le 16\width{i_2}{\tilde{S}_{i_2}} \le 16\width{i_2}{\tilde{S}_{i_1}} \le 160\width{i_1}{\tilde{S}_{i_1}} \le 20\Delta_{i_1}
\end{align*}
\end{proof}

\begin{lemma} \label{lem:delta avg}
Assume $G_{sto}$, we have:
\begin{align*}
    \sum_i p_i^{max}(\tau_i)\Delta_i &\le \Delta_{max}\logp{K}\\
    \sum_i p_i^{max}(\tau_i)\Delta_i &\le 42\Delta_{avg}\logp{K}
\end{align*}
\end{lemma}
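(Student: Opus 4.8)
\begin{proofsketch}
The proof plan rests on two structural facts. First, the largest sampling probability that arm $i$ ever receives up to its elimination time is at most $1/\kappa_i$, where $\kappa_i$ is the number of arms still active just before $i$ is eliminated: while $i$ is active at a time $t<\tau_i$ we have $p_i(t)=\roundy{1-\sum_{j\in\bar{\A}(t)}p_j(t)}/\abs{\mathcal{A}(t)}\le 1/\kappa_i$ because the active set only shrinks, and for the post-elimination probability one uses the concentration bound of \Cref{lem:pulls bound} under $G_{sto}$ — which gives $n_i(\tilde{S}_i)\lesssim \tau_i/\kappa_i\le T/\kappa_i$ — together with $\kappa_i\le K$ to conclude that $p_i^1=\tfrac{1}{2K}+\tfrac{n_i(\tilde{S}_i)}{2T}\le 1/\kappa_i$ as well; hence $p_i^{max}(\tau_i)\le 1/\kappa_i$. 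Second, by \Cref{lem:deltas are monotone} arms are eliminated, up to a factor of $20$, in decreasing order of their gaps. Order the eliminated arms $i_1,i_2,\dots$ by elimination time (by \Cref{lem:optimal not eliminated} the optimal arm is never eliminated, and it contributes $\Delta=0$ to both sums). Since the arms eliminated strictly before $i_k$ are among $i_1,\dots,i_{k-1}$, we get $\kappa_{i_k}\ge K-k+1$, so $p_{i_k}^{max}(\tau_{i_k})\le 1/(K-k+1)$. Bounding $\Delta_{i_k}\le\Delta_{max}$ and summing the resulting harmonic series gives the first claimed inequality,
\begin{align*}
    \sum_i p_i^{max}(\tau_i)\Delta_i \le \Delta_{max}\sum_{k\ge1}\frac{1}{K-k+1} \le \Delta_{max}\sum_{j=1}^{K}\frac1j \le \Delta_{max}\log{K}.
\end{align*}

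For the second inequality I split the eliminated arms at position $h=\ceil{K/2}$. For the first $h$ of them, $\kappa_{i_k}\ge K-h+1\ge K/2$, hence $p_{i_k}^{max}(\tau_{i_k})\le 2/K$ and
\begin{align*}
    \sum_{k\le h}p_{i_k}^{max}(\tau_{i_k})\Delta_{i_k} \le \frac{2}{K}\sum_{k\le h}\Delta_{i_k} \le \frac{2}{K}\sum_{i\in[K]}\Delta_i = 2\Delta_{avg};
\end{align*}
note that this portion incurs no logarithmic factor. (If the total number of eliminated arms is at most $h$, this already proves the claim, so assume otherwise.)

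For the arms at positions $k>h$ I claim $\Delta_{i_k}\le 40\Delta_{avg}$. By \Cref{lem:half of set} applied to the multiset $\curly{\Delta_i}_{i\in[K]}$, whose average is $\Delta_{avg}$, at most $\floor{K/2}$ arms satisfy $\Delta_i>2\Delta_{avg}$; since $h+1=\ceil{K/2}+1>\floor{K/2}$, at least one of the first $h+1$ eliminated arms — say $i_{k_0}$ with $k_0\le h+1$ — has $\Delta_{i_{k_0}}\le 2\Delta_{avg}$. Now take any $k>h$; then $k\ge k_0$, and either $k=k_0$ (so $\Delta_{i_k}\le 2\Delta_{avg}$) or $i_k$ is eliminated after $i_{k_0}$ and \Cref{lem:deltas are monotone} yields $\Delta_{i_k}\le 20\Delta_{i_{k_0}}\le 40\Delta_{avg}$. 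Therefore
\begin{align*}
    \sum_{k>h}p_{i_k}^{max}(\tau_{i_k})\Delta_{i_k} \le 40\Delta_{avg}\sum_{k>h}\frac{1}{K-k+1} \le 40\Delta_{avg}\sum_{j=1}^{K}\frac1j \le 40\Delta_{avg}\log{K},
\end{align*}
and summing the two groups gives $\sum_i p_i^{max}(\tau_i)\Delta_i\le\roundy{2+40\log{K}}\Delta_{avg}\le 42\Delta_{avg}\log{K}$.

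I expect the main obstacle to be the first structural fact, $p_i^{max}(\tau_i)\le 1/\kappa_i$. The bound for the active phase is immediate, but controlling the post-elimination value $p_i^1$ requires invoking the concentration estimate of \Cref{lem:pulls bound} to bound $n_i(\tilde{S}_i)$, and this is precisely where the good event $G_{sto}$ is used and where the constants must be tracked carefully so that the final bound closes at $42\Delta_{avg}\log{K}$. The remaining steps are essentially bookkeeping: the arithmetic of the harmonic sums, the cardinality inequality $\ceil{K/2}+1>\floor{K/2}$ that makes \Cref{lem:half of set} applicable to the early part of the elimination order, and threading the constants $2$ (from $\Delta_i>2\Delta_{avg}$) and $20$ (from \Cref{lem:deltas are monotone}) through to the final bound.
\end{proofsketch}
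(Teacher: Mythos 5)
Your proof is correct and follows essentially the same route as the paper's: split the eliminated arms at the halfway point $K/2$, bound $p_i^{max}(\tau_i)$ by the reciprocal of the number of still-active arms, use \Cref{lem:half of set} together with \Cref{lem:deltas are monotone} to show the late-eliminated arms satisfy $\Delta_i \le 40\Delta_{avg}$, and sum the harmonic series to get $2\Delta_{avg} + 40\Delta_{avg}\log K$. One small quibble: your claim $p_i^1 \le 1/\kappa_i$ does not quite hold as stated (the computation via \Cref{lem:pulls bound} only yields $p_i^1 \le 2/\kappa_i$ when $K \le T/(12\log T)$, cf.\ the paper's own \Cref{lem:regret nonactive}), but the paper's proof implicitly reads $p_i^{max}(\tau_i)$ as the maximum over the active period and ignores $p_i^1$ entirely, so under that reading your extra step is unnecessary and the remainder of your argument matches the paper's.
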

\begin{proof}

Let $\A'$ be the state of $\A$ such that $\abs{\A} = \frac{K}{2}$ (namely, the last half active arms). We show that:
\begin{align*}
    \forall i\in \A'\quad \Delta_i \le 40\Delta_{avg}
\end{align*}

If $\Delta_i \le 2\Delta_{avg}$ it is trivial. Otherwise, from \Cref{lem:half of set} there is an arm $j \notin \A'$ such that $\Delta_i \le 2\Delta_{avg}$. From \Cref{lem:deltas are monotone}, $\Delta_i \le 20\Delta_j \le 40\Delta_{avg}$.

Notice that if arm $i$ was the $j$th evicted arm, we have:
\begin{align*}
    p_i^{max}(\tau_i) \le \frac{1}{j}
\end{align*}

Which means:
\begin{align*}
    \sum_i p_i^{max}(\tau_i)\Delta_i &= \sum_{i\notin \A'} p_i^{max}(\tau_i)\Delta_i + \sum_{i\in \A'} p_i^{max}(\tau_i)\Delta_i \\
    &\le \sum_{i\notin \A'} \frac{\Delta_i}{\frac{K}{2}} + \sum_{j=1}^{\frac{K}{2}}\frac{40\Delta_{avg}}{j}\\
    &\le 2\Delta_{avg} + 40\Delta_{avg}\logp{K}
\end{align*}

\end{proof}

\begin{lemma} \label{lem:regret active}
Assume $G_{sto}$, then for any arm $i$ and step $t < \tau_i$:
\begin{align*}
   m_i(t)\Delta_i \le \frac{515\logp{T}}{\Delta_i} + 2\sigma(t) \Delta_i p_i^{max}(\tau_i)
\end{align*}
\end{lemma}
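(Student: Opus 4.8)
\begin{proofsketch}
The plan is to split the pulls of arm $i$ up to time $t$ into \emph{observed} and \emph{in-flight} ones, $m_i(t) = n_i(t) + \roundy{m_i(t) - n_i(t)}$, where $n_i(t)$ is the number of pulls of arm $i$ whose feedback has been processed by time $t$ (i.e.\ the count over the current processed sequence $S(t)$), and $m_i(t) - n_i(t)$ counts the pulls of $i$ among the rounds whose feedback is still missing at time $t$. I will bound the first term by $O(\log T/\Delta_i^2)$ and the second by $2\sigma(t)\,p_i^{max}(\tau_i) + O(\log T)$; multiplying through by $\Delta_i$ and using $\Delta_i \le 1$ to absorb the additive $O(\log T)$ into an $O(\log T/\Delta_i)$ term then yields the claim, with the explicit constant $515$ obtained by tracking constants.

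\textbf{Observed pulls.} Since processed rounds are appended in order of arrival and $t < \tau_i$, the sequence $S(t)$ is a prefix of $\tilde{S}_i$, so $n_i(t) \le n_i(\tilde{S}_i)$. By the second part of \Cref{lem:delta bound}, $\Delta_i \le 2\telta_i = 16\,\width{i}{\tilde{S}_i}$, hence $\width{i}{\tilde{S}_i} \ge \Delta_i/16$. When $\width{i}{\tilde{S}_i} < 1$ this reads $\sqrt{2\log T/n_i(\tilde{S}_i)} \ge \Delta_i/16$, i.e.\ $n_i(\tilde{S}_i) \le 512\log T/\Delta_i^2$; when $\width{i}{\tilde{S}_i} = 1$ we have $n_i(\tilde{S}_i) \le 2\log T \le 2\log T/\Delta_i^2$ because $\Delta_i \le 1$. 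In either case $n_i(t)\Delta_i \le 512\log T/\Delta_i$. (Alternatively one can run the same estimate directly from the fact that the elimination step has not fired at time $t$, together with the confidence events in $G_{sto}$ and the width-comparison part of \Cref{lem:delta bound}; this gives the same bound up to a larger constant.)

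\textbf{In-flight pulls.} Let $M(t)$ be the set of rounds whose feedback is still missing at time $t$, so that $\abs{M(t)} = \sigma(t)$ and $m_i(t) - n_i(t) = \sum_{s \in M(t)}\indicator{a_s = i}$. Because the delays are chosen by an oblivious adversary, $M(t)$ is a deterministic set, so this is a sum of Bernoulli indicators adapted to the natural filtration with conditional means $p_i(s)$. Moreover every $s \in M(t)$ satisfies $s \le t < \tau_i$, so arm $i$ is active at round $s$ and $p_i(s) \le p_i^{max}(\tau_i)$. Applying \Cref{lem:cons-freedman} (union-bounded over $t \in [T]$, which holds on $G_{sto}$) therefore gives $m_i(t) - n_i(t) \le 2\sum_{s\in M(t)}p_i(s) + O(\log T) \le 2\sigma(t)\,p_i^{max}(\tau_i) + O(\log T)$. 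Multiplying by $\Delta_i$ and absorbing $O(\log T)\,\Delta_i \le O(\log T)/\Delta_i$ completes the argument.

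The only genuinely delicate step is the in-flight term: one must match $m_i(t) - n_i(t)$ exactly with the number of arm-$i$ pulls among the $\sigma(t)$ missing rounds (with care about the end-of-round convention for ``observed''), verify that all of these rounds precede $\tau_i$ so that their pull probabilities are at most $p_i^{max}(\tau_i)$, and observe that the factor $2$ in $2\sigma(t)$ is precisely the one supplied by the concentration bound. The $\width = 1$ boundary case in the observed term is the only other routine nuisance.
\end{proofsketch}
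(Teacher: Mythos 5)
Your proof is correct and follows essentially the same route as the paper: decompose $m_i(t)$ into observed pulls (bounded by $O(\log T/\Delta_i^2)$ via the width--gap relation of \Cref{lem:delta bound}) and missing pulls (bounded by $2\sigma(t)p_i^{max}(\tau_i)+O(\log T)$ via Bernoulli concentration), then multiply by $\Delta_i$ and absorb the additive logarithm. Your explicit appeal to \Cref{lem:cons-freedman} for the in-flight term is, if anything, slightly more careful than the paper's bare citation of $G_{sto}$ at that step.
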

\begin{proof}
Let $S$ be the set of observed pulls at time $t$. From \Cref{lem:delta bound}:
\begin{align*}
    \Delta_i &\le 16\sqrt{\frac{2\logp{T}}{n_i(S)}}\\
    n_i(S) &\le \frac{512\logp{T}}{\telta^2}
\end{align*}

At step $t$, there are $\sigma(t)$ missing pulls. When those missing pulls were pull, the probability of arm $i$ was bounded by $p_i^{max}(\tau_i-1)$ (as it is its general maximum probability). Thus, from $G_{sto}$, we have:
\begin{align*}
    m_i(t) \le n_i(S) + 2\sigma(t) p_i^{max}(\tau_i-1) + 3\logp{T}
\end{align*}
Thus:
\begin{align*}
    m_i(t)\Delta_i \le \frac{515\logp{T}}{\Delta_i} + 2\sigma(t) p_i^{max}(\tau_i)\Delta_i
\end{align*}
\end{proof}

\begin{lemma} \label{lem:regret nonactive}
Assume $G_{sto}$ and that $K \le \frac{T}{12\logp{T}}$, then for any arm $i$ and step $\tau_i < t \le \bar{T}$:
\begin{align*}
    \roundy{m_i(t) - m_i(\tau_i)}\Delta_i &\le \frac{71728\logp{T}}{\Delta_i} + 8p_i^{max}(\tau_i)\sigma_{max}(t)\Delta_i
\end{align*}
\end{lemma}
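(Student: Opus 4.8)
\begin{proofsketch}
The plan is to split the pulls of arm $i$ after its elimination as in \eqref{eq:inactive pulls main},
\begin{align*}
    m_i(t) - m_i(\tau_i) = \sum_{r=1}^{r_i(t)} n_i(S_i^r) + \sum_{j=0}^{\log{T}} n_i\roundy{M_i^{p_i^1 2^{-j}}(t)},
\end{align*}
into pulls that were processed inside some phase and pulls that were not (still sitting in a probability bank or not yet observed), and to bound the two sums separately.

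\textbf{Processed pulls.} First I would read off from \Cref{alg:eap} the invariant $p_i^r\,\Nmax{i}{r} = 1280/\telta_i^2$, valid in every phase $r$: a successful phase halves $p_i^r$ and doubles $\Nmax{i}{r}$, an erroneous phase doubles $p_i^r$ and halves $\Nmax{i}{r}$, and one checks that the two caps $p_i^1$ and $\Nmax{i}{1}$ are reached simultaneously. Since the sampling probability of arm $i$ is constant ($=p_i^r$) within phase $r$ and the phase is truncated once $\abs{S_i^r}$ reaches $\floor{\Nmax{i}{r}}$, we have $\sum_{s\in S_i^r}p_i(s) = \abs{S_i^r}p_i^r \le 1280/\telta_i^2$. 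I would then sum over the $r_i(t)\le 7\log{T}$ phases (\Cref{lem:max phases}) and apply the Bernoulli concentration underlying $G_{sto}$ (cf. \Cref{lem:pulls bound}) to the processed rounds $\bigcup_{r}S_i^r$ as a single block, so that the logarithmic slack is paid only once; combined with $\telta_i \ge \Delta_i/2$ (\Cref{lem:delta bound}) and $\Delta_i\le 1$ this gives
\begin{align*}
    \sum_{r=1}^{r_i(t)} n_i(S_i^r) \le 2\sum_{r=1}^{r_i(t)}\abs{S_i^r}p_i^r + 12\log{T} \le \frac{17920\log{T}}{\telta_i^2} + 12\log{T} \le \frac{71680\log{T}}{\Delta_i^2} + 12\log{T}.
\end{align*}

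\textbf{Unprocessed pulls.} The crux here is to establish $\abs{M_i^{p}(t)} \le \sigma_{max}(t)$ for every probability level $p = p_i^1 2^{-j}$. An unprocessed level-$p$ round is either not yet observed — and there are at most $\sigma(t)$ unobserved rounds in total — or it was left behind in the bank $B_i^p$ when a level-$p$ phase was truncated before reaching it; since which rounds a phase consumes does not depend on the arm that was played in them, a counting argument charges the bank residue to the missing feedback as well. Writing $M_i^*(t)$ for the union over all levels of the unprocessed post-elimination rounds, membership in $M_i^*(t)$ is (conditionally) independent of which arm was played, so a single Bernoulli concentration gives
\begin{align*}
    \sum_{j=0}^{\log{T}} n_i\roundy{M_i^{p_i^1 2^{-j}}(t)} = n_i(M_i^*(t)) \le 2\sum_{s\in M_i^*(t)}p_i(s) + 12\log{T} \le 4 p_i^1\sigma_{max}(t) + 12\log{T},
\end{align*}
using $\sum_{s\in M_i^*(t)}p_i(s) = \sum_j p_i^1 2^{-j}\abs{M_i^{p_i^1 2^{-j}}(t)} \le 2 p_i^1\sigma_{max}(t)$. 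Finally, $p_i^1 = \tfrac1{2K} + \tfrac{n_i(\tilde{S}_i)}{2T}$, and the hypothesis $K \le T/(12\log{T})$ is exactly what makes the additive $12\log{T}$ in the concentration of $n_i(\tilde{S}_i)$ (together with the $1/(2K)$ term) no larger than $p_i^{max}(\tau_i)$, giving $p_i^1 \le 2 p_i^{max}(\tau_i)$.

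Collecting the two bounds, using $24\log{T} \le 48\log{T}/\Delta_i^2$ to absorb the two logarithmic terms into the first summand, and multiplying by $\Delta_i$ yields
\begin{align*}
    \roundy{m_i(t)-m_i(\tau_i)}\Delta_i \le \frac{71728\log{T}}{\Delta_i} + 8 p_i^{max}(\tau_i)\sigma_{max}(t)\Delta_i .
\end{align*}
The step I expect to be the main obstacle is the unprocessed term: proving $\abs{M_i^p(t)} \le \sigma_{max}(t)$ calls for a careful accounting of the probability banks, and one must verify that ``round $s$ ends up unprocessed'' is conditionally independent of ``arm $i$ was played at $s$'' for the Bernoulli concentration to be legitimate, all while preventing the additive $\log{T}$ errors from accumulating across the $\log{T}$ probability levels.
\end{proofsketch}
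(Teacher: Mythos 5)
Your overall route is the same as the paper's: the decomposition into processed and unprocessed post-elimination pulls, the invariant $p_i^r\Nmax{i}{r}=1280/\telta_i^2$ combined with $r_i(t)\le 7\log{T}$ for the processed part, the geometric sum over the probability levels $p_i^1 2^{-j}$ for the unprocessed part, and the final step $p_i^1\le 2p_i^{max}(\tau_i)$ via $K\le T/(12\log{T})$ all appear in the paper's proof with essentially the same constants. The one step you leave open --- and correctly flag as the crux --- is the bound $\abs{M_i^{p}(t)\setminus C_i^{p}(t)}\le\sigma_{max}(t)$, and your sketch for closing it does not work as stated: you propose to ``charge the bank residue to the missing feedback,'' but a round sitting unprocessed in the bank at time $t$ has already been \emph{observed}, so it is not missing feedback at time $t$, and no count performed at time $t$ will relate it to $\sigma(t)$.

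The paper closes this gap by moving to an earlier time rather than counting at time $t$. Let $s$ be the last round at which arm $i$ is assigned sampling probability $p$. The while-loop of the \texttt{EAP} subroutine (\Cref{alg:eap}) only returns a probability $p$ once $B_i^{p}\setminus C_i^{p}=\emptyset$, so at time $s$ every \emph{observed} level-$p$ round has already been processed; hence every round of $M_i^{p}(s)\setminus C_i^{p}(s)$ is still unobserved at time $s$, and there are at most $\sigma(s)$ such rounds. After time $s$ no new level-$p$ rounds are created and $C_i^{p}$ only grows, so
\begin{align*}
    \abs{M_i^{p}(t)\setminus C_i^{p}(t)} \le \abs{M_i^{p}(s)\setminus C_i^{p}(s)} \le \sigma(s) \le \sigma_{max}(t).
\end{align*}
With this in hand, the remainder of your argument (summing $p_i^1 2^{-j}\sigma_{max}(t)$ over $j$, a single Bernoulli concentration, and absorbing the logarithmic slack using $\Delta_i\le 1$ and $\telta_i\ge\Delta_i/2$) goes through and reproduces the stated bound.
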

\begin{proof}
From the algorithm defintion we have:
\begin{align}\label{eq:inactive pulls}
    m_i(t) - m_i(\tau_i) = \sum_{r=1}^{r_i(t)}n_i(S_i^r)  + \sum_{j=0}^{\logp{T}}n_i(M_i^{p_i^12^{-j}}(t) \setminus C_i^{p_i^12^{-j}}(t))
\end{align}

From $G_{sto}$, for every $i$ and every phase $r_i$:
\begin{align*}
    \E{n_i(S_i^r)} = p_i^r\Nmax{i}{r} = p_i^1\Nmax{i}{1} = \frac{1280}{\telta_i^2}
\end{align*}

From \Cref{lem:max phases} and $G_{sto}$:
\begin{align*}
\E{\sum_{r=1}^{r_i(t)}n_i(S_i^r)} &\le 7\logp{T}\frac{1280}{\telta_i^2} \le \frac{8960\logp{T}}{\telta_i^2} \\
    \sum_{r=1}^{r_i(t)}n_i(S_i^r) &\le \frac{17920\logp{T}}{\telta_i^2} + 3\logp{T} \le \frac{17923\logp{T}}{\telta_i^2}
\end{align*}

Let $s\le t$ be the last pull of a phase w.p $p$, which means that $B_i^p(s) \setminus C_i^p(s) = \emptyset$, so $\abs{M_i^p(t) \setminus C_i^p(t)} = \abs{M_i^p(s) \setminus C_i^p(s)} \le \sigma(s) \le \sigma_{max}(t)$. Then: 
\begin{align*}
    \E{n_i(M_i^p \setminus C_i^p)} &\le p\sigma_{max}(t) \\
    \sum_{j=0}^{\logp{T}}\E{n_i(M_i^{p_i^1\cdot2^{-j}} \setminus C_i^{p_i^12^{-j}})} &\le \sum_{j=0}^{\logp{T}}p_i^12^{-j}\sigma_{max}(t) \le 2p_i^1\sigma_{max}(t) \\
    \sum_{j=0}^{\logp{T}}n_i(M_i^{p_i^12^{-j}} \setminus C_i^{p_i^12^{-j}}) &\le 4p_i^1\sigma_{max}(t) + 3\logp{T} \tag{$G_{sto}$}
\end{align*}
Thus, from \Cref{eq:inactive pulls}:
\begin{align*}
    m_i(t) - m_i(\tau_i) \le \frac{17926\logp{T}}{\telta_i^2} + 4p_i^1\sigma_{max}(t)
\end{align*}

From $G_{sto}$ and the assumption that $K \le \frac{T}{12\logp{T}}$:
\begin{align*}
    \frac{1}{K} &\le p_i^{max}(\tau_i)\\
    n_i(\tilde{S}_i) &\le 2\abs{\tilde{S}_i}p_i^{max}(\tau_i) + 12\logp{T} \le 2Tp_i^{max}(\tau_i)+\frac{T}{K} \le 3Tp_i^{max}(\tau_i) \\
    p_i^1 &= \frac{1}{2K} + \frac{n_i(\tilde{S}_i)}{2T} \le \frac{p_i^{max}(\tau_i)}{2} + \frac{3p_i^{max}(\tau_i)}{2} = 2p_i^{max}(\tau_i) \\
    m_i(t) - m_i(\tau_i) &\le \frac{17932\logp{T}}{\telta_i^2} + 8p_i^{max}(\tau_i)\sigma_{max}(t)
\end{align*}

From \Cref{lem:delta bound}:
\begin{align*}
    m_i(t) - m_i(\tau_i) &\le \frac{71728\logp{T}}{\Delta_i^2} + 8p_i^{max}(\tau_i)\sigma_{max}(t) \\
    \roundy{m_i(t) - m_i(\tau_i)}\Delta_i &\le \frac{71728\logp{T}}{\Delta_i} + 8p_i^{max}(\tau_i)\sigma_{max}(t)\Delta_i
\end{align*}

\end{proof}

\begin{corollary} \label{cor:sto regret}
Assume $G_{sto}$, for every $t\le \bar{T}$:
\begin{align*}
    \sum_{i=1}^K \Delta_im_i(t) &\le \sum_{i=1}^K\frac{72243\logp{T}}{\Delta_i} + 10\sigma_{max}(t) \Delta_{max}\logp{K} \\
    \sum_{i=1}^K \Delta_im_i(t) &\le \sum_{i=1}^K\frac{72243\logp{T}}{\Delta_i} + 420\sigma_{max}(t) \Delta_{avg}\logp{K}
\end{align*}
\end{corollary}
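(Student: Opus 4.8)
The statement to prove is \Cref{cor:sto regret}: a per-time bound on $\sum_i \Delta_i m_i(t)$ for $t \le \bar T$, assuming $G_{sto}$. The plan is to split each $m_i(t)$ into the pulls before and after elimination, $m_i(t) = m_i(t \wedge \tau_i) + (m_i(t) - m_i(t\wedge \tau_i))$, and to bound the two pieces via the two lemmas just established, \Cref{lem:regret active} and \Cref{lem:regret nonactive}. Concretely, for each arm $i$ the combination gives
\begin{align*}
    \Delta_i m_i(t) \le \frac{515\log T}{\Delta_i} + \frac{71728 \log T}{\Delta_i} + \bigl(2\sigma(t) + 8\sigma_{max}(t)\bigr) p_i^{max}(\tau_i)\Delta_i \le \frac{72243\log T}{\Delta_i} + 10\,\sigma_{max}(t)\, p_i^{max}(\tau_i)\Delta_i,
\end{align*}
using $\sigma(t) \le \sigma_{max}(t)$ and adding the numerical constants. (One must first handle the degenerate cases: if $t \le \tau_i$ only \Cref{lem:regret active} contributes, and one also needs $K \le T/(12\log T)$ for \Cref{lem:regret nonactive}; I would either inherit that assumption or note that when $K$ is larger the whole bound is trivial since $\sum_i \log T/\Delta_i$ dominates $T$.)

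Next I would sum over $i \in [K]$. The first term sums directly to $\sum_i 72243\log T/\Delta_i$. For the second term, $10\,\sigma_{max}(t)\sum_i p_i^{max}(\tau_i)\Delta_i$, I would invoke \Cref{lem:delta avg}, which gives precisely $\sum_i p_i^{max}(\tau_i)\Delta_i \le \Delta_{max}\log K$ and also $\sum_i p_i^{max}(\tau_i)\Delta_i \le 42\Delta_{avg}\log K$. Plugging the first bound yields the $10\sigma_{max}(t)\Delta_{max}\log K$ term of the first displayed inequality; plugging the second yields $420\,\sigma_{max}(t)\Delta_{avg}\log K$ in the second. That completes both inequalities.

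The genuinely substantive content has already been isolated into the preceding lemmas — \Cref{lem:regret active} (regret up to elimination, via the elimination rule forcing $n_i(\tilde S_i) = \Theta(\log T/\Delta_i^2)$ plus a missing-feedback count of $\le 2\sigma(t)p_i^{max}$), \Cref{lem:regret nonactive} (regret after elimination, combining the phase-length accounting $\sum_r n_i(S_i^r) = \tilde O(\log T/\Delta_i^2)$ from \Cref{lem:max phases} with the probability-bank geometric-sum argument bounding $\sum_j n_i(M_i^{p_i^1 2^{-j}}) \lesssim p_i^1 \sigma_{max}$ and the inequality $p_i^1 \le 2p_i^{max}(\tau_i)$), and \Cref{lem:delta avg} (the $\Delta_{avg}$ refinement, which is where \Cref{lem:deltas are monotone} does its work splitting the arms into the first and second halves eliminated). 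So this corollary itself is essentially pure bookkeeping: adding the two per-arm bounds, summing over arms, and substituting the two forms of \Cref{lem:delta avg}. The only thing to be careful about is the edge-case handling ($t < \tau_i$ versus $t \ge \tau_i$, and the $K \le T/(12\log T)$ hypothesis of \Cref{lem:regret nonactive}) and making sure the constants from the three lemmas are added correctly to land at $72243$, $10$, and $420$; I do not anticipate any real obstacle beyond that arithmetic.
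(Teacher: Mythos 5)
Your proposal is correct and matches the paper's proof essentially verbatim: the paper likewise combines \Cref{lem:regret active}, \Cref{lem:regret nonactive}, and \Cref{lem:delta avg} when $K \le T/(12\log T)$, and otherwise uses the trivial bound $\sum_i \Delta_i m_i(t) \le T \le 12K\log T \le \sum_i 12\log T/\Delta_i$. Your constant accounting ($515+71728=72243$, $2+8=10$, $10\cdot 42=420$) is also exactly what the paper's numbers require.
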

\begin{proof}
If $K \le \frac{T}{12\logp{T}}$, it follows directly from \Cref{lem:regret nonactive,lem:regret active,lem:delta avg}. Else, we have:
\begin{align*}
    \sum_{i=1}^K \Delta_im_i(t) \le T \le 12K\logp{T} \le \sum_{i=1}^K\frac{12\logp{T}}{\Delta_i}
\end{align*}
\end{proof}

\subsection{With high probability the algorithm doesn't switch}
\begin{lemma} \label{lem:switch1}
Assume $G_{sto}$, for every arm $i$ and $S \subseteq \tilde{S}_i$: 
\begin{align*}
    \omu{i}{S} \in [\olcb{i}{S} - \owidth{S}, \oucb{i}{S} + \owidth{S}]
\end{align*}
\end{lemma}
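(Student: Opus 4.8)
The plan is to reduce the two–sided membership to a single auxiliary fact: under $G_{sto}$, the clipped confidence bounds always trap the true mean, i.e. $\olcb{i}{S} \le \mu_i \le \oucb{i}{S}$ for every prefix $S$ of $\tilde{S}_i$. Granting this, the lemma is immediate. The estimator concentration bound $\abs{\mu_i - \omu{i}{S}} \le \owidth{S}$ contained in the definition of $G_{sto}$ (this is exactly what \Cref{lem:active estimator} provides for all prefixes of $\tilde S_i$) gives $\omu{i}{S} \le \mu_i + \owidth{S} \le \oucb{i}{S} + \owidth{S}$ and $\omu{i}{S} \ge \mu_i - \owidth{S} \ge \olcb{i}{S} - \owidth{S}$, which is precisely $\omu{i}{S} \in [\olcb{i}{S} - \owidth{S}, \oucb{i}{S} + \owidth{S}]$. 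So the entire content is the trapping property.

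I would prove the trapping property by induction on $\abs{S}$, using the recursive definitions $\oucb{i}{S} = \min\{\omu{i}{S} + \owidth{S}, \oucb{i}{S_{:-1}}\}$ and $\olcb{i}{S} = \max\{\omu{i}{S} - \owidth{S}, \olcb{i}{S_{:-1}}\}$. The base case is the empty prefix (the initial values maintained by the algorithm), where the bounds are $0$ and $1$ and hence contain $\mu_i \in [0,1]$. For the inductive step, assume $\olcb{i}{S_{:-1}} \le \mu_i \le \oucb{i}{S_{:-1}}$; since $S \subseteq \tilde{S}_i$, the prefix has length at most $\abs{\tilde{S}_i}$, so the concentration line of $G_{sto}$ applies and yields $\omu{i}{S} - \owidth{S} \le \mu_i \le \omu{i}{S} + \owidth{S}$. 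Then both arguments of the $\min$ defining $\oucb{i}{S}$ are $\ge \mu_i$, so $\oucb{i}{S} \ge \mu_i$; symmetrically both arguments of the $\max$ defining $\olcb{i}{S}$ are $\le \mu_i$, so $\olcb{i}{S} \le \mu_i$. This closes the induction, and combining with the first paragraph completes the proof.

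I do not expect a real obstacle here; the only point requiring care is matching quantifier ranges: $G_{sto}$ guarantees the estimator concentration $\abs{\mu_i - \omu{i}{S_{:n}}} \le \owidth{S_{:n}}$ only for $n \le \abs{\tilde{S}_i}$, which is exactly the hypothesis $S \subseteq \tilde{S}_i$, so the induction never leaves the region where concentration is available. It is also worth recording, for use elsewhere, the conceptual reason the clipping is harmless: clipping can only shrink $[\olcb{i}{S}, \oucb{i}{S}]$ relative to the raw Chernoff interval $[\omu{i}{S} - \owidth{S}, \omu{i}{S} + \owidth{S}]$, and the induction shows it never shrinks past $\mu_i$; thus correctness is preserved while the monotonicity of $\oucb{i}{\cdot}$ and $\olcb{i}{\cdot}$ in $\abs{S}$ is obtained for free.
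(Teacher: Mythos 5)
Your proposal is correct and follows essentially the same route as the paper: combine the concentration line of $G_{sto}$, namely $\abs{\mu_i - \omu{i}{S}} \le \owidth{S}$ for prefixes of $\tilde S_i$, with the trapping property $\olcb{i}{S} \le \mu_i \le \oucb{i}{S}$. The only difference is that you make explicit the induction showing the clipped bounds never cross $\mu_i$, which the paper leaves implicit in its two-line derivation.
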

\begin{proof}
From $G_{sto}$:
\begin{align*}
    \olcb{i}{S} &\le \mu_i \le \omu{i}{S} + \owidth{S} \\
    \oucb{i}{S} &\ge \mu_i \ge \omu{i}{S} - \owidth{S}
\end{align*}
\end{proof}

\begin{lemma} \label{lem:switch2}
Assume $G_{sto}$, for every state of $S$ at time $t$:
\begin{align*}
    \sum_{s\in S}[l_{a_s}(s) - \sucb{S}] \le 272\sqrt{KT\logp{T}} + 30\sigma_{max}(t)\logp{K}
\end{align*}
\end{lemma}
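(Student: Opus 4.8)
The plan is to reduce the claim to a per-arm bound and then split the arms according to whether they are still active at the state $S$. Under $G_{sto}$, the argument in the proof of \Cref{lem:optimal not eliminated} gives $\sucb{S} \ge \mu^*$ for every state of $S$. Since $\sum_{s\in S} l_{a_s}(s) = \sum_i n_i(S)\hat{\mu}_i(S)$ and $\sum_i n_i(S)\sucb{S} = \abs{S}\sucb{S}$,
\[
\sum_{s\in S}\squary{l_{a_s}(s) - \sucb{S}} = \sum_{i} n_i(S)\roundy{\hat{\mu}_i(S) - \sucb{S}},
\]
and I would bound the contributions of (a) arms still active at $S$, (b) the pulls of an eliminated arm processed before its elimination (the $i$-pulls inside $\tilde{S}_i$), and (c) the pulls of an eliminated arm processed afterwards, separately.

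For (a): the elimination check has not fired, so $\hat{\mu}_i(S) - \sucb{S} \le 9\width{i}{S}$, hence $n_i(S)(\hat{\mu}_i(S)-\sucb{S}) \le 9\sqrt{2 n_i(S)\log T}$ (the $\width{i}{S}=1$ case being even smaller); summing over active arms and applying Cauchy--Schwarz with $\sum_i n_i(S)=\abs{S}\le T$ gives $O(\sqrt{KT\log T})$. For (b): by $G_{sto}$, $\hat{\mu}_i(\tilde{S}_i)\le\mu_i+\width{i}{\tilde{S}_i}$, and $\Delta_i\le 16\width{i}{\tilde{S}_i}$ by \Cref{lem:delta bound}, so together with $\sucb{S}\ge\mu^*$ this block contributes at most $17\,n_i(\tilde{S}_i)\width{i}{\tilde{S}_i}=17\sqrt{2 n_i(\tilde{S}_i)\log T}$, and Cauchy--Schwarz over $i$ with $\sum_i n_i(\tilde{S}_i)\le\abs{S}\le T$ again gives $O(\sqrt{KT\log T})$.

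Part (c) is the core of the argument and the main obstacle, because the naive per-arm estimate for it is $\sum_i \log T/\Delta_i$, which is \emph{not} $O(\sqrt{KT\log T})$. Writing $n_i^{\mathrm{post}} = n_i(S)-n_i(\tilde{S}_i)$ for the number of post-elimination $i$-pulls recorded in $S$, I would group them into the phases $S_i^r$ (and the still-banked rounds) and bound $\sum_s l_i(s) \le n_i^{\mathrm{post}}\mu_i + O(\sqrt{n_i^{\mathrm{post}}\log T})$ by stochastic concentration of the realized losses applied once over all post-elimination $i$-pulls (union bounding over the possible sets, as in $G_{sto}$); with $\sucb{S}\ge\mu^*$ this makes block (c) at most $n_i^{\mathrm{post}}\Delta_i + O(\sqrt{n_i^{\mathrm{post}}\log T})$. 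Now \Cref{lem:regret nonactive} gives $n_i^{\mathrm{post}}\le m_i(T)-m_i(\tau_i)\le O\roundy{\log T/\Delta_i^2 + p_i^{max}(\tau_i)\sigma_{max}}$, and the key point is that an arm is eliminated only after $\Delta_i>8\width{i}{\tilde{S}_i}$, i.e. after $n_i(\tilde{S}_i)\ge 128\log T/\Delta_i^2$; hence $\sum_{i\text{ elim}}\log T/\Delta_i^2 = O(\abs{S})=O(T)$, so Cauchy--Schwarz turns $\sum_i n_i^{\mathrm{post}}\Delta_i$ (and the $\sqrt{n_i^{\mathrm{post}}\log T}$ terms) into $O(\sqrt{KT\log T})$ rather than $\sum_i \log T/\Delta_i$. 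The remaining delay-dependent piece $\sum_i p_i^{max}(\tau_i)\sigma_{max}\Delta_i$ is handled by \Cref{lem:delta avg} (which uses \Cref{lem:deltas are monotone}) to give $O(\sigma_{max}\log K)$, with \Cref{lem:max phases} keeping the number of phases $O(\log T)$. Summing (a), (b), (c) and absorbing lower-order cross terms (e.g.\ $\sqrt{\sigma_{max}K\log K\log T}\le \tfrac12(\sigma_{max}\log K+K\log T)$ and $K\log T\le\sqrt{KT\log T}$ for $K\le T$) yields the stated bound; the real work is bookkeeping the constants so that the Cauchy--Schwarz losses fit inside the $272$ and the $30$.
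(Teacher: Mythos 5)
Your proposal is correct in substance and arrives at the right bound, but it takes a more self-contained, pathwise route than the paper. The paper's proof is essentially three lines: it invokes the already-established expected pseudo-regret bound (\Cref{cor:sto regret}), converts the gap-dependent sum $\sum_i \log{T}/\Delta_i$ into $O(\sqrt{KT\log{T}})$ by thresholding the gaps at $\Theta(\sqrt{\log{T}/(KT)})$, passes from expectation to realization using the global martingale deviation event already built into $G_{sto}$ (\Cref{lem:pseudo to regret}), and finally replaces $l_{a^*}$ by $\sucb{S}$ using $\sucb{S}\ge \mu^*$. You instead re-derive the realized bound directly, splitting $\sum_i n_i(S)(\hat{\mu}_i(S)-\sucb{S})$ into active arms, pre-elimination pulls, and post-elimination pulls, and applying Cauchy--Schwarz per block; your conversion $\sum_i n_i^{\mathrm{post}}\Delta_i \le \sqrt{\sum_i n_i^{\mathrm{post}}}\cdot\sqrt{\sum_i n_i^{\mathrm{post}}\Delta_i^2} = O(\sqrt{KT\log T})$ is the same mechanism as the paper's thresholding. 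What your route buys is transparency about where each piece of the regret comes from; what it costs is that it silently re-proves \Cref{cor:sto regret} in realization rather than in expectation, which requires concentration events not actually contained in $G_{sto}$ as defined: the first condition of $G_{sto}$ only controls $\hat{\mu}_i(S_{:n})$ for $n\le|\tilde{S}_i|$, so your block (c) needs an additional anytime concentration bound for the realized losses of arm $i$ over its (algorithm-dependent) post-elimination pulls. You flag this, and it is fixable by enlarging the good event (or by routing through the single global event of \Cref{lem:pseudo to regret} as the paper does), but as written it is a dependency on a lemma that does not exist. The constants are likewise unverified, though the paper's own constants here are not airtight either (its BSC threshold uses $10\sigma_{max}\log{K}$ while the lemma only certifies $30\sigma_{max}\log{K}$), so this is not a distinguishing defect of your argument.
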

\begin{proof}
From \Cref{cor:sto regret}:
\begin{align*}
    \E{\sum_{s\in S}[l_{a_s}(s) - l_{a^*}(s)]} &\le \sum_{i=1}^K\frac{72243\logp{T}}{\Delta_i} + 30\sigma_{max}(t) \Delta_{i}\\
    &\le \sum_{i\;\text{s.t}\; \Delta_i \le 269\sqrt{\frac{\logp{T}}{KT}}}269\sqrt{\frac{T\logp{T}}{K}} + \sum_{i\;\text{s.t}\; \Delta_i \ge 269\sqrt{\frac{\logp{T}}{KT}}}\frac{72243\logp{T}}{\Delta_i} + 30\sigma_{max}(t) \\
    &\le 269\sqrt{KT\logp{T}} + 30\sigma_{max}(t)\logp{K}
\end{align*}

From $G_{sto}$, for every state of $S$:
\begin{align*}
     \sum_{s\in S}[l_{a_s}(s) - \sucb{S}] \le \sum_{s\in S}[l_{a_s}(s) - l_{a^*}(s)] \le 272\sqrt{KT\logp{T}} + 30\sigma_{max}(t)\logp{K}
\end{align*}
\end{proof}

\begin{lemma} \label{lem:switch3}
Assume $G_{sto}$, for every arm $i$, $E_i(\bar{T}) \le 3\logp{T}$.
\end{lemma}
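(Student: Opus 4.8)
The plan is to derive the bound from the concentration estimate on $E_i(\bar T)$ that is already built into $G_{sto}$. By the fourth line of the definition of $G_{sto}$, on that event $E_i(\bar T)\le \E{E_i(\bar T)}+\sqrt{7}\log{T}$, so since $\sqrt7<3$ it suffices to show $\E{E_i(\bar T)}\le(3-\sqrt7)\log{T}$; I will in fact show $\E{E_i(\bar T)}$ is a tiny multiple of $\log{T}$, which in particular says the \texttt{EAP} error counter never reaches its threshold, i.e.\ the algorithm does not switch to \texttt{ALG} via \Cref{alg:third switch}. Writing $E_i(\bar T)=\sum_{r\ge 1}\indicator{\text{phase } r \text{ of arm } i \text{ ends with a phase error}}$ and using \Cref{lem:max phases} to cap the number of phases at $\ceil{7\log{T}}$, it is enough to bound, for each phase $r$, the probability that it ends with an error by a small absolute constant $q$, together with the observation that on $\neg G_{sto}$ we still have $E_i(\bar T)\le 3\log{T}$ by construction of the switch rule, while $\Pr[\neg G_{sto}]=O(1/T)$ by \Cref{cor:Gsto prob}.

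For the per-phase bound, fix arm $i$ and phase $r$, with sampling probability $p=p_i^r$ and cap $\Nmax{i}{r}=1280/(p\,\telta_i^2)$. A phase error occurring when $\abs{S_i^r}=k$ means $k\tmu_i-\overline{L}_i(S_i^r)\ge\tfrac14\telta_i\Nmax{i}{r}$ ($S_i^r$ in its length-$k$ state). Split $k\tmu_i-\overline{L}_i(S_i^r)=k(\tmu_i-\mu_i)+\sum_{s\in S_i^r}(\mu_i-\overline{l}_i(s))$. On $G_{sto}$ its first line at $n=\abs{\tilde S_i}$ gives $\tmu_i-\mu_i\le\width{i}{\tilde S_i}=\telta_i/8$, so combined with $k\le\Nmax{i}{r}$ the bias term is at most $\tfrac18\telta_i\Nmax{i}{r}$; hence a phase error forces $\sum_{s\in S_i^r}(\mu_i-\overline{l}_i(s))\ge\tfrac18\telta_i\Nmax{i}{r}=160/(p\telta_i)$ for some $k\le\Nmax{i}{r}$. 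The increments $\mu_i-\overline{l}_i(s)$, ranging over the rounds $s\ge\tau_i$ assigned to this phase (all of which satisfy $p_i(s)=p$, and membership in which is determined by feedback observed strictly before round $s$, hence predictable), form a martingale difference sequence with mean $0$, upper bound $R=\mu_i\le 1$, and per-step conditional second moment at most $\E{l_i(s)^2}/p\le\mu_i/p$, so over $k\le\Nmax{i}{r}$ steps the cumulative variance proxy is at most $\sigma^2=\Nmax{i}{r}\mu_i/p=1280\mu_i/(p^2\telta_i^2)$. Applying \Cref{lem:freedman} in its ``$\exists k$'' form with $t=160/(p\telta_i)$, this $\sigma^2$, and $R=1$, and using $\telta_i\le\Delta_i=\mu_i-\mu_{i^*}\le\mu_i$ from \Cref{lem:delta bound} (so that the $\tfrac23 Rt$ term is dominated by $2\sigma^2$), the exponent $t^2/(2\sigma^2+\tfrac23 Rt)$ is an absolute constant, easily at least $9$, so $q\le e^{-9}$.

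Combining the pieces, $\E{E_i(\bar T)}=\E{E_i(\bar T)\indicator{G_{sto}}}+\E{E_i(\bar T)\indicator{\neg G_{sto}}}\le\ceil{7\log{T}}\,e^{-9}+3\log{T}\cdot\Pr[\neg G_{sto}]\le(3-\sqrt7)\log{T}$ for $T$ beyond a small absolute constant (and for smaller $T$ the claim $E_i(\bar T)\le 3\log{T}$ follows directly from the switch rule). Feeding this into the $G_{sto}$ concentration bound gives $E_i(\bar T)\le\E{E_i(\bar T)}+\sqrt7\log{T}\le 3\log{T}$; the inequality is in fact strict, so under $G_{sto}$ the \texttt{EAP} switch is never taken. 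The main obstacle is the second paragraph: the martingale must be set up carefully because the rounds feeding a phase are observed — and drawn out of the probability bank — out of chronological order and several phases may reuse the same sampling probability, so one has to be precise about which filtration makes $\mu_i-\overline{l}_i(s)$ a martingale difference; and one must track the absolute constants tightly enough that $7q+\sqrt7\le 3$, which is exactly what the large constant $1280$ in the definition of $\Nmax{i}{1}$ is there to guarantee.
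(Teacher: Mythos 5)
Your proof is correct and follows essentially the same route as the paper's: bound the per-phase error probability by a small absolute constant via the bias term $\tmu_i - \mu_i \le \telta_i/8$ plus a Freedman-type concentration bound on the importance-sampling estimator, multiply by the $7\log{T}$ phase cap from \Cref{lem:max phases}, and absorb the result into the $\sqrt{7}\log{T}$ slack provided by $G_{sto}$. (The paper packages the concentration step as \Cref{lem:estimator bound} with a per-phase error probability of $1/32$; your claimed exponent of $9$ is slightly optimistic --- with your variance proxy $\sigma^2 = \Nmax{i}{r}\mu_i/p$ the exponent is only guaranteed to be at least $5/\mu_i \ge 5$ --- but $e^{-5}$ still comfortably gives $\E{E_i(\bar{T})} \le (3-\sqrt{7})\log{T}$, so the conclusion stands.)
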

\begin{proof}
Fix phase $r$. We will show that w.p $\frac{31}{32}$, 
\begin{align*}
    \abs{S_i^r}\tmu_i - \bar{L}_i^r &\le \frac{1}{4}\telta \Nmax{i}{r}
\end{align*}
If $\abs{S_i^r} \le \frac{1}{4}\telta_i \Nmax{i}{r}$, this is trivial. Otherwise, it means that:
\begin{align*}
    \abs{S_i^r} &\ge \frac{1}{4}\telta_i \Nmax{i}{r} \\
    &= \frac{1}{4}\telta_i\frac{p_i^1N_i^1}{p_i^r}\\
    &= \frac{1}{4}\telta_i\frac{1280}{p_i^r\telta_i^2}\\
    &= \frac{320}{p_i^r\telta_i} \\
    &\ge \frac{40}{p_i^r} \tag{$\telta_i \le 8$}
\end{align*}

Now we can use \Cref{lem:estimator bound}, w.p $\frac{31}{32}$, as the inequality $\abs{S_i^r} \ge \frac{\logp{32}}{p_i^r}$ is satisfied. We have: 
\begin{align}
    \E{L_i(S^r_i)} - \bar{L}_i(S_i^r) &\le 2\sqrt{\frac{5\Nmax{i}{r}}{p_i^r}} \notag \\
    &= 2\sqrt{\frac{5\Nmax{i}{r}^2\telta_i^2}{1280}} \notag\\
    &= \frac{1}{8}\Nmax{i}{r}\telta_i \label{eq:phase loss error}
\end{align}

Additionally, from $G_{sto}$:
\begin{align*}
    \tmu_i - \mu_i  &\le \width{i}{\tilde{S}_i} = \frac{1}{8}\telta \tag{$G_{sto}$} \\
    \abs{S_i^r}\tmu_i - \E{L_i^r} &\le \frac{1}{8}\telta \abs{S_i^r} \le \frac{1}{8}\telta \Nmax{i}{r} \\
    \abs{S_i^r}\tmu_i - \bar{L}_i^r &\le \frac{1}{4}\telta \Nmax{i}{r} \tag{\Cref{eq:phase loss error}}
\end{align*}
All of the above is true to all states throughout the phase (since \Cref{lem:estimator bound} is true for $\max_k$).

This means that in every phase \Cref{alg:error phase} happens w.p $\frac{1}{32}$. Thus:
\begin{align*}
    \E{E_i(\bar{T})} &= \frac{r_i(\bar{T})}{32} \\
    &\le \frac{7\logp{T}}{32} \tag{\Cref{lem:max phases}}
\end{align*}
From $G_{sto}$:
\begin{align*}
    E_i(\bar{T}) \le \E{E_i(\bar{T})} + \sqrt{7}\logp{T} \le 3\logp{T}
\end{align*}
\end{proof}

\begin{corollary} \label{cor:no switch}
Assume $G_{sto}$,  $T = \bar{T}$.
\end{corollary}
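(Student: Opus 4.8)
The plan is to observe that \Cref{alg:main} only ever hands control to \texttt{ALG} at one of three precise places, and to rule each of them out on the event $G_{sto}$. Inspecting \Cref{alg:main} together with Procedures~\ref{alg:eap} and~\ref{alg:bsc}, a switch occurs exactly when one of the following holds: (i) \texttt{BSC} returns \texttt{False} at \Cref{alg:first switch}, i.e. some currently active arm $i\in\mathcal A$ has $\omu{i}{S}\notin[\olcb{i}{S}-\owidth{S},\oucb{i}{S}+\owidth{S}]$ for the current processed sequence $S$; (ii) \texttt{BSC} returns \texttt{False} at \Cref{alg:second switch}, i.e. $\sum_{s\in S}\roundy{l_{a_s}(s)-\sucb{S}}$ exceeds $272\sqrt{KT\log{T}}+10\sigma_{max}(t)\log{K}$; or (iii) \texttt{EAP} signals an error at \Cref{alg:third switch}, i.e. the counter $E_i$ of some eliminated arm reaches $3\log{T}$. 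Hence the statement reduces to showing that, under $G_{sto}$, none of (i)--(iii) ever occurs along the run, which I would establish via \Cref{lem:switch1}, \Cref{lem:switch2} and \Cref{lem:switch3} respectively.

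For (i): as long as arm $i$ is still active, the current processed sequence $S$ is a prefix of $\tilde S_i$ (the processed set recorded at $i$'s elimination, or the whole processed history if $i$ is never eliminated --- recall the optimal arm is never eliminated by \Cref{lem:optimal not eliminated}), so $S\subseteq\tilde S_i$ and \Cref{lem:switch1} yields exactly $\omu{i}{S}\in[\olcb{i}{S}-\owidth{S},\oucb{i}{S}+\owidth{S}]$; eliminated arms are not in $\mathcal A$ and are never tested. For (ii): \Cref{lem:switch2} gives, on $G_{sto}$ and for every state of $S$ at time $t$, $\sum_{s\in S}\roundy{l_{a_s}(s)-\sucb{S}}\le 272\sqrt{KT\log{T}}+30\sigma_{max}(t)\log{K}$, so the comparison at \Cref{alg:second switch} fails; the only thing to verify when pinning down constants is that the coefficient of $\sigma_{max}(t)\log{K}$ in the \texttt{BSC} threshold is large enough to absorb the one produced by \Cref{lem:switch2}. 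For (iii): \Cref{lem:switch3} shows $E_i(\bar{T})\le 3\log{T}$, and in fact its proof gives a strict bound --- $\E{E_i(\bar{T})}\le \frac{7}{32}\log{T}$ plus a $G_{sto}$ concentration slack of $\sqrt{7}\log{T}$, with $\frac{7}{32}+\sqrt{7}<3$ --- so $E_i$ never attains the value $3\log{T}$ and \Cref{alg:third switch} never fires. With (i)--(iii) excluded, the run reaches round $T$ without switching, i.e. $\bar{T}=T$.

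The one subtlety I would be most careful about is a latent circularity: \Cref{lem:switch3} relies on \Cref{lem:max phases} (number of phases $\le 7\log{T}$), whose proof in turn uses \Cref{alg:third switch} to cap the number of error phases by $3\log{T}$ --- precisely the switch we are claiming never happens. I would close this with a first-deviation argument: let $t^\star$ be the earliest round at which any of (i)--(iii) holds; up to $t^\star$ no switch has occurred, so the realized trajectory coincides with the one to which \Cref{lem:switch1}, \Cref{lem:switch2}, \Cref{lem:switch3} (and through them \Cref{lem:max phases}) apply, and they contradict the assumption that a switch condition is met at $t^\star$. Equivalently, since each $E_i$ is nondecreasing and increases only through \Cref{alg:error phase}, one can prove by induction on $t$ that $E_i<3\log{T}$ at all times, which keeps \Cref{lem:max phases} in force throughout. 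Everything else --- the prefix claim for $\tilde S_i$, the constant bookkeeping in (ii), and the strictness needed in (iii) --- is routine.
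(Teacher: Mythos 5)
Your proof takes essentially the same route as the paper, whose entire argument is ``directly from Lemmas~\ref{lem:switch1}, \ref{lem:switch2}, \ref{lem:switch3}'' applied to the three switch points you identify; your extra care is warranted, though, since the circularity between \Cref{lem:switch3} and \Cref{lem:max phases} is left implicit in the paper, and the constant you flag in (ii) is a genuine mismatch (the threshold in \Cref{alg:second switch} uses $10\sigma_{max}(t)\log{K}$ while \Cref{lem:switch2} only guarantees $30\sigma_{max}(t)\log{K}$), which the paper glosses over.
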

\begin{proof}
Directly from \Cref{lem:switch1,lem:switch2,lem:switch3}
\end{proof}

\subsection{Conclusion}
\begin{theorem} \label{thm: sto regret}
For adversarial delays:
\begin{align*}
    \mathcal{R}_{sto} \le O\roundy{\sum_{i=1}^K \frac{\logp{T}}{\Delta_i} + \sigma_{max}\Delta_{avg}\logp{K}}
\end{align*}
For stochastic delays we can also say:
\begin{align*}
    \mathcal{R}_{sto} \le O\roundy{\sum_{i=1}^K \frac{\logp{T}}{\Delta_i} + \E{d}\Delta_{avg}\logp{K}}
\end{align*}
\end{theorem}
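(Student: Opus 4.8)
The plan is to assemble the theorem from the lemmas already established, via a standard good-event decomposition. First I would invoke \Cref{cor:Gsto prob} to get $\Pr[G_{sto}] \ge 1 - 9/T$, and split the regret $\mathcal{R}_{sto} = \bbE\!\left[\sum_{t=1}^T \Delta_{a_t}\right]$ according to whether $G_{sto}$ occurs: $\mathcal{R}_{sto} = \bbE[\indicator{G_{sto}}\sum_t \Delta_{a_t}] + \bbE[\indicator{G_{sto}^c}\sum_t \Delta_{a_t}]$. For the second term, since all gaps lie in $[0,1]$ we have $\sum_t \Delta_{a_t} \le T$ deterministically, so this contributes at most $T\cdot\Pr[G_{sto}^c] \le 9 = O(1)$, negligible against the target bound.

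For the first term I would use \Cref{cor:no switch}: on $G_{sto}$ the algorithm never switches to \texttt{ALG}, so $\bar T = T$, and hence \Cref{cor:sto regret} applies at $t=T$, giving on the event $G_{sto}$ the deterministic bound $\sum_i \Delta_i m_i(T) \le \sum_i \frac{72243\log{T}}{\Delta_i} + 420\,\sigma_{max}(T)\,\Delta_{avg}\log{K}$. Since $\sigma_{max}(T)\le \sigma_{max}$ and $\sum_t \Delta_{a_t} = \sum_i \Delta_i m_i(T)$, the first term is at most $\sum_i \frac{72243\log{T}}{\Delta_i} + 420\,\sigma_{max}\Delta_{avg}\log{K}$. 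Adding the two contributions yields the adversarial-delay bound $\mathcal{R}_{sto} = O\!\left(\sum_i \frac{\log{T}}{\Delta_i} + \sigma_{max}\Delta_{avg}\log{K}\right)$.

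For the stochastic-delay refinement I would additionally use that in that regime $G_{sto}$ contains the event $\sigma_{max}\le 2\bbE[d]+8\log{T}$ (\Cref{lem:missing to delay mean}). Substituting into the bound above on the event $G_{sto}$ turns the delay term into $420(2\bbE[d]+8\log{T})\Delta_{avg}\log{K} = O\!\left(\bbE[d]\Delta_{avg}\log{K} + \log{T}\log{K}\right)$; the leftover $\log{T}\log{K}$ term is dominated by $\sum_{i\neq i^*}\frac{\log{T}}{\Delta_i}\ge (K-1)\log{T}$ whenever there is at least one suboptimal arm and $K\ge 2$ (with no suboptimal arm the regret is $0$, and small $K$ is absorbed into constants), giving $\mathcal{R}_{sto}=O\!\left(\sum_i\frac{\log{T}}{\Delta_i}+\bbE[d]\Delta_{avg}\log{K}\right)$.

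I do not expect a real obstacle: the heavy lifting — that $G_{sto}$ is likely, that under it the algorithm does not switch, and that the pull counts up to and after elimination are controlled in terms of $\Delta_i$, $\sigma_{max}$ and $\Delta_{avg}$ — is already carried out in the cited lemmas and corollaries. The only points needing minor care are (i) the conditioning argument that converts the high-probability deterministic bound of \Cref{cor:sto regret} into a bound on the \emph{expected} regret, dispatching the complement event through the trivial $O(1)$ estimate, and (ii) in the stochastic-delay case, checking that the additive $\log{T}\log{K}$ term is subsumed, up to constants, by the leading $\sum_i \log{T}/\Delta_i$ term.
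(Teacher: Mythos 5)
Your proposal is correct and follows essentially the same route as the paper's (much terser) proof: condition on $G_{sto}$, invoke \Cref{cor:no switch} to get $\bar T = T$, apply \Cref{cor:sto regret}, and absorb the complement event via $T\cdot\Pr[G_{sto}^c]=O(1)$ using \Cref{cor:Gsto prob}, with \Cref{lem:missing to delay mean} supplying the stochastic-delay refinement. Your write-up is in fact more careful than the paper's on the two points you flag (converting the high-probability bound to an expectation, and absorbing the leftover $\log T\log K$ term).
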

\begin{proof}
Assume $G_{sto}$, from \Cref{cor:no switch,cor:sto regret}, The above is true with probability $1-\frac{2}{T}$.

From \Cref{cor:Gsto prob}, this is asymptotically true even without the assumption of $G_{sto}$.
\end{proof}

\section{Adversarial}

\begin{lemma} \label{lem:light tail mean}
Let $X$ be a random variable such that for every $x\ge 0$ there is some $a > 0$ such that:
\begin{align*}
    F_X(x) \ge 1-e^{-x/a}
\end{align*}
Then:
\begin{align*}
    \E{X} \le a
\end{align*}
\end{lemma}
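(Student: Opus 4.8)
The plan is to reduce the statement to the standard ``layer‑cake'' (tail‑integral) representation of the expectation. First I would rewrite the hypothesis: for a fixed $a>0$ with $F_X(x)\ge 1-e^{-x/a}$ for all $x\ge 0$, taking complements gives the tail bound $\Pr[X>x]\le e^{-x/a}$ for every $x\ge 0$. Note this already forces $\E{X^+}<\infty$, so $\E{X}$ is well defined in $[-\infty,a]$; if $\E{X^-}=\infty$ the claim is trivial, so we may assume $X$ is integrable.

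Next I would invoke the identity valid for any integrable real random variable,
\[
    \E{X}=\int_0^\infty \Pr[X>x]\,dx-\int_0^\infty \Pr[X\le -x]\,dx .
\]
The second integral is non‑negative, hence $\E{X}\le \int_0^\infty \Pr[X>x]\,dx$. Plugging in the tail bound and computing the elementary integral,
\[
    \E{X}\le \int_0^\infty \Pr[X>x]\,dx \le \int_0^\infty e^{-x/a}\,dx = a ,
\]
which is exactly the desired conclusion.

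There is essentially no hard step here; the only point that requires a moment's care is that $X$ is not assumed non‑negative, so one cannot replace the inequality $\E{X}\le\int_0^\infty\Pr[X>x]\,dx$ by an equality — it is precisely the (non‑positive) contribution of the negative part that turns the identity into the inequality we need. If in the intended application $X\ge 0$ (as for quantities like the number of missing observations), the same two lines go through with the negative‑part term simply absent.
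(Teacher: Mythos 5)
Your proof is correct and follows essentially the same route as the paper: both use the tail-integral (layer-cake) representation of $\E{X}$, discard the non-positive contribution of the negative part, and integrate the exponential tail bound to get $a$. If anything, your version is slightly cleaner, since you write the identity with the correct sign on the negative-part term (the paper's displayed formula has a ``$+$'' where a ``$-$'' belongs, though its subsequent inequality implicitly uses the correct version) and you explicitly address integrability.
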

\begin{proof}
We use the CDF representation of the expectation:
\begin{align*}
    \E{X} &=  \int_0^\infty(1-F(x))dx + \int_{-\infty}^0 F(x)dx \\
    &\le \int_0^\infty (1-F(x))dx\\
    &\le \int_0^\infty e^{-x/a}dx \\
    &= \squary{-ae^{-x/a}}^{\infty}_0\\
    &= a
\end{align*}
\end{proof}

\begin{lemma}[restatement of \Cref{lem:single phase error main}]
\label{lem:single phase error}
Let $\mathcal{H}_{i}^r$ be the history (i.e., chosen actions) of rounds that are observed by the begining of the $r$ phase of arm $i$. Denote $\bbE_i^r[\cdot] = \bbE[\cdot \mid \mathcal{H}_{i}^r]$ and $\Pr_i^r[\cdot] = \Pr[\cdot \mid \mathcal{H}_{i}^r]$.
For every arm $i$ and phase $r$ we have: 
\begin{align*}
    \bbE_i^r \squary{\abs{S_i^r}\sucb{\tilde{S}_i} - L_i(S_i^r)} \leq   \frac{3}{8}\telta_i \Nmax{i}{r} - \frac{9}{8}\telta_i \bbE_i^r [\abs{S_i^r}] 
\end{align*}
\end{lemma}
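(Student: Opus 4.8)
I would play the phase‑termination rule of \texttt{EAP} against the elimination rule, exploiting that the importance‑sampling sum $\bar L_i$ is an unbiased estimator of $L_i$. Write $N:=\Nmax{i}{r}$ and $p:=p_i^r$. Note that $\tmu_i$, $\telta_i$, $\sucb{\tilde S_i}$, $N$ and $p$ are all $\mathcal H_i^r$‑measurable, being fixed at the elimination of $i$ or at the start of phase $r$, both of which precede phase $r$ (if phase $r$ never starts the statement is trivial since $S_i^r=\emptyset$). The first, deterministic, ingredient is that a phase terminates in one of three ways: $\abs{S_i^r}$ reaches $\floor{N}$ (\Cref{alg:success phase}), the error test $\abs{S_i^r}\tmu_i-\bar L_i(S_i^r)\ge\tfrac14\telta_i N$ fires (\Cref{alg:error phase}), or the algorithm switches / the horizon is reached. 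In the first and third cases the error test never fired, so at the end $\abs{S_i^r}\tmu_i-\bar L_i(S_i^r)<\tfrac14\telta_i N$; in the second case it did not fire one step earlier, and appending the final processed round $s$ changes the left‑hand side by $\tmu_i-\bar l_i(s)\le\tmu_i\le 1$. Hence in all cases
\[
\bar L_i(S_i^r)\;\ge\;\abs{S_i^r}\tmu_i-\tfrac14\telta_i N-1 .
\]

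The second ingredient passes from the estimator $\bar L_i(S_i^r)$ to the true loss $L_i(S_i^r)$ in expectation. Listing the rounds processed in phase $r$ in processing order as $s_1,s_2,\dots$, the terms $X_k:=\bar l_i(s_k)-\ell_i(s_k)=\ell_i(s_k)\roundy{\indicator{a_{s_k}=i}/p-1}$ form a martingale difference sequence (conditional mean $0$ by unbiasedness, conditional magnitude $\le 1/p$) with respect to a filtration in which $s_k$ — and hence $\ell_i(s_k)$, the adversary being oblivious — is predictable, while $\abs{S_i^r}\le\floor{N}$ is a bounded stopping time. Optional stopping then yields $\bbE_i^r\!\squary{\bar L_i(S_i^r)-L_i(S_i^r)}=0$. (Alternatively one may invoke a maximal Freedman bound, $\bbE_i^r[\max_k|\bar L_i((S_i^r)_{:k})-L_i((S_i^r)_{:k})|]=O(\sqrt{N/p})=O(\telta_i N)$ with a small enough absolute constant, which would also suffice below; this is presumably the route hinted at in the overview by ``bounding the difference \dots in terms of $N_i^r$ and $\telta_i$''.)

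To conclude, I use the elimination test: since $i$ was eliminated with processed set $\tilde S_i$, we have $\tmu_i=\hat\mu_i(\tilde S_i)>\sucb{\tilde S_i}+9\width{i}{\tilde S_i}=\sucb{\tilde S_i}+\tfrac98\telta_i$. Taking $\bbE_i^r$ in the first display and applying the second ingredient,
\[
\bbE_i^r\squary{\abs{S_i^r}\sucb{\tilde S_i}-L_i(S_i^r)}\le\roundy{\sucb{\tilde S_i}-\tmu_i}\bbE_i^r[\abs{S_i^r}]+\tfrac14\telta_i N+1\le-\tfrac98\telta_i\,\bbE_i^r[\abs{S_i^r}]+\tfrac14\telta_i N+1 .
\]
Finally $N=\Nmax{i}{r}\ge\Nmax{i}{1}\ge 1280/\telta_i^2$ and $\telta_i\le 8$, so $\tfrac18\telta_i N\ge 160/\telta_i\ge 1$; absorbing the $+1$ into $\tfrac18\telta_i N$ turns $\tfrac14\telta_i N$ into $\tfrac38\telta_i N$, which is exactly the claimed bound.

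\textbf{Main obstacle.} Every step except the estimator‑to‑loss passage is bookkeeping. The delicate point is the filtration in that step: one must verify that whether — and when — a post‑elimination round $t$ with $p_i(t)=p$ is processed during phase $r$ (as opposed to being diverted to the probability bank $B_i^{p'}$, or to trigger the phase error that closes phase $r$) is a function only of the sampling probabilities $p_i(\cdot)$, the fixed delays, and the action coins of other rounds, and never of the realized values $\indicator{a_t=i}$ or $\ell_i(t)$ themselves. Establishing this predictability is what makes $s_k$ measurable before its coin $\indicator{a_{s_k}=i}$ is revealed, and hence legitimizes the optional‑stopping (or maximal‑Freedman) argument; once it is in place the rest is the arithmetic above.
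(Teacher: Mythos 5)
Your proof is correct and reaches the stated constants, but it diverges from the paper's argument in the one step that carries the technical weight. Both proofs share the same skeleton: (i) the phase-error test controls $\abs{S_i^r}\tmu_i - \bar L_i(S_i^r)$; (ii) pass from the estimator $\bar L_i$ to the true loss $L_i$ in expectation; (iii) the elimination inequality converts $\tmu_i$ into $\sucb{\tilde S_i} + \tfrac{9}{8}\telta_i$. For step (ii) the paper does not use optional stopping: it invokes the maximal Freedman bound of \Cref{lem:estimator bound} uniformly over prefixes (so the bound survives the random stopping point), obtains a sub-exponential tail for $\bar L_i(S_i^r) - L_i(S_i^r)$ at scale $\Nmax{i}{r}\telta_i/\sqrt{320}$, and integrates it via \Cref{lem:light tail mean} to get $\bbE_i^r[\bar L_i(S_i^r) - L_i(S_i^r)] \le \tfrac{1}{8}\telta_i\Nmax{i}{r}$; that $\tfrac18$ plus the $\tfrac14$ from the error test gives their $\tfrac38$. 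Your optional-stopping route gets exactly $0$ in step (ii) and instead spends the $\tfrac18\telta_i\Nmax{i}{r}$ budget on absorbing the $+1$ overshoot of the error test on its final element — a case the paper's write-up actually glosses over (it asserts $\abs{S_i^r}\tmu_i - \bar L_i(S_i^r) \le \tfrac14\telta_i\Nmax{i}{r}$ at phase end, which is reversed for erroneous phases), so your accounting is the more careful one there. The measurability concern you flag — that the identity of the $k$th processed round must not depend on its own coin $\indicator{a_{s_k}=i}$ — is genuine, but it is equally a prerequisite for the paper's martingale-based Freedman bound, so neither route escapes it; your exact unbiasedness buys a slightly cleaner bound at no extra cost, while the paper's concentration route is the one it reuses elsewhere (e.g., in \Cref{lem:switch3}).
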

\begin{proof}
Let $S$ be the sequence of $N_i^r$ observed rounds starting from the beginning of $r$ phase of arm $i$, so that the first $|S_i^r|$ rounds in $S$ are exactly $S_i^r$. Let also $X_1,...,X_{N_i^r} \overset{i.i.d}{\sim} \text{Bernoulli}(p_i^r)$.
From \Cref{lem:estimator bound}, if $N_i^r \ge \frac{m}{p_i^r}$, we have for every $m > 0$, conditioned on the history $\mathcal{H}_{i}^r$, with probability of at least $1 - e^{-m}$:
\begin{align*}
    \max_{k}\sum_{t\in S_{:k}} 
        \left[ \frac{X_{t}l_t}{p_{i}^r} -  l_{i}(t)\right ]
        &\le 2\sqrt{\frac{m N_i^r}{p_i^r}} \\
    &\le 2\sqrt{\frac{m\Nmax{i}{r}}{\frac{p_i^1\Nmax{i}{1}}{\Nmax{i}{r}}}} \\
    &= 2\sqrt{\frac{m\Nmax{i}{r}^2\telta_i^2}{1280}}\\
    &\le \frac{m\Nmax{i}{r}\telta_i}{\sqrt{320}}
\end{align*}
Now, note that for $k = |S_i^r|$, $\sum_{t\in S_{:k}} \big[\frac{X_{t}l_t}{p_i^{r}} - l_{i}(t)  \big ]$
distributes exactly like $  L_i(S_i^r) - \bar{L}_i(S_i^r)$. Thus, conditioned on the history $\mathcal{H}_{i}^r$, with probability of at least $1 - e^{-m}$:

\begin{align*}
    \bar{L}_i(S_i^r) - L_i(S_i^r)  &\le \abs{S_i^r}
        \leq 
            \frac{m\Nmax{i}{r}\telta_i}{\sqrt{320}}
\end{align*}

If $N_i^r \leq \frac{m}{p_i^r}$:
\begin{align*}
    \bar{L}_i(S_i^r) - L_i(S_i^r)  &\le N_i^r \\
    &\le \frac{m}{p_i^r}\\
    &= \frac{mN_i^r}{p_i^1N_i^1}\\
    &= \frac{m\Nmax{i}{r}\telta_i^2}{1280}\\
    &\le \frac{m\Nmax{i}{r}\telta_i}{160} \tag{$\telta_i \le 8$}\\
    &\le \frac{m\Nmax{i}{r}\telta_i}{\sqrt{320}}
\end{align*}

From \Cref{lem:light tail mean}:
\begin{align}
     \bbE_i^r\squary{\bar{L}_i(S_i^r) - L_i(S_i^r) } \le \frac{\Nmax{i}{r}\telta_i}{\sqrt{320}} \le \frac{1}{8}\Nmax{i}{r}\telta_i
    \label{eq: phase loss concentration}
\end{align}

Notice that if the phase was not finished it means that $\abs{S_i^r}\tmu_i - \bar{L}_i^{r} \le \frac{1}{4}\telta \Nmax{i}{r}$. 
Given that we have:

\begin{align*}
    \bbE_{i}^{r}\left[\abs{S_{i}^{r}}\sucb{\tilde{S}_{i}} - \bar L_{i}(S_{i}^{r})
        +\frac{9}{8}\telta_{i}|S_{i}^{r}|\right] 
            & =\bbE_{i}^{r} \left[\abs{S_{i}^{r}}\roundy{\sucb{\tilde{S}_{i}} + 9\width i{\tilde{S}_{i}}} - \bar L_{i}(S_{i}^{r})\right]
            \\
            & < \bbE_{i}^{r} \left[\abs{S_{i}^{r}}\tilde{\mu}_{i} - \bar L_{i}(S_{i}^{r}) \right]
                \tag{Elimination inequality}\\
            & \leq \frac{1}{4}\telta_{i}\Nmax ir
\end{align*}

Combining the above with \Cref{eq: phase loss concentration} completes the proof. 

\end{proof}

\begin{lemma} \label{lem:phases losses}
For every arm $i$ and phase $r$,
\begin{align*}
    \bbE_i^r\squary{\sum_{r'=r}^{r_i(T)}\squary{\abs{S_i^{r'}}\sucb{\tilde{S}_i} - L_i(S_i^{r'})}} \le \frac{3(r_i(T)-r-2)}{8}\telta_i\Nmax{i}{1} + \frac{3}{4}\telta_i \Nmax{i}{r}
\end{align*}
\end{lemma}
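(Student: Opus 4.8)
The plan is to prove this by downward induction on the phase index $r$ — equivalently, induction on the number of remaining phases $r_i(T)-r$, starting from the deterministic ceiling $r_i(T)\le 7\log T$ of \Cref{lem:max phases} — peeling off phase $r$ and invoking the inductive hypothesis for the tail $\sum_{r'\ge r+1}$ via the tower rule, which is legitimate because $\mathcal{H}_i^r\subseteq\mathcal{H}_i^{r+1}$. Write $A_{r'}:=\abs{S_i^{r'}}\sucb{\tilde S_i}-L_i(S_i^{r'})$. The single-phase input is \Cref{lem:single phase error}, $\bbE_i^{r'}[A_{r'}]\le\tfrac38\telta_i\Nmax{i}{r'}-\tfrac98\telta_i\bbE_i^{r'}\squary{\abs{S_i^{r'}}}$; instantiating it according to how phase $r'$ ends, a phase that finishes successfully (so $\abs{S_i^{r'}}=\lfloor\Nmax{i}{r'}\rfloor\ge\Nmax{i}{r'}-1$) contributes at most $-\tfrac34\telta_i\Nmax{i}{r'}+\tfrac98\telta_i$, while an error phase, or the terminal phase truncated by the horizon, contributes at most $\tfrac38\telta_i\Nmax{i}{r'}$.

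Two facts about the phase dynamics carry the induction. First, each of the three updates — after a success, halve $p$ and double $N$; after an error, double $p$ and halve $N$; or, when already at the clip, leave both fixed — preserves the product $p_i^{r'}\Nmax{i}{r'}=p_i^1\Nmax{i}{1}=1280/\telta_i^2$, and $\Nmax{i}{r'}\ge\Nmax{i}{1}\ge20$ since $p_i^1\le1$ and $\telta_i\le8$. Second, $\Nmax{i}{r'+1}=2\Nmax{i}{r'}$ after a successful phase, whereas $\Nmax{i}{r'+1}=\max\curly{\Nmax{i}{1},\tfrac12\Nmax{i}{r'}}$ after an error phase, and the latter always obeys $\Nmax{i}{1}\le\Nmax{i}{r'+1}\le\tfrac12\roundy{\Nmax{i}{1}+\Nmax{i}{r'}}$.

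Granting the claimed tail bound $\tfrac{3(r_i(T)-r-3)}8\telta_i\Nmax{i}{1}+\tfrac34\telta_i\Nmax{i}{r+1}$ for $\sum_{r'\ge r+1}A_{r'}$, the step is a short computation. If phase $r$ errored, $\tfrac34\telta_i\Nmax{i}{r+1}\le\tfrac38\telta_i\Nmax{i}{1}+\tfrac38\telta_i\Nmax{i}{r}$, so adding the per-phase $\tfrac38\telta_i\Nmax{i}{r}$ and raising the $\Nmax{i}{1}$-coefficient from $\tfrac{3(r_i(T)-r-3)}8$ to $\tfrac{3(r_i(T)-r-2)}8$ reproduces exactly the target for $r$. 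If phase $r$ was successful, $\tfrac34\telta_i\Nmax{i}{r+1}=\tfrac32\telta_i\Nmax{i}{r}$, which the per-phase $-\tfrac34\telta_i\Nmax{i}{r}$ trims back to $\tfrac34\telta_i\Nmax{i}{r}$, and the residual $\tfrac98\telta_i$ is swallowed by the $\tfrac38\telta_i\Nmax{i}{1}$ freed by the same coefficient increment (needs only $\Nmax{i}{1}\ge3$). The base case $r=r_i(T)$ is a single interrupted phase: if it completed successfully then $\abs{S_i^r}\ge\Nmax{i}{r}-1$; if it was error-terminated then $\abs{S_i^r}\ge\tfrac14\telta_i\Nmax{i}{r}$ because $\bar L_i(S_i^r)\ge0$ and the error condition force $\abs{S_i^r}\tmu_i\ge\tfrac14\telta_i\Nmax{i}{r}$; if it was horizon-truncated one uses that it never errored; and in each case \Cref{lem:single phase error} together with $\Nmax{i}{r}\ge\Nmax{i}{1}\ge20$ gives $\bbE_i^r[A_r]\le-\tfrac34\telta_i\Nmax{i}{1}+\tfrac34\telta_i\Nmax{i}{r}$.

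The main obstacle — and what dictates the shape of the statement — is the compensation bookkeeping hidden in the inductive step. A successful phase at level $N$ deposits $-\tfrac34\telta_i N$, but immediately afterwards $\Nmax{i}{}$ \emph{doubles}, so the run of error phases that follows starts at level $2N$ and thereafter only \emph{halves} (or sticks at $\Nmax{i}{1}$), forming a decreasing geometric run of total positive mass $\Theta(\telta_i N)$; it is precisely the factor-$2$ gap between the $\tfrac34$ coefficient attached to a success and the $\tfrac38$ attached to an error that lets each success pay off the geometric tail it spawns. What the success cannot absorb are error phases pinned to the floor $\Nmax{i}{1}$, of which there can be $\Theta(\log T)$; these, contributing $\Theta(\telta_i\Nmax{i}{1})$ apiece, are exactly what the $\tfrac{3(r_i(T)-r-2)}8\telta_i\Nmax{i}{1}$ term is sized to cover. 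Getting the constants, the $\lfloor\cdot\rfloor$ slack, and the clip to align simultaneously is the technical core.
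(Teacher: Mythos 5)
Your inductive step is essentially the paper's own argument, and it is correct --- indeed slightly tighter, since you track the $\lfloor\cdot\rfloor$ slack explicitly and you merge the two error subcases through $\Nmax{i}{r+1}=\max\curly{\Nmax{i}{1},\tfrac12\Nmax{i}{r}}\le\tfrac12\roundy{\Nmax{i}{1}+\Nmax{i}{r}}$, which the paper handles as two separate cases. The gap is in your base case. You claim that for the terminal phase $r=r_i(T)$ one always has $\bbE_i^r[A_r]\le-\tfrac34\telta_i\Nmax{i}{1}+\tfrac34\telta_i\Nmax{i}{r}$, but none of your three subarguments delivers this when $\Nmax{i}{r_i(T)}$ is close to $\Nmax{i}{1}$ (e.g.\ equal, in which case the target is $0$). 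A horizon-truncated phase can have $\bbE_i^r\squary{\abs{S_i^r}}$ arbitrarily small, so \Cref{lem:single phase error} only gives $\bbE_i^r[A_r]\le\tfrac38\telta_i\Nmax{i}{r}>0$. For an error-terminated phase your lower bound $\abs{S_i^r}\ge\tfrac14\telta_i\Nmax{i}{r}$ yields $\bbE_i^r[A_r]\le\roundy{\tfrac38-\tfrac{9}{32}\telta_i}\telta_i\Nmax{i}{r}$, which is nonpositive only when $\telta_i\ge 4/3$ --- not guaranteed, since $\telta_i=8\width{i}{\tilde S_i}$ can be much smaller. (Moreover, by the update rules a successfully completed phase immediately increments $r_i$ and opens a fresh empty phase, so the terminal index $r_i(T)$ always points at an in-progress phase; your ``completed successfully'' subcase never applies there.)

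The underlying issue is one of interpretation. Read literally, with the $r'=r_i(T)$ term being a genuinely partially-filled phase, the inequality can actually fail: an adversary that sets $l_i\equiv 0$ after elimination makes the last phase accrue $A_{r_i(T)}\approx\tfrac14\telta_i\Nmax{i}{r_i(T)}\sucb{\tilde S_i}>0$ just before the error condition would fire, while the right-hand side at $r=r_i(T)$ with $\Nmax{i}{r_i(T)}=\Nmax{i}{1}$ equals $0$. The paper sidesteps this by taking the base case to be vacuous: it declares $\sum_{r'=r_i(T)}^{r_i(T)}[\cdots]=0$ (``namely, after all phases''), i.e.\ the sum is by convention over completed phases only and the in-progress terminal phase is excluded. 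Under that convention your induction goes through, with the base case reduced to checking that the right-hand side is nonnegative (which holds since $\Nmax{i}{r}\ge\Nmax{i}{1}$); as written, your attempt to honestly bound the terminal phase is the step that fails.
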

\begin{proof}
We will prove using reverse induction on $r$.

For $r=r_i(T)$ (namely, after all phases) we have:
\begin{align*}
\bbE_i^r\squary{\sum_{r'=r_i(T)}^{r_i(T)}\squary{\abs{S_i^{r'}}\sucb{\tilde{S}_i} - L_i^{r'}}} &= 0\\
\frac{3(0-2)}{8}\telta_i \Nmax{i}{1} + \frac{3}{4}\telta_i \Nmax{i}{r} &\ge \frac{3(0-2)}{8}\telta_i \Nmax{i}{1} + \frac{3}{4}\telta_i \Nmax{i}{1} = 0
\end{align*}

Assume true for $r+1$, we prove for $r$. 

If \Cref{alg:success phase} was triggered it means that $\abs{S_i^r} = \Nmax{i}{r}$ and $\Nmax{i}{r+1} = 2\Nmax{i}{r}$. We have:
\begin{align*}
    \bbE_i^r\squary{\abs{S_i^{r}}\sucb{\tilde{S}_i} - L_i^r} &\le   \frac{3}{8}\telta_i \Nmax{i}{r} - \frac{9}{8}\telta_i \Nmax{i}{r}  
    \tag{\Cref{lem:single phase error}}
    \\
    \bbE_i^r\squary{\sum_{r'=r}^{r_i(T)}\squary{\abs{S_i^{r'}}\sucb{\tilde{S}_i} - L_i(S_i^{r'})}} &= \bbE_i^r\squary{\sum_{r'=r+1}^{r_i(T)}\squary{\abs{S_i^{r'}}\sucb{\tilde{S}_i} - L_i(S_i^{r'})}} + \bbE_i^r\squary{\abs{S_i^{r}}\sucb{\tilde{S}_i} - L_i^r} \\
    &\le \frac{3(r_i(T)-r-3)}{8}\telta_i\Nmax{i}{1} + \frac{3}{4}\telta_i \Nmax{i}{r+1} - \frac{3}{4}\telta_i\Nmax{i}{r}\\
    &\le \frac{3(r_i(T)-r-2)}{8}\telta_i\Nmax{i}{1} + \frac{3}{2}\telta_i \Nmax{i}{r} - \frac{3}{4}\telta_i \Nmax{i}{r} \\
    &= \frac{3(r_i(T)-r-2)}{8}\telta_i\Nmax{i}{1} + \frac{3}{4}\telta_i \Nmax{i}{r}
\end{align*}

If \Cref{alg:error phase} was triggered and $\Nmax{i}{r} \ne \Nmax{i}{1}$ it means that $\Nmax{i}{r+1} = \frac{1}{2}\Nmax{i}{r}$. We have:
\begin{align*}
    \bbE_i^r\squary{\abs{S_i^{r}}\sucb{\tilde{S}_i} - L_i^r} &\le   \frac{3}{8}\telta_i \Nmax{i}{r}
    \tag{\Cref{lem:single phase error}}
    \\
    \bbE_i^r\squary{\sum_{r'=r}^{r_i(T)}\squary{\abs{S_i^{r'}}\sucb{\tilde{S}_i} - L_i(S_i^{r'})}} &= \bbE_i^r\squary{\sum_{r'=r+1}^{r_i(T)}\squary{\abs{S_i^{r'}}\sucb{\tilde{S}_i} - L_i(S_i^{r'})}} + \bbE_i^r\squary{\abs{S_i^{r}}\sucb{\tilde{S}_i} - L_i^r} \\
    &\le \frac{3(r_i(T)-r-3)}{8}\telta_i\Nmax{i}{1} + \frac{3}{4}\telta_i \Nmax{i}{r+1} + \frac{3}{8}\telta_i\Nmax{i}{r} \\
    &\le \frac{3(r_i(T)-r-2)}{8}\telta_i\Nmax{i}{1} + \frac{3}{8}\telta_i \Nmax{i}{r} + \frac{3}{8}\telta_i\Nmax{i}{r} \\
    &= \frac{3(r_i(T)-r-2)}{8}\telta_i\Nmax{i}{1} + \frac{3}{4}\telta_i \Nmax{i}{r}
\end{align*}

If $\Nmax{i}{r} = \Nmax{i}{1}$ it means that $\Nmax{i}{r+1} = \Nmax{i}{r}$. We have:
\begin{align*}
    \bbE_i^r\squary{\abs{S_i^{r}}\sucb{\tilde{S}_i} - L_i^r} &\le   \frac{3}{8}\telta_i \Nmax{i}{1}
    \tag{\Cref{lem:single phase error}}
    \\
    \bbE_i^r\squary{\sum_{r'=r}^{r_i(T)}\squary{\abs{S_i^{r'}}\sucb{\tilde{S}_i} - L_i(S_i^{r'})}} &= \bbE_i^r\squary{\sum_{r'=r+1}^{r_i(T)}\squary{\abs{S_i^{r'}}\sucb{\tilde{S}_i} - L_i(S_i^{r'})}} + \bbE_i^r\squary{\abs{S_i^{r}}\sucb{\tilde{S}_i} - L_i^r} \\
    &\le \frac{3(r_i(T)-r-3)}{8}\telta_i\Nmax{i}{1} + \frac{3}{4}\telta_i \Nmax{i}{r+1} + \frac{3}{8}\telta_i\Nmax{i}{1} \\
    &=\frac{3(r_i(T)-r-2)}{8}\telta_i\Nmax{i}{1} + \frac{3}{4}\telta_i \Nmax{i}{r}
\end{align*}
\end{proof}

\begin{lemma} \label{lem:inactive loss bound}
For every arm $i$:
\begin{align*}
    \E{\sum_{t=\tau_i}^{\bar{T}}\squary{\sucb{\bar{S}} - l_i(t)}} \le 594\sqrt{KT\logp{T}}
\end{align*}
\end{lemma}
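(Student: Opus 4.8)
The plan is to follow the phase decomposition of the post-elimination regret sketched in the proof of \Cref{thm:sto regret main}, reduce the whole quantity to a single \emph{deterministic} estimate on $\telta_i\Nmax{i}{1}$, and obtain that estimate from the definition of $p_i^1$ via AM--GM. If arm $i$ is never eliminated the left-hand sum is empty, so assume $i$ is eliminated at time $\tau_i$ and let $\tilde{S}_i$ be the rounds processed by then. Decomposing the rounds after $\tau_i$ along the phases of arm $i$,
\begin{align*}
    \E{\sum_{t=\tau_i}^{\bar{T}}\squary{\sucb{\bar{S}} - l_i(t)}}
        &= \E{\sum_{r=1}^{r_i(T)}\roundy{\abs{S_i^r}\sucb{\bar{S}} - L_i(S_i^r)}} .
\end{align*}
Since $\tilde{S}_i$ is a prefix of $\bar{S}$ and every $\ucb{j}{\cdot},\oucb{j}{\cdot}$ is non-increasing in its argument (both are running minima), $\sucb{\bar{S}}\le\sucb{\tilde{S}_i}$, so the right-hand side is at most $\E{\sum_{r=1}^{r_i(T)}\roundy{\abs{S_i^r}\sucb{\tilde{S}_i} - L_i(S_i^r)}}$. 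I would then apply \Cref{lem:phases losses} with $r=1$ (conditioning on the history at the start of the first phase and using the tower rule) together with $r_i(T)\le 7\log{T}$ from \Cref{lem:max phases}, which bounds this by $\roundy{\frac{21}{8}\log{T}+\frac34}\E{\telta_i\Nmax{i}{1}}$. Since $\telta_i$, $\Nmax{i}{1}$ and $n_i(\tilde{S}_i)$ are all determined at the elimination time, it remains only to bound $\telta_i\Nmax{i}{1}$ deterministically.

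Because $\Nmax{i}{1}=1280/(p_i^1\telta_i^2)$, we have $\telta_i\Nmax{i}{1}=1280/(p_i^1\telta_i)$. The elimination rule of \Cref{alg:main} can fire on arm $i$ only if $\hat{\mu}_i(\tilde{S}_i)-9\width{i}{\tilde{S}_i}>\sucb{\tilde{S}_i}\ge 0$, hence $9\width{i}{\tilde{S}_i}<\hat{\mu}_i(\tilde{S}_i)\le 1$; this forces $n_i(\tilde{S}_i)>128\log{T}$, so $\telta_i=8\width{i}{\tilde{S}_i}=8\sqrt{2\log{T}/n_i(\tilde{S}_i)}$ \emph{exactly}. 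Substituting this and $p_i^1=\frac12\roundy{\frac1K+\frac{n_i(\tilde{S}_i)}{T}}$, and using AM--GM on the denominator ($\frac1K+\frac{n_i(\tilde{S}_i)}{T}\ge 2\sqrt{n_i(\tilde{S}_i)/(KT)}$),
\begin{align*}
    \telta_i\Nmax{i}{1}
        &=\frac{1280}{p_i^1\telta_i}
        =\frac{320}{\sqrt{2\log{T}}}\cdot\frac{\sqrt{n_i(\tilde{S}_i)}}{\tfrac1K+\tfrac{n_i(\tilde{S}_i)}{T}} \\
        &\le\frac{320}{\sqrt{2\log{T}}}\cdot\frac{\sqrt{KT}}{2}
        =\frac{160\sqrt{KT}}{\sqrt{2\log{T}}} .
\end{align*}
Plugging this into the previous bound gives $\roundy{\frac{21}{8}\log{T}+\frac34}\cdot\frac{160\sqrt{KT}}{\sqrt{2\log{T}}}=O(\sqrt{KT\log{T}})$, and tracking the constants yields the claimed $594\sqrt{KT\log{T}}$.

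The only genuinely non-routine step is this deterministic estimate of $\telta_i\Nmax{i}{1}$: the extra $\frac{n_i(\tilde{S}_i)}{2T}$ term in $p_i^1$ is exactly what upgrades the delay-term scaling from $K\sqrt{T}$ (what one gets from the crude bound $p_i^1\ge\frac1{2K}$, as in the original \texttt{SAPO}) to $\sqrt{KT}$, with the worst case $n_i(\tilde{S}_i)\approx T/K$ sitting precisely at the AM--GM balance point. The remaining ingredients — the phase decomposition, the monotonicity of $\sucb{\cdot}$, and \Cref{lem:phases losses,lem:max phases} — are bookkeeping; the one thing to watch is that all the relevant quantities are fixed at elimination, so that passing to expectations at the very end is immediate.
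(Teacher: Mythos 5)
Your proof is correct and takes essentially the same route as the paper's: decompose the post-elimination sum over phases, bound it via \Cref{lem:phases losses} with $r=1$ and \Cref{lem:max phases} by $O(\log T)\,\telta_i\Nmax{i}{1}$, use monotonicity of $\sucb{\cdot}$, and then lower-bound $p_i^1\telta_i$ by $\Omega(\sqrt{\log T/(KT)})$ using the two terms of $p_i^1$. The only cosmetic difference is that you balance $\frac1K+\frac{n_i(\tilde{S}_i)}{T}$ via AM--GM in one step where the paper splits into the two cases $n_i(\tilde{S}_i)/T\gtrless 1/K$ (your explicit justification that $\telta_i$ equals its square-root form at elimination is a nice touch the paper leaves implicit).
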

\begin{proof}
\Cref{lem:phases losses} with $r=1$ gives:
\begin{align*} 
    \bbE_i^1\squary{\sum_{r'=1}^{r_i(T)}\squary{\abs{S_i^{r'}}\sucb{\tilde{S}_i} - L_i(S_i^{r'})}} \le \frac{3(r_i(T) - 3)}{8}\telta_i\Nmax{i}{1} + \frac{3}{4}\telta_i\Nmax{i}{1} \le \frac{3r_i(T)}{8}\telta_i\Nmax{i}{1}
\end{align*}

From \Cref{lem:max phases}:
\begin{align} \label{eq:total phases losses}
    \bbE_i^1\squary{\sum_{r'=1}^{r_i(T)}\squary{\abs{S_i^{r'}}\sucb{\tilde{S}_i} - L_i(S_i^{r'})}} \le \frac{21\logp{T}}{8}\telta_i\Nmax{i}{1}
\end{align}

If $\frac{n_i\roundy{\tilde{S}_i}}{T} \ge \frac{1}{K}$:
\begin{align*}
    p_i^1\telta_i &\ge \frac{n_i\roundy{\tilde{S}_i}}{2T}8\sqrt{\frac{2\logp{T}}{n_i\roundy{\tilde{S}_i}}}\\
    &= \sqrt{\frac{32\logp{T}}{T}}\sqrt{\frac{n_i\roundy{\tilde{S}_i}}{T}}\\
    &\ge \sqrt{\frac{32\logp{T}}{KT}}
\end{align*}

If $\frac{n_i\roundy{\tilde{S}_i}}{T} \le \frac{1}{K}$:
\begin{align*}
    p_i^1\telta_i &\ge  \frac{8}{2K}\sqrt{\frac{2\logp{T}}{n_i\roundy{\tilde{S}_i}}} 
    \\
    &= \sqrt{\frac{32\logp{T}}{K}}\sqrt{\frac{1}{Kn_i\roundy{\tilde{S}_i}}}\\
    &\ge \sqrt{\frac{32\logp{T}}{KT}}
\end{align*}

In any case:
\begin{align} \label{eq:p telta lb}
    p_i^1\telta_i \ge \sqrt{\frac{32\logp{T}}{KT}}
\end{align}

Since for every $i$, $\tilde{S}_i$ is a sub-series of $\bar{S}$, we have $\sucb{\tilde{S}_i} \ge \sucb{\bar{S}}$. From \Cref{eq:total phases losses,eq:p telta lb}:
\begin{align*}
    \bbE_i^1\squary{\sum_{t=\tau_i}^{\bar{T}}\squary{\sucb{\bar{S}} - l_i(t)}} &\le \bbE_i^1\squary{\sum_{t=\tau_i}^{\bar{T}}\squary{\sucb{\tilde{S}_i} - l_i(t)}}\\
    &\le \frac{21\logp{T}}{8}\telta_i\Nmax{i}{1} \\
    &= \frac{3360\logp{T}}{p_i^1\telta_i} \\
    &\le 594\sqrt{KT\logp{T}}
\end{align*}
\end{proof}

\begin{theorem} \label{thm:adv regret}
\begin{align*}
    R_{adv} \le O\roundy{\sqrt{KT\logp{T}} + \logp{K}\sigma_{max} + R_{\texttt{ALG}}}
\end{align*}
\end{theorem}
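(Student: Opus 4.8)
The plan is to fix an arbitrary comparator arm $i\in[K]$ and bound the regret against it, splitting at the switch time $\bar T$ into the regret accrued before the switch and the regret $R_{\texttt{ALG}}$ accrued afterwards. For the pre-switch part I would use the three-term decomposition already set up in the adversarial proof sketch: term~\eqref{eq:first decomposition} compares the algorithm's realized loss to $\sucb{\bar S}$, term~\eqref{eq:second decomposition} compares $\sucb{\bar S}$ to arm $i$ up to its elimination time $\tau_i$, and term~\eqref{eq:third decomposition} compares $\sucb{\bar S}$ to arm $i$ on the post-elimination rounds. Each of these three pieces has essentially been handled by a lemma or a short argument already in the text, so the body of the proof is to assemble them.

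Concretely, term~\eqref{eq:first decomposition} is at most $272\sqrt{KT\log T}+10\sigma_{max}\log K$ deterministically, because if this quantity were exceeded then the second check of \texttt{BSC} (\Cref{alg:second switch}) would have fired and the algorithm would already have switched — so up to time $\bar T$ the inequality $\sum_{t\le \bar T}[l_{a_t}(t)-\sucb{\bar S}]\le 272\sqrt{KT\log T}+10\sigma_{max}\log K$ holds (using that $\text{ucb}^*$ is monotone non-increasing in $S$, so $\sucb{\bar S}\le \sucb{S(t)}$). Term~\eqref{eq:second decomposition} is bounded by $3\sqrt{2TK\log T}+\sigma_{max}$ via Wald's equation together with the first \texttt{BSC} check, exactly as in the sketch: on $\tilde S_i$ we get $\omu i{\tilde S_i}\ge \oucb i{\tilde S_i}-3\owidth{\tilde S_i}\ge\sucb{\tilde S_i}-3\owidth{\tilde S_i}\ge \sucb{\bar S}-3\owidth{\tilde S_i}$, multiply by $|\tilde S_i|$, take expectations, use $|\tilde S_i|\le T$, and then the remaining at most $\sigma_{max}$ rounds of $[\tau_i-1]\setminus\tilde S_i$ each contribute at most $\sucb{\bar S}\le 1$. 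Term~\eqref{eq:third decomposition} is exactly $\sum_{r=1}^{r_i(T)}\bbE_i^1[\,|S_i^r|\sucb{\tilde S_i}-L_i(S_i^r)\,]$ after decomposing the post-elimination rounds into phases, and \Cref{lem:inactive loss bound} gives $594\sqrt{KT\log T}$ for this (that lemma is where \Cref{lem:phases losses}, \Cref{lem:max phases}, and the lower bound $p_i^1\telta_i\ge\sqrt{32\log T/(KT)}$ do the real work). Summing the three bounds and adding $R_{\texttt{ALG}}$ for the post-switch rounds gives $R_{adv}\le O(\sqrt{KT\log T}+\sigma_{max}\log K+R_{\texttt{ALG}})$.

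The only subtlety worth spelling out is that the bound on term~\eqref{eq:first decomposition} is only guaranteed \emph{up to and including} the round $\bar T$ at which the check first fails: on that round the partial sum has just crossed the threshold, and since one round changes the sum by at most $1$ the overshoot is negligible, so the stated bound still holds with the same constants (or with an extra additive $1$). I would state this carefully rather than gloss it. A second point: terms~\eqref{eq:second decomposition} and~\eqref{eq:third decomposition} are bounds in expectation while~\eqref{eq:first decomposition} is deterministic given the sample path, so everything combines cleanly under a single outer expectation defining $R_{adv}$. I do not expect a genuine obstacle here — the theorem is a bookkeeping assembly of the three already-proved pieces — so the "hard part" is really just making sure the $\text{ucb}^*$ monotonicity is invoked consistently (it is what lets us replace $\sucb{S(t)}$ or $\sucb{\tilde S_i}$ by the common quantity $\sucb{\bar S}$ in all three terms) and that the $\sigma_{max}$ terms from the two sources (the $10\sigma_{max}\log K$ from \texttt{BSC} and the $+\sigma_{max}$ from the un-observed rounds before $\tau_i$) are absorbed into the single $\log K\,\sigma_{max}$ term in the statement.
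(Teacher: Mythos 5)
Your proposal is correct and follows essentially the same route as the paper: the same three-term decomposition around $\sucb{\bar S}$, with term~\eqref{eq:first decomposition} controlled by the second \texttt{BSC} check, term~\eqref{eq:second decomposition} by Wald's equation plus the first \texttt{BSC} check (with an extra $\sigma_{max}$ for the unobserved rounds before $\tau_i$), term~\eqref{eq:third decomposition} by \Cref{lem:inactive loss bound}, and $R_{\texttt{ALG}}$ for the post-switch rounds. The only cosmetic difference is that the paper also adds an explicit $+\sigma_{max}\log K$ when passing from the sum over the processed rounds $\bar S$ to the sum over all $t\le\bar T$ in term~\eqref{eq:first decomposition}, a bookkeeping point you flag but state slightly loosely; it is absorbed into the $O(\cdot)$ either way.
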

\begin{proof}
From \Cref{alg:second switch}:
\begin{align}
    \sum_{s\in \bar{S}}[l_{a_s}(s) - \text{ucb}^*(\bar{S})] \le 272\sqrt{KT\logp{T}} +10\sigma_{max}\logp{K} \notag \\
    \sum_{t=1}^{\bar{T}}[l_{a_t}(t)] - \bar{T}\text{ucb}^*(\bar{S}) \le 272\sqrt{KT\logp{T}} + 11\sigma_{max}\logp{K} \label{eq:regret ucb}
\end{align}

From Wald's equation and \Cref{alg:first switch}, for every arm $i$: 
\begin{align*}
     \E{\sum_{t\in\tilde{{S}_i}}l_i(t)} &= \E{\abs{\tilde{{S}_i}} \omu{i}{\tilde{S}_i}} \\
     &\ge \E{\abs{\tilde{{S}_i}} \olcb{i}{\tilde{S}_i} - \owidth{\tilde{S}_i}} \\
     &\ge \E{\abs{\tilde{{S}_i}} \roundy{\oucb{i}{\tilde{S}_i} - 3\owidth{\tilde{S}_i}}} \\
     &\ge \E{\abs{\tilde{{S}_i}} \roundy{\sucb{\tilde{S}_i} - 3\sqrt{\frac{2K\logp{T}}{\abs{\tilde{{S}_i}}}}}} \\
    &\ge \E{\abs{\tilde{{S}_i}} \sucb{\bar{S}} - 3\sqrt{2KT\logp{T}}} \\ 
\end{align*}
Adding the missing pulls we get:
\begin{align} \label{eq:active loss bound}
    \E{\abs{\tilde{{S}_i}} \sucb{\bar{S}} -\sum_{t=1}^{\tau_i-1}l_i(t)} \le 3\sqrt{2KT\logp{T}} + \sigma(\tau_i-1)
\end{align}

From \Cref{eq:active loss bound,eq:regret ucb,lem:inactive loss bound}, for every arm $i$:
\begin{align*}
    \E{{\sum_{t=1}^{\bar{T}}\squary{l_{a_t}(t) - l_i(t)}}} &=\E{\sum_{t=1}^{\bar{T}}[l_{a_t}(t)] - \bar{T}\text{ucb}^*(\bar{S})} \\
    &\quad+ \E{(\tau_i-1) \sucb{\bar{S}} - \sum_{t=1}^{\tau_i-1}l_i(t)} 
    \\&\quad+ \E{\sum_{t=\tau_i}^{\bar{T}}\squary{\sucb{\bar{S}} - l_i(t)}}\\
    &\le 869\sqrt{KT\logp{T}} + 12\sigma_{max}\logp{K}
\end{align*}
Since after $\bar{T}$ the algorithm switches to \texttt{ALG}, we have:
\begin{align*}
    \E{{\sum_{t=\bar{T}+1}^{T}\squary{l_{a_t}(t) - l_i(t)}}} \le R_{\texttt{ALG}}
\end{align*}
Which concludes the proof.
\end{proof}

\begin{lemma}[Restatement of \Cref{lem:sigma bound}]
\begin{align*}
    \sigma_{max} \le O\roundy{\min_{S\in [T]} \curly{\abs{S} + \sqrt{D_{\bar{S}}}}}
\end{align*}
\end{lemma}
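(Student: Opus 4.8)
The plan is to reduce the statement to a short counting argument carried out at the single time step that realises $\sigma_{max}$. First I would fix $t^\star \in [T]$ with $\sigma(t^\star) = \sigma_{max}$ and let $O^\star = \curly{\tau \le t^\star : \tau + d_\tau > t^\star}$ be the set of rounds whose feedback is still outstanding at time $t^\star$, so that by definition $\abs{O^\star} = \sigma(t^\star) = \sigma_{max}$. The whole argument then consists of showing, for an \emph{arbitrary} fixed $S \subseteq [T]$, that $\abs{O^\star} \le \abs{S} + O(\sqrt{D_{\bar S}})$, and taking the minimum over $S$ only at the very end (this is legitimate since the left-hand side $\sigma_{max}$ does not depend on $S$).

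The second step is to split $O^\star = (O^\star \cap S) \cup (O^\star \setminus S)$. The intersection contributes at most $\abs{O^\star \cap S} \le \abs{S}$. For the remaining rounds, write $O^\star \setminus S = \curly{\tau_1 < \tau_2 < \dots < \tau_m}$ with $m = \abs{O^\star \setminus S}$. Since these are $m$ distinct integers all at most $t^\star$, we have $\tau_j \le t^\star - (m-j)$, hence $t^\star - \tau_j \ge m - j$. On the other hand each $\tau_j \in O^\star$ satisfies $d_{\tau_j} > t^\star - \tau_j \ge m - j$. Summing these lower bounds and using $O^\star \setminus S \subseteq \bar S$,
\begin{align*}
    D_{\bar S} \;=\; \sum_{\tau \notin S} d_\tau \;\ge\; \sum_{j=1}^{m} d_{\tau_j} \;\ge\; \sum_{j=1}^{m} (m-j) \;=\; \frac{m(m-1)}{2},
\end{align*}
which forces $m \le 1 + \sqrt{2 D_{\bar S}} = O(\sqrt{D_{\bar S}})$.

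Combining the two parts gives $\sigma_{max} = \abs{O^\star} = \abs{O^\star \cap S} + \abs{O^\star \setminus S} \le \abs{S} + O(\sqrt{D_{\bar S}})$ for every $S \subseteq [T]$, and minimising the right-hand side over $S$ yields the claim. There is essentially no deep obstacle here; the only point that needs a little care is the arithmetic turning ``the outstanding rounds outside $S$ are distinct and lie far from $t^\star$'' into the quadratic lower bound $D_{\bar S} \ge m(m-1)/2$ — one must order the outstanding rounds and exploit their integrality, rather than, say, naively bounding each delay by $d_{max}$, which would only give the weaker $\sigma_{max} \le \abs{S} + D_{\bar S}/1$ type estimate.
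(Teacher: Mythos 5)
Your proof is correct and follows essentially the same route as the paper's: both isolate the outstanding rounds at the time step realising $\sigma_{max}$, discard those covered by $S$, order the remainder by recency so that the $j$-th one must have delay exceeding its rank, and deduce the quadratic lower bound $D_{\bar S} \gtrsim m^2$. The only cosmetic difference is that you decompose $O^\star$ directly for an arbitrary $S$ and minimise at the end, whereas the paper fixes the minimiser $S^*$ and does a case split on whether $\abs{S^*} \ge \tfrac{1}{2}\sigma_{max}$; the content is identical.
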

\begin{proof}
Let $S^*$ be the set that minimizes $\abs{S} + \sqrt{D_{\bar{S}}}$. If $\abs{S^*} \ge \frac{1}{2}\sigma_{max}$ it concludes the proof. Continuing with the case that $\abs{S^*} < \frac{1}{2}\sigma_{max}$.

Let $t$ be the step such that $\sigma(t) = \sigma_{max}$. Since $\abs{S^*} < \frac{1}{2}\sigma_{max}$, after skipping there are at least $\frac{1}{2}\sigma_{max}$ non-skipped missing steps at time $t$. Let $s_1,...,s_{\frac{1}{2}\sigma(t)} \in \bar{S^*}$ be the series of those $\frac{1}{2}\sigma(t)$ missing steps, ordered in descending order of when they were pulled. Namely, $s_1$ is the most recent pull in the series and $s_{\frac{1}{2}\sigma(t)}$ is the oldest pull. 

Since there are at least $i-1$ missing pulls that were pulled after $s_i$, we have $t-s_i \ge i$. Additionally, since $s_i$ is missing, we have $s_i + d_{s_i} > t$. Combining both we have $d_{s_i} > i$. Thus:
\begin{align*}
    D_{\bar{S^*}} \ge \sum_{i=1}^{\frac{1}{2}\sigma(t)}d_{s_i} > \sum_{i=1}^{\frac{1}{2}\sigma(t)} i \ge \frac{\roundy{\frac{1}{2}\sigma(t)}^2}{2} = \frac{1}{8}\sigma(t)^2
\end{align*}
\end{proof}

\section{Removing the $\logp K$ factor}
\label{appendix:log K}
We show that a simple modification of the algorithm can eliminate the $\logp K$ factor from the additive delay term in both the adversarial and stochastic settings (\Cref{thm: sto regret,thm:adv regret}). To avoid adding complexity to the already intricate algorithm, we present this modification separately as an optional, opt-in feature.

\begin{algorithm}[H]
\caption{Delayed SAPO Algorithm with reduced $\logp K$}
\label{alg:main logK}
\begin{algorithmic}[1]
\Require Number of arms $K$, number of rounds $T \geq K$, Algorithm \texttt{ALG}.
\State Initialize active arms $\mathcal{A} = \{1, \dots, K\}$, $S = \series{}$, $h=1$, $G = \emptyset$
\For{$t = 1, 2, \dots, T$}
    \For{$s\in B \setminus S$}
        \State $S = S + \series{s}$
        \If{not $\texttt{BSC}(S)$ (Procedure~\ref{alg:bsc})}
                \State Switch to \texttt{ALG}.
        \EndIf
        \State $U(t) = \{i \in \mathcal{A}: \hat{\mu}_{i}(S) - 9\width{i}{S} > \sucb{S}\}$ \Comment{Ghosting}
        \State $G = G \cup U$
        \For{$i \in U$} \Comment{Initialization for phases variables}
            \State Set $p_i^1 = \frac{1}{2K} + \frac{n_i(S)}{2T}$, $\tilde{S}_i = S$, $\tmu_i = \hat{\mu}_i(S)$, $\telta_i = 8\width{i}{S}$, $\Nmax{i}{1} :=  1280 / (p_i^1 \tilde{\Delta}_i^2)$, $E_i = 0$, $r_i=1$, $S_i^1 = \series{}$, $C_i^{p_i^1\cdot2^{-j}}=\emptyset\;\forall j\in[\logp{T}]$ 
        \EndFor
        \If{$\min_i \width{i}{S} \le 2^{-h}$}\Comment{Elimination point}
            \For{$i\in G$}
                \State $\tau_i = t$, $S^g_i = S \setminus \tilde{S}_i$
            \EndFor
            \State $\A = \A \setminus G$, $G = \emptyset$, $h = h+1$
            
        \EndIf
    \EndFor
    
    \For{$i \in \bar{\A}$}
        \State $p_i(t), err = \texttt{EAP}(i)$ (Procedure~\ref{alg:eap})
        \If{$err$}
            \State Switch to \texttt{ALG}. 
        \EndIf
    \EndFor
    \State $\forall i\in \A\quad p_i(t) = \left(1 - \sum_{j \in \bar{\A}(t)} p_j(t)\right)/|\mathcal{A}(t)|$
    \State \text{Observe feedback and update variables}
\EndFor
\end{algorithmic}
\end{algorithm}

\begin{algorithm} 
\floatname{algorithm}{Procedure}
\caption{Basic Stochastic Checks (BSC) Subroutine with reduced $\logp K$}
\label{alg:bsc logK}
\begin{algorithmic}[1]
\Require Series of processed pulls $S$
\If{\(\exists i \in \mathcal{A} : \omu{i}{S} \not\in [\olcb{i}{S} - \owidth{S}, \oucb{i}{S} + \owidth{S}]\)}\label{alg:first switch}
                \State \Return False 
        \EndIf
        \If{
            $\sum_{s'\in S}\roundy{l_{a_{s'}}(s') - \text{ucb}^*(S)}> C\roundy{\sqrt{KT\logp{T}} + \sigma_{max}(t)}$
        }\label{alg:second switch}
            \State \Return False 
        \EndIf
    \Return True
\end{algorithmic}
\end{algorithm}

The key change involves introducing elimination points, where the $h$th elimination point is when the confidence width of at least one arm falls below $2^{-h}$. When an arm is eliminated under the current algorithm, it enters a ghost period—a phase during which it remains practically active (receiving the same pull probability as active arms) and is then formally eliminated at the next elimination point. Additionally, we modify the threshold in \textit{BSC}'s \Cref{alg:second switch} by removing the $\logp K$ term from its additive component.

We first show that the leading term in the stochastic regret remains asymptotically unchanged, since the number of pulls during the ghost period is asymptotically smaller than the number of pulls during the active period (\Cref{lem:logK sto reg}). We then prove a variant of \Cref{lem:delta avg} without the $\logp K$ factor (\Cref{lem:delta avg no logK}), which is the original source of this term in the regret bound. With these changes, the updated version of \textit{BSC}'s \Cref{alg:second switch} (with an appropriate choice of the constant $C$) still doesn't triggers a switch in the stochastic settings.

The $\logp K$ factor is also removed from the adversarial regret, without affecting the asymptotic behavior. The main contribution to adversarial regret comes from the check in \textit{BSC}'s \Cref{alg:second switch}, where we removed the $\logp K$ term. We also need to verify that the relation between the losses and $\text{ucb}^*$ given in \Cref{eq:active loss bound} still holds. This is indeed the case, since the check in \textit{BSC}'s \Cref{alg:first switch} continues to be valid during the ghost period (as $i \in \A$ still holds).

\begin{lemma} \label{lem:logK sto reg}
Assume $G_{sto}$, we have:
\begin{align*}
    n(S_i^g) = O\roundy{n_i(\tilde{S}_i)}
\end{align*}
\end{lemma}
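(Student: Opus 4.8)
The plan is to pin down the confidence width of arm $i$ at two moments: the step $\tilde\tau_i$ at which $i$ first satisfies the elimination inequality and joins the ghost set $G$ (so the processed history is then $\tilde S_i$), and the later elimination point $\tau_i$ at which $i$ is removed from $\A$ and $S_i^g$ is frozen as $S\setminus\tilde S_i$ (so the processed history at that moment is $\tilde S_i\cup S_i^g$). Let $h_0$ be the value of the counter $h$ throughout $(\tilde\tau_i,\tau_i]$: it is constant there because $h$ changes only at elimination points and $\tau_i$ is the first one after $\tilde\tau_i$, and for the same reason $\A$ is constant on $(\tilde\tau_i,\tau_i]$. If $i$ joins $G$ and is removed within the same iteration of the feedback loop then $S_i^g=\emptyset$ and there is nothing to prove, so assume otherwise. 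Then the elimination-point check did not fire in the iteration in which $i$ joined $G$, i.e.\ $\min_{j\in\A}\width j{\tilde S_i}>2^{-h_0}$; and since elimination point $h_0-1$ has already been passed and $\min_{j\in\A}\width j S$ is non-increasing in $S$ while $\A$ is fixed, $\min_{j\in\A}\width j{\tilde S_i}\le 2^{-(h_0-1)}$.

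The structural fact I would rely on is that no elimination point is skipped. Processing one observed round increments exactly one count $n_j(S)$ by one, so — the widths being capped at $1$ — $\min_{j\in\A}\width j S$ decreases by at most a factor $\sqrt2$ per processed round (it moves only when the pulled arm is active, and then by a factor $\sqrt{n/(n+1)}\ge 1/\sqrt2$). Hence at the moment $i$ is removed the check fires, so $\min_{j\in\A}\width j{\tilde S_i\cup S_i^g}\le 2^{-h_0}$; but the preceding processed round left this quantity above $2^{-h_0}$, so in fact $\min_{j\in\A}\width j{\tilde S_i\cup S_i^g}>2^{-h_0}/\sqrt2>2^{-h_0-1}$.

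Given this, both estimates are short. Since $i\in\A$ when it is removed, $\width i{\tilde S_i\cup S_i^g}\ge\min_{j\in\A}\width j{\tilde S_i\cup S_i^g}>2^{-h_0-1}$. As $\width i{\tilde S_i\cup S_i^g}\le\width i{\tilde S_i}=\telta_i/8\le\Delta_i/8\le 1/8$ — using $\telta_i\le\Delta_i$ from \Cref{lem:delta bound} and $\Delta_i\le 1$ — the width is below the cap, so $n_i(\tilde S_i\cup S_i^g)<2\log T\cdot 2^{2h_0+2}=2^{2h_0+3}\log T$, and in particular $n_i(S_i^g)\le n_i(\tilde S_i\cup S_i^g)<2^{2h_0+3}\log T$. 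For the matching lower bound on $n_i(\tilde S_i)$, take $j^\star\in\A$ attaining $\min_{j\in\A}\width j{\tilde S_i}\le 2^{-(h_0-1)}$ and apply \Cref{lem:delta bound} to the pair $i,j^\star$ with $n=\abs{\tilde S_i}$ — the requirement $n\le\min\{\abs{\tilde S_i},\abs{\tilde S_{j^\star}}\}$ holds because $\tilde S_i$ is a prefix of every later processed history and, for the optimal arm, $\tilde S_{i^\star}$ is taken (as throughout the stochastic analysis) to be the full processed history. This gives $\width i{\tilde S_i}\le 10\,\width{j^\star}{\tilde S_i}\le 20\cdot 2^{-h_0}$, hence $n_i(\tilde S_i)\ge 2\log T\cdot 2^{2h_0}/400=2^{2h_0}\log T/200$. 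Combining, $n_i(S_i^g)<2^{2h_0+3}\log T=8\cdot 2^{2h_0}\log T\le 1600\, n_i(\tilde S_i)=O(n_i(\tilde S_i))$, as claimed.

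The one genuinely delicate point I expect is the ``no elimination point is skipped'' step, which is what ties $\width i{\tilde S_i\cup S_i^g}$ to within a constant factor of $2^{-h_0}$; together with this, one must check that \Cref{lem:delta bound} carries over verbatim to the ghosting variant of the algorithm, but since its proof uses only $G_{sto}$, the unchanged elimination inequality, and the unchanged definitions of $\tilde S_i,\tmu_i,\telta_i$, this should go through. Everything else is arithmetic in powers of $2^{h_0}$, with the constants absorbed into the $O(\cdot)$.
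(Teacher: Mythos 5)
Your proof is correct and follows essentially the same route as the paper: lower-bound $\width{i}{\cdot}$ by $\Theta(2^{-h_0})$ at the removal point (the ``no skipped elimination point'' observation), upper-bound it by $O(2^{-h_0})$ at an earlier point via \Cref{lem:delta bound} and the $2^{-(h_0-1)}$ threshold, and translate widths into counts. The one loose spot is the claim $\min_{j\in\A}\width{j}{\tilde S_i}\le 2^{-(h_0-1)}$ — the arm attaining the minimum at elimination point $h_0-1$ may itself be removed there, so the minimum over the shrunken $\A$ can jump up — but this is repaired exactly as in the paper's own proof by comparing $i$ directly to that arm at $S_{h_0-1}$ via \Cref{lem:delta bound} and then using monotonicity of $\width{i}{\cdot}$, which only changes constants.
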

\begin{proof}
Assume $i$ is eliminated in the $h$th elimination point and denote $S_h$ to be $S$ at that time. Let $i_h$ be the arm whose \textit{width} crossed $2^{-h}$ in the $h$th elimination point. From the definition of \textit{width}, we still have that its \textit{width} is greater then $\frac{1}{2}2^{-h}$. Thus:
\begin{align*}
    \width{i}{S_h} &\ge \width{i_h}{S_h} \ge \frac{1}{2}2^{-h}  \\
    \width{i}{S_{h-1}} &\le 10\width{i_{h-1}}{S_{h-1}} \le 10\cdot 2^{-h+1} = 20\cdot 2^{-h} \tag{\Cref{lem:delta bound}} \\
    40\sqrt{\frac{2\logp T}{n_i(S_h)}} &\ge \sqrt{\frac{2\logp T}{n_i(S_{h-1})}} \\
    1600n_i(\tilde{S}_i) &\ge 1600n_i(S_{h-1}) \ge n_i(S_h) \ge n_i(S_i^g)
\end{align*}

\end{proof}

\begin{lemma} \label{lem:elim point delta bound}
Assume $G_{sto}$, for every $i\in[d]$ that is eliminated at the $h$th elimination point we have:
\begin{align*}
    8\cdot 2^{-h} \le \Delta_i \le 320\cdot 2^{-h}
\end{align*}
\end{lemma}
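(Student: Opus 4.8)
The plan is to read both inequalities off the two-sided estimate $\telta_i \le \Delta_i \le 2\telta_i$ of \Cref{lem:delta bound} (which transfers to the modified algorithm with ``ghosted'' in place of ``evicted'', since its width-comparison step only uses that two arms share the same pull probability while both are in $\A$, something ghosting does not change), after locating $\telta_i = 8\,\width{i}{\tilde S_i}$ relative to the threshold $2^{-h}$. Work under $G_{sto}$ and fix an arm $i$ eliminated at the $h$th elimination point. Write $S_h$ and $S_{h-1}$ for the value of $S$ at the $h$th and $(h-1)$th elimination points, and let $i_h$ (resp.\ $i_{h-1}$) be the arm whose \texttt{width} dips below $2^{-h}$ (resp.\ $2^{-(h-1)}$) and triggers that point. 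Exactly as in the proof of \Cref{lem:logK sto reg}, since processing one observation raises a single arm's sample count by one we have $\tfrac12 2^{-h} \le \width{i_h}{S_h} \le 2^{-h}$. Because $i$ is eliminated precisely at the $h$th point it must have entered $G$ after the $(h-1)$th point, so $S_{h-1}\subsetneq\tilde S_i\subseteq S_h$; moreover $i$ remains in $\A$ throughout all rounds of $S_h$, as ghosting removes an arm from $\A$ only at the following elimination point.

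For the lower bound, \Cref{lem:delta bound} gives $\Delta_i \ge \telta_i = 8\,\width{i}{\tilde S_i}$, so it is enough to show $\width{i}{\tilde S_i}\ge 2^{-h}$. At the instant $i$ is ghosted the $h$th elimination point has not yet been processed, hence $\min_{j\in\A}\width{j}{\tilde S_i} > 2^{-h}$, and in particular $\width{i}{\tilde S_i} > 2^{-h}$, which yields $\Delta_i \ge 8\cdot 2^{-h}$. In passing this shows $8\cdot 2^{-h} < \Delta_i \le 1$, so no arm is eliminated before $h$ is large enough that $S_{h-1}$ exists; we may therefore assume $h\ge 2$ in the rest of the argument.

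For the upper bound, \Cref{lem:delta bound} gives $\Delta_i \le 2\telta_i = 16\,\width{i}{\tilde S_i}$. Since $S_{h-1}\subseteq\tilde S_i$ we have $n_i(S_{h-1})\le n_i(\tilde S_i)$ and hence $\width{i}{\tilde S_i}\le\width{i}{S_{h-1}}$. Now apply the width-comparison of \Cref{lem:delta bound} to $i$ and $i_{h-1}$ over $S_{h-1}$ — both lie in $\A$ throughout those rounds — to obtain $\width{i}{S_{h-1}}\le 10\,\width{i_{h-1}}{S_{h-1}} \le 10\cdot 2^{-(h-1)} = 20\cdot 2^{-h}$. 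Combining, $\Delta_i \le 16\cdot 20\cdot 2^{-h} = 320\cdot 2^{-h}$, as claimed.

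The \emph{main obstacle} is the corner case of the lower bound in which the ghosting of $i$ and the $h$th elimination point occur in the same processing iteration, so that $\tilde S_i = S_h$ and one cannot simply invoke ``$\min_{j\in\A}\width{j}{\tilde S_i} > 2^{-h}$''. There one splits on whether $i = i_h$: if $i\neq i_h$ the triggering observation belongs to $i_h\neq i$, so $i$'s count is unchanged and $\width{i}{\tilde S_i} = \width{i}{(S_h)_{:-1}} > 2^{-h}$; if $i = i_h$ then $n_i(S_h)\le 2^{2h+1}\log T + 1$, so $\width{i}{\tilde S_i} \ge (1-o(1))2^{-h}$ and $\Delta_i \ge 8\cdot 2^{-h}$ still holds for $T$ large enough. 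Everything else is routine bookkeeping of which processed-sets contain which, plus verifying that the statements of \Cref{lem:delta bound} carry over to the ghosting variant of the algorithm.
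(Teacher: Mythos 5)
Your proof is correct and follows essentially the same route as the paper's: sandwich $\tilde S_i$ between $S_{h-1}$ and $S_h$, bound $\width{i}{\tilde S_i}$ from below by the threshold $2^{-h}$ and from above via the width-comparison with the triggering arm $i_{h-1}$ from \Cref{lem:delta bound}, then convert widths to $\Delta_i$ using $\telta_i \le \Delta_i \le 2\telta_i$. You are in fact somewhat more careful than the paper, which simply asserts $\width{i}{S_h}\ge 2^{-h}$ without addressing the corner case where $i$ itself triggers the $h$th elimination point (your $(1-o(1))$ loss there is harmless for how the lemma is used).
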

\begin{proof}
Let $S_h$ be $S$ at the time of $h$th elimination point. From \Cref{lem:delta bound}:
\begin{align*}
    \width{i}{S_h} &\ge 2^{-h} \\
    \width{i}{S_{h-1}} &\le 10\cdot 2^{-h+1} = 20\cdot 2^{-h}
\end{align*}
Since $\abs{S_{h-1}} \le \abs{\tilde{S}_i} \le \abs{S_h}$:
\begin{align*}
    2^{-h} \le \width{i}{\tilde{S}_i} \le 20\cdot 2^{-h}
\end{align*}
Again from \Cref{lem:delta bound}:
\begin{align*}
    8\cdot 2^{-h} \le \Delta_i \le 320\cdot 2^{-h}
\end{align*}
\end{proof}

\begin{lemma} \label{lem:delta avg no logK}
Assume $G_{sto}$, we have:
\begin{align*}
    \sum_i p_i^{max} \Delta_i = O\roundy{\Delta_{avg}}
\end{align*}
\end{lemma}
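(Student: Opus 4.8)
The plan is to follow the structure of the proof of \Cref{lem:delta avg}, but to exploit a feature of the modified algorithm: it eliminates arms in batches at geometrically spaced confidence scales, so the batches eliminated ``late'' (when few arms are still active) consist of arms with exponentially small sub-optimality gaps, which more than compensates for the correspondingly large sampling probability $\Theta(1/\kappa)$. This is exactly what lets us trade the $\log K$ term for a constant.

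First I would group the eliminated arms by elimination point. For $h\ge 1$, let $G_h$ be the set of arms formally eliminated at the $h$th elimination point and let $\kappa_h$ be the number of active arms just after that point, so $\kappa_0=K$, $\kappa_h\ge 1$, and $|G_h|=\kappa_{h-1}-\kappa_h$. Between the $(h-1)$th and $h$th elimination points the active set has size exactly $\kappa_{h-1}$ (arms leave $\A$ only at elimination points), and every arm of $G_h$ is in $\A$ throughout that interval, so $p_i^{max}(\tau_i)\le 1/\kappa_{h-1}$ for $i\in G_h$. Exactly as in the proof of \Cref{lem:regret nonactive} --- now using \Cref{lem:logK sto reg} to absorb the ghost-period pulls when bounding $n_i(S)$ in $p_i^1=\tfrac{1}{2K}+\tfrac{n_i(S)}{2T}$ --- this extends to the post-elimination phases, giving $p_i^{max}=O(1/\kappa_{h-1})$ for $i\in G_h$. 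Finally, \Cref{lem:elim point delta bound} gives $8\cdot 2^{-h}\le\Delta_i\le 320\cdot 2^{-h}$ for every $i\in G_h$; in particular $\Delta_{avg}\ge\frac1K\sum_i\Delta_i\ge\frac8K\sum_h|G_h|2^{-h}$.

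Now let $h_0$ be the smallest $h$ with $\kappa_h\le K/2$ (this exists for $K\ge 2$; $K=1$ is trivial), and split the sum $\sum_i p_i^{max}\Delta_i=\sum_h\sum_{i\in G_h}p_i^{max}\Delta_i$ at $h_0$. For $h\le h_0$ we have $\kappa_{h-1}>K/2$, so
\begin{align*}
    \sum_{h\le h_0}\sum_{i\in G_h}p_i^{max}\Delta_i = O\!\roundy{\frac1K\sum_h|G_h|\,2^{-h}} = O(\Delta_{avg}),
\end{align*}
using the bound on $\Delta_{avg}$ above. For $h>h_0$, since $|G_h|<\kappa_{h-1}$ we get the trivial estimate $\sum_{i\in G_h}p_i^{max}=O(1)$, hence $\sum_{i\in G_h}p_i^{max}\Delta_i\le 320\cdot 2^{-h}\cdot O(1)$ and
\begin{align*}
    \sum_{h> h_0}\sum_{i\in G_h}p_i^{max}\Delta_i = O\!\roundy{\sum_{h>h_0}2^{-h}} = O(2^{-h_0}).
\end{align*}
It remains to see $2^{-h_0}=O(\Delta_{avg})$: by minimality of $h_0$, the arms eliminated at points $\le h_0$ number $K-\kappa_{h_0}\ge K/2$, and each has $\Delta_i\ge 8\cdot 2^{-h_0}$ by \Cref{lem:elim point delta bound}, so $\sum_i\Delta_i\ge 4K\cdot 2^{-h_0}$ and $2^{-h_0}\le\Delta_{avg}/4$. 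Adding the two contributions yields $\sum_i p_i^{max}\Delta_i=O(\Delta_{avg})$.

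The one genuinely new point over \Cref{lem:delta avg} is the second case: the naive term-by-term bound $\sum_{i\in G_h}p_i^{max}\Delta_i\lesssim\frac{|G_h|}{\kappa_{h-1}}2^{-h}$ is \emph{not} $O(\Delta_{avg})$ once $\kappa_{h-1}$ becomes small, and the fix is to discard the $1/\kappa_{h-1}$ factor entirely for the late batches and instead rely on the convergence of $\sum_h 2^{-h}$ together with the fact that a late first ``drop below $K/2$'' forces $2^{-h_0}$ to be small relative to $\Delta_{avg}$. I expect the main obstacle to be the bookkeeping in extending $p_i^{max}(\tau_i)\le 1/\kappa_{h-1}$ to the full $p_i^{max}$ over the post-elimination phases (handling the ghost-period pulls via \Cref{lem:logK sto reg}), and the minor edge cases in the definition of $h_0$ for small $K$; everything else is a rerun of arguments already in the paper.
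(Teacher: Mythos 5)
Your proof is correct and follows essentially the same route as the paper's: group arms by elimination point, use that the whole batch $I_h$ is simultaneously active so $\sum_{i\in I_h}p_i^{max}\le 1$, invoke \Cref{lem:elim point delta bound} to get a per-batch contribution of $O(2^{-h})$, sum the geometric series, and split at the point where half the arms have been eliminated. The only (minor, equally valid) divergence is how you relate $2^{-h_0}$ to $\Delta_{avg}$ --- you count the $\ge K/2$ already-eliminated arms each having $\Delta_i\ge 8\cdot 2^{-h_0}$, whereas the paper instead upper-bounds the gap of some arm in the surviving half via \Cref{lem:half of set} and \Cref{lem:deltas are monotone}.
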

\begin{proof}
In the same way as \Cref{lem:delta avg}, we denote $\A$' to be the state of $\A$ such that $\abs{A} = \frac{K}{2}$. In the same way we have that $\forall i\in\A'\quad \Delta_i \le 40\Delta_{avg}$ and:
\begin{align*}
    \sum_i p_i^{max}\Delta_i \le \frac{\Delta_{avg}}{2} + \sum_{i\in\A'}p_i^{max}\Delta_i
\end{align*}
Fix elimination point $h$, denote $I_h$ to be the set of arms eliminated at that point. By definition, we have for every $i\in I_h$ that $p_i^{max} \le \frac{1}{\abs{I_h}}$. From \Cref{lem:elim point delta bound}:
\begin{align*}
    \sum_{i\in I_h}p_i^{max}\Delta_i \le \sum_{i\in I_h}\frac{1}{\abs{I_h}}320\cdot 2^{-h} = 320\cdot 2^{-h} 
\end{align*}

Let $h_1$ be the first elimination point in which arms from $\A$' are eliminated. Again from \Cref{lem:elim point delta bound}, we have for some $i\in\A'$:
\begin{align*}
    8\cdot 2^{-h_1} &\le \Delta_i \le 40\Delta_{avg} \\
    2^{-h_1} &\le 5\Delta_{avg}
\end{align*}

This concludes to:
\begin{align*}
    \sum_{i\in\A'}p_i^{max}\Delta_i = \sum_{h=h_1}^\infty\sum_{i\in I_h}p_i^{max}\Delta_i \le \sum_{h=h_1}^\infty320\cdot 2^{-h}  = 640\cdot 2^{-h_1} \le 3200\Delta_{avg}
\end{align*}

\end{proof}
\end{document}